\documentclass{informs3}

\OneAndAHalfSpacedXI

\usepackage{xcolor}

\usepackage[colorlinks=true, allcolors=blue]{hyperref}
\usepackage{booktabs}

\usepackage{graphicx}
\usepackage{subcaption}
\usepackage{enumitem}

\usepackage{float}

\newcommand\E{\mathbb{E}}        \usepackage{xspace}

\usepackage{multirow}
\usepackage{natbib}
\usepackage{array}
\usepackage{colortbl}
\usepackage{arydshln}
 \bibpunct[, ]{(}{)}{,}{a}{}{,}%
 \def\bibfont{\small}%

\TheoremsNumberedThrough    
\ECRepeatTheorems

\EquationsNumberedThrough    
\MANUSCRIPTNO{}

\begin{document}

\RUNAUTHOR{Li et al.} 

\RUNTITLE{Robust Learning under Distributional Shifts}

\TITLE{Statistical Properties of Robust Learning under Distributional Shifts}

\ARTICLEAUTHORS{%
\AUTHOR{Zhiyi Li}
\AFF{National University of Singapore, \EMAIL{e1632488@u.nus.edu}} 
\AUTHOR{Xiaojie Mao}
\AFF{Tsinghua University, \EMAIL{maoxj@sem.tsinghua.edu.cn}}
\AUTHOR{Yunbei Xu}
\AFF{National University of Singapore, \EMAIL{yunbei@nus.edu.sg}}
\AUTHOR{Ruohan Zhan}
\AFF{University College London, \EMAIL{ruohan.zhan@ucl.ac.uk}}
} 

\ABSTRACT{%
Distributional shifts arise when the target deployment environment differs from the source environment that generated the training data. Robust learning frameworks such as Distributionally Robust Optimization (DRO) and Robust Satisficing (RS) aim to address this challenge, yet their finite-sample guarantees under such shifts, and their systematic comparison, remain underexplored: existing analyses typically establish guarantees either in the source environment or for adversarial worst-case performance over an ambiguity set.
This paper instead studies  generalization error in the target environment---the excess loss under the shifted target distribution. Our contributions are threefold. First, we derive finite-sample generalization error bounds in the shifted target environment for both DRO and RS. These bounds explicitly characterize the trade-off between reduced sensitivity to shift and the regularization penalty induced by each method's robustness hyperparameter, and they 
 avoid the curse of dimensionality associated with Wasserstein empirical concentration.
Second, when partial shift information such as shift magnitude or direction is available, we propose information-directed hyperparameter calibrations and compare the two methods given the same information. Under these calibrations, and in the partial-information regimes we study, DRO and RS exhibit complementary theoretical and empirical behavior.
Finally, we apply the  framework to a network lot-sizing problem, using it to interpret how robust policies respond to positive shifits in the demand distribution. 
Together, these results fill a gap in understanding the statistical properties of robust learning methods under distributional shifts and provide a principled basis for comparing DRO and RS.
}%

\KEYWORDS{Distributional Shifts; Distributionally Robust Optimization; Robust Satisficing; Finite-Sample Generalization Error Bounds; Hyperparameter Calibration}

\maketitle

\section{Introduction}
Modern machine-learning and data-driven decision systems are often trained in
one source environment and deployed in a different target environment. When the corresponding distributions differ, the resulting distribution shift may
take the form of covariate shift \citep{quinonero2022dataset}, label shift
\citep{garg2020unified}, or temporal and environmental change
\citep{sugiyama2012machine}. Standard learning methods that optimize
performance under the source distribution, such as Empirical Risk Minimization
(ERM), offers no protection against this mismatch. Distribution shift
can substantially degrade target performance \citep{koh2021wilds} in critical
domains, including medical imaging diagnostics
\citep{zech2018variable,yu2022external} and autonomous-driving perception 
\citep{sakaridis2021acdc,dong2023robustness}. More broadly, uncertainty about
the deployment distribution affects operational decisions in renewable-energy
systems \citep{li2023data,huang2023distributionally}, supply-chain and
disaster-relief planning
\citep{chen2023designing,deng2023distributionally,wang2023risk}, and
fairness-sensitive learning under changing group representation
\citep{hashimoto2018fairness,sagawa2020distributionally}.

Robust learning methods seek to protect decisions against such  distribution shifts. Distributionally Robust Optimization (DRO) stands out as a widely studied approach \citep[e.g.,][]{hu2013kullback,bayraksan2015data,long2023robust,gao2023finite,gao2023distributionally,esfahani2015data,blanchet2019robust,lee2018minimax,an2021generalization,wang2023equivalence,li2024statistical,lam2019recovering}. It minimizes the worst-case expected loss over distributions within a prescribed Wasserstein radius \(r\) of the empirical source  distribution.
Thus, \(r\) specifies the largest departure from the source distribution that DRO treats as plausible and protects against: a small \(r\) may leave the target distribution outside the set, whereas a large \(r\) guards against a broader range of shifts at the cost of greater conservatism.
A recently proposed method, Robust Satisficing (RS) \citep{long2023robust}, uses a different robustness semantic. 
Rather than specifying a set of plausible shifted distributions, RS fixes a reference expected-loss level $\tau$. 
It requires the expected loss under any distribution exceeding this baseline to grow at most proportionally with the distribution’s distance from a nominal distribution; the proportionality constant \(k_\tau\), called the fragility measure, captures the resulting sensitivity to distributional shift. Hence, \(\tau\) sets the baseline performance requirement: increasing \(\tau\) relaxes this requirement and can reduce fragility, while decreasing \(\tau\) demands a lower baseline loss but generally entails greater fragility.

To assess  robust learning methods, empirical studies often evaluate their performance in the target environment \citep{liu2024rethinking,koh2021wilds,gulrajani2021search}. For example, \citet{liu2024rethinking} benchmark DRO methods in target environments induced by diverse types of distribution shifts. 
In contrast,   established theory has focused less on target-environment guarantees. As summarized by \citet{blanchet2025distributionally}, existing DRO bounds control either no-shift generalization in the source environment or adversarial worst-case performance over an ambiguity set.
The resulting DRO generalization bounds are monotone in the Wasserstein radius \(r\), so enlarging the ambiguity set only worsens the bound, which obscures the practical robustness-conservatism trade-off induced by setting the radius $r$.
 For RS, \citet{li2024statistical} provide a recent target-environment analysis, but their bound suffers from the curse of dimensionality associated with Wasserstein empirical concentration and likewise cannot show the \(\tau\)-driven trade-off.

This work addresses the gap by developing finite-sample bounds for target-environment generalization guarantees under distributional shifts. The bounds avoid the curse of dimensionality associated with the Wasserstein empirical concentration. They also explicitly characterize how the hyperparameters of DRO and RS trade off robustness against conservatism. Specifically, we use target-environment generalization error, namely excess loss under the shifted distribution, as the performance metric and derive finite-sample upper bounds for both methods. These bounds show how each hyperparameter balances (i) reduced sensitivity to distribution shift, which improves target-environment performance, against (ii) the regularization cost of enforcing stronger robustness.
Moreover, because $r$ and $\tau$ control robustness in fundamentally different ways, our results reveal the distinct mechanisms through which DRO and RS handle distributional shifts.

As a benchmark for our robustness results,  we also analyze the standard ERM  method. We derive an upper bound on its  generalization error in the target environment, and complement this with a matching  example showing that the linear dependence on the source-target shift magnitude cannot be improved. 
Against this baseline, we show that robust methods do better: for DRO and RS, the shift-induced terms in the target-environment generalization error admit more favorable characterizations than their ERM counterparts, through two distinct mechanisms detailed in Section~\ref{subsec contributions}.
These results differ in scope from \citet{esfahani2015data,blanchet2019robust,gao2023finite}, which  study the no-shift setting; our bounds instead are explicitly shift-aware and reveal the potential  advantage of robust methods in the target environment. 
This distinction highlights both the novelty of our results and the technical challenges underlying the analysis.

We next compare DRO and RS under distributional shifts. A fair comparison is challenging because the two paradigms encode robustness through different hyperparameter semantics: the DRO radius \(r\) represents distributional coverage, whereas the RS threshold \(\tau\) represents a loss tolerance. Moreover, when no shift information is available beyond the training sample, reliable learning under arbitrary unknown shifts is generally impossible~\citep{sutter2021robust}. 
We therefore introduce partial shift information, such as its magnitude or direction, and ask how the two  methods exploit this information. 
 Specifically, we construct a shift-information-directed  calibration that maps the same input into the DRO radius \(r\) and the RS threshold \(\tau\). This differs from the optimization-based hyperparameter correspondence of \citet{wang2023equivalence}, which tunes the two hyperparameters so that the resulting optimal solutions coincide. Our results show that the  methods play complementary roles: DRO is favored when the shift magnitude is known but its direction is not, whereas RS can be favored when the direction is known and the magnitude is sufficiently under-specified.
 Finally, we instantiate the framework in a network lot-sizing problem under positive demand shifts, a setting in which the shift direction is known but its magnitude may be misspecified. Robust methods then substitute preventive upfront inventory for corrective second-stage transshipment and emergency ordering, and the preferred method depends on the cost structure and the degree of magnitude misspecification.

The remainder of the paper is organized as follows. 
Section \ref{sec:related} provides the literature review.
Section \ref{sec:preliminary} introduces the problem setup. 
Section \ref{sec: theoretical bounds} presents the main generalization bounds.
Section \ref{sec: partial} develops a shift-informed calibration for DRO and RS hyperparameters and establishes theoretical comparisons under partial shift information, supported by simulations.
Section \ref{sec:lot_sizing} applies the framework to a network lot-sizing problem. 
Section \ref{sec:conclusion} concludes.

\subsection{Our Methods and Contributions} \label{subsec contributions}
In this paper, we reinterpret DRO and RS in the context of distributional shifts. 
Our analysis evaluates optimizers trained on the source distribution by their excess loss  under the target distribution, which we refer to as the \emph{generalization error}.
This perspective differs from the conventional evaluation,
which measures excess loss under the source distribution itself and thus says little about  performance once the distribution shifts
\citep{esfahani2015data,blanchet2019robust,gao2023finite}. 
It also differs from adversarial-distribution formulations \citep{lee2018minimax,an2021generalization}, in which performance is measured by the worst-case loss over an ambiguity set rather than by the loss under a specific target distribution. 
Focusing on this target-based criterion, we derive generalization error bounds for DRO and RS, providing a theoretical basis for understanding their robustness under distributional shifts.

Specifically, for DRO we establish the following bound that quantifies the trade-off in choosing the radius hyperparameter $r$: 
    \[
    \mathcal{R}_{\text{DRO}} \leq {\underbrace{L \cdot  \inf_{P
    \in \mathcal{B}(P_{S},r)}d_W(P,P_{T})}_{\text{Sensitivity to shift $C_1(r)$}}} + \underbrace{C_2(r)}_{\text{Regularization penalty}}+C_{n,DRO}.
    \]
Here \(\mathcal{R}_{\mathrm{DRO}}\) is the  generalization error of the DRO optimizer in the target environment, with \(P_S\) and \(P_T\) being the source and target distributions. The set \(\mathcal{B}(P_S,r)\) is the type-I Wasserstein ball of radius \(r\) around \(P_S\), $d_W(\cdot,\cdot)$ is the type-I Wasserstein distance, and \(L\) is the Lipschitz constant in Assumption~\ref{f-lipschitz}.
 This bound decomposes into three components: (i) sensitivity to shift $C_1(r)$, which decreases with $r$ and reflects how well the ambiguity set covers the target distribution $P_T$, (ii) regularization penalty  $C_2(r)$, which grows with $r$ and 
reflects the extra cost incurred when enforcing robustness, and (iii) a residual term $C_{n,DRO}$, which is independent of \(r\) and contains statistical estimation error and the source-target optimal-value gap (defined in Section~\ref{sec: theoretical bounds}). 
The bound highlights the central trade-off in choosing $r$: larger $r$ reduces $C_1(r)$ by improving coverage for the target distribution but increases the penalty from $C_2(r)$, while smaller $r$ reduces $C_2(r)$ but risks excluding the target distribution.

For RS we establish a parallel generalization error bound that characterizes the trade-off in choosing the reference value parameter $\tau$:
\[
    \mathcal{R}_{\text{RS}} \leq \underbrace{k_\tau \cdot d_W(P_{S}, P_{T})}_{\text{Sensitivity to shift $C_3(\tau)$}} + \underbrace{C_4(\tau)}_{\text{Regularization penalty}}+C_{n,RS}.
    \]
Here  $\mathcal{R}_{\mathrm{RS}}$ is the  generalization error of the RS optimizer in the target environment, and $k_{\tau}$ is the fragility measure introduced by \citet{long2023robust} and is part of the optimized RS solution. The RS bound similarly decomposes into three components: (i) sensitivity to shift $C_3(\tau)$, which decreases with $\tau$ as the coefficient $k_\tau$ becomes smaller; (ii) regularization penalty $C_4(\tau)$, which grows with $\tau$ and captures the satisficing cost imposed by the RS formulation; and (iii) a residual term $C_{n,RS}$, which has the same order as the corresponding term in the DRO bound and is independent of $\tau$. The trade-off is explicit: larger $\tau$ reduces the shift sensitivity but increases the regularization penalty, while smaller $\tau$ keeps the regularization penalty low but leaves the model more sensitive to distributional shifts. We also note that the statistical error component in $C_{n,RS}$ scales as $n^{-1/2}$ under a bounded diameter assumption and avoids the Wasserstein concentration rate that causes the curse of dimensionality in prior work \citep{li2024statistical}.

To better understand how DRO and RS achieve robustness against distributional shifts, we present the bound for ERM below as a baseline:
\[
\mathcal{R}_{\mathrm{ERM}}
\;\le\;
\underbrace{L \cdot d_W(P_{S}, P_{T})}_{\text{Sensitivity to shift}}
\;+\;
C_{n,ERM},
\]
where \(\mathcal{R}_{\mathrm{ERM}}\) denotes the target-environment generalization error of the ERM solution trained on a source sample from \(P_S\) and evaluated under \(P_T\).
Relative to this baseline, both robust methods reduce the sensitivity-to-shift component of the upper bound. Specifically, DRO attenuates the shift distance by replacing it with the minimum distance between the ambiguity set and the target distribution, while RS reduces the multiplicative factor on the distance from $L$ to $k_\tau$ (which is smaller than $L$ by Lemma \ref{k leq L}). However, this robustness comes at a cost, with DRO incurring a regularization penalty $C_2(r)$ that increases with $r$ and RS incurring a satisficing regularization cost $C_4(\tau)$ governed by the threshold $\tau$. Together, these clarify the distinct trade-off mechanisms by which the two methods achieve robustness under distributional shifts.

The above results motivate a closer comparison of DRO and RS under distributional shifts. However, when no shift information is available, it is challenging to calibrate either method's hyperparameter in a principled way, rendering a direct head-to-head comparison uninformative. We therefore focus on settings with partial shift information (e.g., shift magnitude or shift direction). This information is mapped into each method's native hyperparameter semantics: the DRO radius \(r\) represents distributional coverage, whereas the RS threshold \(\tau\) specifies a loss tolerance. The resulting information-directed hyperparameter calibration provides a common informational input for comparing the two methods' generalization bounds. We consider two scenarios: (i) the shift magnitude is known but the direction is unknown, and (ii) the shift direction is known (through a distribution family) but the magnitude is unknown.
The first scenario favors DRO because the radius \(r\) can directly encode the known shift magnitude. In the second scenario, RS can yield a tighter  bound when the shift magnitude is sufficiently under-specified.  These analyses are further validated through simulations on a two-product newsvendor problem.

We finally apply our analysis to  network lot-sizing, a representative operations problem. 
We consider upward demand shifts, in which target demand is on average higher than source demand. The shift direction is assumed to be known because shortages are costly, whereas its magnitude is uncertain and may be misspecified.
 We find that the relative performance of DRO and RS depends on both the misspecification of the shift magnitude and the ratio of initial to  emergency ordering costs. When initial ordering is relatively cheap, RS attains a lower total cost than DRO when the shift magnitude is largely under-specified. As the initial-ordering cost increases, this advantage narrows  and eventually reverses once the magnitude is   over-specified. 
To trace the source of these differences, we decompose total cost into initial-ordering and operational components.  We then compare the shift-information-calibrated hyperparameter pair with the optimization-based correspondence curve,  showing how the DRO radius and RS reference value drive the two methods' differing sensitivities to distributional shifts.

\section{Related Work}

\label{sec:related}
Our work relates to a broad stream of research in robust optimization \citep{hu2013kullback,bayraksan2015data,long2023robust,gao2023finite,gao2023distributionally,esfahani2015data,blanchet2019robust,lam2019recovering,lee2018minimax,an2021generalization,wang2023equivalence,li2024statistical}. 
We highlight three particularly relevant lines of work: statistical guarantees for DRO, Robust Satisficing models, and the connection between DRO and RS.

\subsection{Statistical Guarantees for Distributionally Robust Optimization}

A substantial body of work has examined the statistical properties of Distributionally Robust Optimization (DRO). Early studies constructed ambiguity sets using moment conditions, such as requiring distributions to match low-order moments \citep{delage2010distributionally,goh2010distributionally}, though these approaches lacked formal asymptotic consistency guarantees \citep{shafieezadeh2019regularization}. Subsequent research shifted to ambiguity sets defined through statistical distances. For instance, \citet{duchi2021statistics} showed that DRO with $f$-divergence admits an asymptotic expansion as the empirical loss plus a variance regularization term. Relatedly, \citet{lam2019recovering} studied empirical divergence-based DRO and showed how suitably calibrated divergence balls can recover classical asymptotic statistical guarantees. The Wasserstein metric, however, has become the central tool in statistical analysis on DRO. 
\citet{esfahani2015data} provided a comprehensive treatment of Wasserstein DRO, proving that setting the radius $r$ on the order of $n^{-\frac{1}{\max\{2,d\}}}$ yields finite-sample guarantees and asymptotic consistency, where $d$ is the dimension of the random vector over which the Wasserstein distance is defined. Since the rate deteriorates to $n^{-1/d}$ in dimensions $d>2$, these guarantees inherit the curse of dimensionality from the Wasserstein concentration bounds between empirical and true distributions \citep{fournier2015rate}.

To address this limitation, \citet{shafieezadeh2019regularization} introduced structure into the model class and showed that for structured settings such as linear models,  a radius of $r = O\left(\frac{1}{\sqrt{n}}\right)$  is sufficient for finite-sample guarantees. From a more nonparametric perspective, \citet{blanchet2019robust} applied an  empirical likelihood reformulation of DRO. Their framework relaxed the requirement that the ambiguity set contains the true distribution $P_S$, requiring instead only a distribution $P$ that produces the same optimal parameter as $P_S$. This relaxation enabled asymptotic confidence intervals with order $O\left(\frac{1}{\sqrt{n}}\right)$.
More recently, \citet{gao2023finite} introduced localized Rademacher complexity to obtain dimension-free finite-sample guarantees. 

Despite these advances, a critical limitation remains. Most existing generalization bounds require the DRO radius $r_n$ to vanish as the sample size $n \to \infty$. This vanishing-radius regime effectively positions DRO as a tool for mitigating overfitting, rather than a methodology for handling distribution shifts of non-negligible magnitude. Recent exceptions begin to consider non-vanishing radii; for example, \citet{azizian2023exact} developed dual concentration bounds that allow non-vanishing radii and provide robustness to shifts without dimensional dependence. 
A different line of work, including \citet{lee2018minimax} and \citet{an2021generalization}, studies DRO with non-vanishing radii under an adversarial evaluation framework.  Our proof in Section~\ref{sec: theoretical bounds} builds on some techniques from \citet{lee2018minimax}. However, those results control the adversarial worst-case loss over an ambiguity set, rather than characterizing the trade-off with respect to a given target distribution. As a result, they do not directly characterize the target-environment performance of source-trained optimizers.

Overall, prior DRO guarantees either focus on vanishing radii for controlling sampling error or study non-vanishing radii through adversarial worst-case loss. These results do not directly characterize target-environment performance for a given distribution shift of interest. Meanwhile, their results do not capture the trade-off that a larger $r$ may better cover the target distribution but also induce greater conservatism.

\subsection{Robust Satisficing Framework}\label{sec: 2.2}

Robust Satisficing (RS) was first proposed by \citet{long2023robust} as a framework to achieve robustness without the excessive conservatism of minimax approaches, by adopting a satisficing principle. 
The concept has since gained traction \citep[e.g.,][]{ramachandra2021robust,li2024statistical}.
Existing research on RS has primarily focused on optimization tractability and applications. \citet{long2023robust} and \citet{ramachandra2021robust} analyzed the dual formulation of RS in risk-based linear optimization, linear optimization with recourse, and conic optimization. RS has also been integrated into  applications across operations research and reinforcement learning, 
including portfolio optimization \citep{long2023robust}, robust Bayesian satisficing models \citep{saday2023robust}, and satisficing Markov decision processes \citep{ruan2023robust}.

Statistical analysis of RS remains underexplored.
Recently, \citet{li2024statistical} provided the first statistical guarantees for RS, analyzing the performance of optimizers trained on source data when evaluated in shifted target environments. Their results derived a generalization bound in which the generalization error scales linearly with the Wasserstein distance between the source and target distributions.
However, this guarantee has two critical limitations. First, the bounds remain subject to the curse of dimensionality, scaling polynomially with data dimension and limiting applicability in high-dimensional settings. Second, similar to existing results for DRO, the analysis does not capture the  trade-off induced by the parameter $\tau$. Increasing $\tau$ monotonically loosens their resulting bound without providing certifiable robustness gains. Our work  addresses both of these challenges.

\subsection{Connection between DRO and RS}\label{sec:hyperp corres}

\citet{wang2023equivalence} present  early research on connecting DRO and RS, where they establish an optimization-based hyperparameter correspondence between these two robust methods. Specifically, for Wasserstein DRO and RS models built on a convex objective with uncertain quantities, they show that the two formulations share the same solution family: one can choose the DRO hyperparameter \(r\) and the RS hyperparameter \(\tau\) so that the two models yield the same optimal solution. The authors further illustrate this correspondence by plotting the mapping between the two hyperparameters. This correspondence is derived in a \emph{post-hoc} manner: given the sample, one varies the RS threshold \(\tau\), solves the RS optimization problem, and then recovers the corresponding DRO radius \(r_\tau\) that leads to the same solution.  The correspondence holds at the optimization level and does not reflect the distinct mechanisms of these two methods in robustness control.

In contrast, we study the connection between DRO and RS through the lens of distributional shift. In our framework, partial shift information is used to calibrate the hyperparameters before solving either robust problem. Given the same shift information, we propose model-specific ways to choose the DRO radius \(r\) and the RS reference value \(\tau\) according to their native semantics. Our analysis therefore yields an \emph{a priori} shift-information-directed hyperparameter calibration, rather than a post-hoc mapping obtained by solving the optimization problem of the other model. This calibration emphasizes a learning-theoretic perspective and clarifies how incorporating shift information into hyperparameter selection affects the target-environment trade-off bounds of both methods.

\section{Preliminaries}\label{sec:preliminary}

In this section, we set up the problem by first introducing the key evaluation metric under distributional shifts. We then present the mathematical formulations of the two robust learning methods that serve as our primary objects of analysis: Distributionally Robust Optimization (DRO) and Robust Satisficing (RS), alongside the baseline method of Empirical Risk Minimization (ERM). We conclude this section by formally stating our goal.

\subsection{Generalization Error under Distributional Shifts}\label{sec:generalization definition}

Let $x \in \mathcal{X}$ be the decision variable, where $\mathcal{X}$ is a decision constraint set.
Samples are represented by a $d$-dimensional random variable $z \in \mathcal{Z}$, where $\mathcal{Z} \subseteq \mathbb{R}^d$ denotes the instance space. 
Let
$\mathcal P(\mathcal Z)$ denote the set of all  probability distributions on the instance set $\mathcal Z$.
We use the function $f(x,z)$ to quantify the cost or loss associated with the decision $x$ at the sample value $z$. 
The performance of a decision $x$ under a target distribution $P_T$ is evaluated through the expected target loss $\mathbb{E}_{P_T}[f(x,z)]$.

We focus on the distribution shift setting. Given $n$ i.i.d.~samples $\{z_i\}_{i=1}^n$ drawn from a source distribution $P_S$, 
an algorithm learns an estimated decision $\hat{x} \in \mathcal{X}$.
Then $\hat{x}$ is evaluated by its generalization error, defined as the excess loss under the target distribution:
\begin{equation}
    \mathcal{R}_{P_T}(\hat{x}) := \mathbb{E}_{P_T}[f(\hat{x}, z)] - J_T,
    \label{eq:transfer_error}
\end{equation}
where $J_T := \inf_{x \in \mathcal{X}} \mathbb{E}_{P_T}[f(x,z)]$ is the minimal expected loss under the   target distribution. We similarly introduce $ J_S := \inf_{x \in \mathcal{X}} \mathbb{E}_{P_S}[f(x,z)]$ as the minimal expected loss under the   source distribution and $x_S \in \argmin_{x \in \mathcal{X}} \mathbb{E}_{P_S}[f(x,z)]$ as a corresponding minimizer. The main problem is to learn a decision $\hat x$ from the data generated by the source distribution $P_S$ to achieve a low generalization error $\mathcal{R}_{P_T}(\hat{x})$ on the target distribution $P_T$.

\subsection{Data-Driven Robust Methods}\label{sec:data-driven}
We first present the baseline method for statistical learning and then introduce the two data-driven robust methods that serve as the focus of our analysis throughout the paper.

\subsubsection{Baseline: Empirical Risk Minimization (ERM).}
The standard approach in statistical learning is Empirical Risk Minimization (ERM), defined as
\begin{equation}
    \min_{x \in \mathcal{X}}
    \;\mathbb{E}_{\hat{P}_n}[f(x,z)],
    \label{ERM}  
\end{equation}
where the empirical distribution $\hat{P}_n = \frac{1}{n}\sum_{i=1}^n \delta_{z_i}$ is constructed from the sample set $\{z_i\}_{i=1}^n$ drawn i.i.d. from the source distribution $P_S$, and $\delta_{z}$ denotes the Dirac point mass at $z$.
ERM achieves asymptotic consistency when the sample size $n$ is large  and no distribution shift is present. However, it is prone to overfitting in small-sample regimes and fragile under distribution shifts ($P_S \neq P_T$), motivating the need for robust alternatives \citep{esfahani2015data,long2023robust}. In what follows, we use ERM as the baseline for both theoretical and empirical comparisons with the robust methods introduced below.

\subsubsection{Distributionally Robust Optimization (DRO).}

The most widely studied robust method is Distributionally Robust Optimization (DRO). 
We focus on the formulation based on the type-I Wasserstein distance: for two distributions $Q_1,Q_2 \in \mathcal P(\mathcal Z)$, let
$\Pi(Q_1,Q_2)$ denote the set of couplings of $Q_1$ and $Q_2$, i.e., the set of joint distributions whose marginals agree with $Q_1, Q_2$. The type-I
Wasserstein distance induced by the norm $\|\cdot\|$ on $\mathcal Z$ is defined as
\[
d_W(Q_1,Q_2)
:=
\inf_{\pi\in\Pi(Q_1,Q_2)}
\int_{\mathcal Z\times\mathcal Z}
\|z-z'\|\,\pi(dz,dz').
\]
Unlike ERM, which optimizes only over the empirical distribution $\hat{P}_n$, DRO minimizes the worst-case expected loss over an ambiguity set centered at a nominal distribution $Q$, typically taken to be a Wasserstein ball
$\mathcal{B}(Q,r) := \{ P \in \mathcal{P}(\mathcal{Z}) : d_W(P, Q) \le r \}$. Following convention, we take the empirical distribution $\hat{P}_n$ as the nominal distribution and set $Q = \hat{P}_n$, so that the ambiguity set is $\mathcal{B}(\hat{P}_n, r)$. The decision is then obtained by solving:
\begin{align}
&\min_{x \in \mathcal{X}}\;\max_{P \in \mathcal{B}(\hat{P}_n, r)} \;\mathbb{E}_P[f(x,z)].\label{eq:DRO}
\end{align}
The hyperparameter $r$, referred to as the ``radius'', defines the size of this ambiguity set and is central to controlling robustness:  larger $r$ values expand the set of candidate distributions and provide stronger robustness, while smaller $r$ values reduce conservatism but risk excluding the true target distribution from the ambiguity set.

The performance of DRO is typically evaluated by two types of performance metrics (see Section~2.2.3 of \citet{blanchet2025distributionally} for a review). One is the  generalization error in the source environment 
\citep{esfahani2015data,shafieezadeh2019regularization,gao2023finite},
\begin{align}\label{eq:no_shift}
    \mathbb{E}_{P_S}\!\big[f(\hat{x}, z)\big]- J_S,
\end{align}
which assumes {\it no distributional shift} (i.e., $P_T=P_S$) and therefore focus primarily on overfitting to finite-sample data within a single environment. Another is the following  performance metric from an adversarial perspective \citep{lee2018minimax,an2021generalization}:
\begin{align}\label{eq:adversarial}
\sup_{P \in \mathcal{B}(P_S,r)} \mathbb{E}_{P}\!\big[f(\hat{x}, z)\big]
    \;-\;
    \inf_{x \in \mathcal{X}} \sup_{P \in \mathcal{B}(P_S,r)} \mathbb{E}_{P}\!\big[f(x,z)\big].
\end{align}

In contrast, this paper focuses on the target-environment generalization error  defined in Eq.~\eqref{eq:transfer_error}, which measures the suboptimality of a decision learned in the source environment when it is deployed in a potentially different target environment.
This makes it the  more appropriate criterion for studying robustness under distributional shifts. Our theory in Section~\ref{sec: theoretical bounds}  bounds this generalization error for  DRO  and characterizes the trade-off induced by the hyperparameter $r$. 

\subsubsection{Robust Satisficing (RS).}
Another increasingly popular robust method is the Robust Satisficing (RS) model, recently proposed by \citet{long2023robust}. 
RS has gained increasing attention for its ability to avoid the excessive conservatism of minimax formulations by adopting a satisficing principle. The model is defined as
\begin{align}
k_{\tau}=&\min \; k \nonumber \\  \text { s.t. } & \mathbb{E}_{P}[f(x,z)]-\tau \leq k d_W(P,\hat{P}_n), \quad \forall P \in\mathcal{P}(\mathcal{Z}), \label{eq:rs} \\ & x \in \mathcal{X}, \quad k \geq 0. \nonumber
\end{align}
Unlike DRO, RS does not restrict attention to a local ambiguity set and instead imposes a global performance constraint over $\mathcal{P}(\mathcal{Z})$. As pointed out by \citet{long2023robust}, the optimal value $k_\tau$, measuring  the model's \emph{fragility},  is the worst-case excess above the threshold $\tau$ normalized by the distributional distance from $\hat P_n$. A smaller $k_\tau$ (or fragility) means that threshold violations grow more slowly as the distribution moves away from the nominal distribution, and therefore indicates greater robustness. Increasing $\tau$ relaxes the constraint and can reduce the fragility measure $k_\tau$.

The RS's hyperparameter to control robustness is the reference value $\tau$, which can be interpreted as an anticipated cost in domain-specific applications. RS enforces robustness by adopting a satisficing strategy: deviations of the expected loss beyond $\tau$ scale proportionally with the distributional distance from the empirical distribution $\hat{P}_n$.
Choosing $P = \hat{P}_n$ in \eqref{eq:rs} shows that any feasible decision $x$ must satisfy $\mathbb{E}_{\hat{P}_n}[f(x,z)] \leq \tau$.  Therefore, RS feasibility requires $\inf_{x\in\mathcal X}\mathbb{E}_{\hat{P}_n}[f(x,z)] \le \tau$. 
Motivated by this lower bound, we consider
\begin{equation}
\label{eq:tau_reparametrization}
    \tau_\epsilon = \inf_{x\in \mathcal{X}}\mathbb{E}_{\hat{P}_n}[f(x,z)]+\epsilon,
\end{equation}
where $\epsilon \geq 0$ serves as a tolerance parameter that quantifies the allowable excess empirical loss.
Our theory in Section~\ref{sec: theoretical bounds}  shows that 
increasing $\epsilon$ (hence $\tau$) lowers the shift-related fragility but enlarges the satisficing regularization penalty, whereas decreasing $\epsilon$ (hence $\tau$) keeps the  penalty small at the cost of making the RS model more vulnerable to distributional shifts.

\subsection{Goal} 

We aim to characterize generalization error bounds for DRO and RS under distributional shifts and to reveal their distinct robustness mechanisms. These bounds elucidate  the trade-off in setting each method's  robustness hyperparameter: reducing sensitivity to shift can improve target-environment performance, but it also introduces a regularization or satisficing cost. This analysis  forms the theoretical foundation of our paper and is detailed in Section~\ref{sec: theoretical bounds}.
Building on this trade-off structure, Section~\ref{sec: partial} compares the two  methods under partial shift information. We ask how the robustness hyperparameters of DRO and RS can be calibrated from the same partial  information, and how such model-specific calibration  shapes target-environment performance. We consider  two  regimes: known shift magnitude with unknown direction, and known shift direction with unknown magnitude. The results are further consolidated by numerical simulations on representative operations problems.

\section{Generalization Error Bounds under Distributional Shift}\label{sec: theoretical bounds}

This section presents our main results. We characterize the generalization error bounds for the optimizers produced by the two robust methods under distributional shifts, and benchmark these against the corresponding bound for ERM. We begin by outlining the regularity conditions commonly used in the literature \citep{esfahani2015data,lee2018minimax}.

\begin{assumption}[Regularity]\label{assump:regularity}
We assume:
\begin{enumerate}[label=(\alph*),ref=\theassumption(\alph*),itemsep=0pt, topsep=0pt, parsep=0pt, partopsep=0pt]
  \item \textbf{Bounded $\mathcal{Z}$.}
  The instance space $\mathcal{Z}$ is bounded:
  $\operatorname{diam}(\mathcal{Z})=\sup_{z,z'\in \mathcal{Z}} ||z-z'|| < \infty$.
  \label{bounded z}

  \item \textbf{Bounded functions.}
  The loss function  $f(x,z)$ is lower semicontinuous in $x \in \mathcal{X}$ and is uniformly bounded, i.e.,
  $0 \le f(x,z) \le M < \infty,\ \forall x \in \mathcal{X},\ z \in \mathcal{Z}$.
  \label{bounded f}

  \item \textbf{Lipschitz continuity of loss.}
  The loss function is Lipschitz in $z$, uniformly over $x \in \mathcal{X}$:
  \[
    \sup_{z \neq z'}\frac{|\,f(x,z)-f(x,z')\,|}{\|z-z'\|}\le L,\ \ \forall x \in \mathcal{X}.
  \]
  \label{f-lipschitz}
\end{enumerate}
\end{assumption}
\vspace{-0.7cm}

Assumption~\ref{bounded z} is  imposed mainly to facilitate the analysis and may  plausibly be relaxed to sub-Gaussian random variables.
Indeed, our numerical experiments do not  enforce it,  using Gaussian distributions with unbounded support, which suggests that the  applicability of the robust methods extends beyond this assumption.
Assumption \ref{f-lipschitz} is essential for the dual reformulation of the type-I Wasserstein distance (see \cite{fournier2015rate} for details) and is satisfied by many commonly used loss functions including newsvendor loss, Huber loss, logistic loss, and so on.

\subsection{Baseline: Empirical Risk Minimization (ERM)}

We first characterize the generalization error of the ERM method in \eqref{ERM} to establish a baseline for comparison with the robust methods. The decision space $\mathcal{X}$ induces a loss class $\mathcal{A} := \{ z \mapsto f(x,z) : x \in \mathcal{X} \}$, whose complexity is measured by the entropy integral
\(
\mathcal{C}(\mathcal{A}) := \int_{0}^{\infty} \sqrt{\log \mathcal{N}(\mathcal{A}, \|\cdot\|_{\infty}, u)} \, du,
\)
where $\mathcal{N}(\mathcal{A}, \|\cdot\|_{\infty}, u)$ denotes the $\ell_\infty$-norm covering number of $\mathcal{A}$ at radius $u$. For many standard decision spaces, $\mathcal{C}(\mathcal{A})$ is finite.
Below we provide the corresponding upper bound for ERM.

\begin{proposition}[ERM, Generalization upper bound]\label{f-space, ERM bound}
If Assumption~\ref{assump:regularity} holds, then with probability at least $1-\delta$,
\begin{align}
\label{eq:erm_bound}
     \mathcal{R}_{P_T}(\hat{x}_{ERM})\leq  \underbrace{L\cdot d_W(P_S,P_T)}_{\text{Sensitivity to shift}}+\underbrace{[J_S-J_T]}_{\text{Environment gap}}+ \underbrace{\frac{24}{\sqrt{n}}\mathcal{C}(\mathcal{A})+2M\sqrt{\frac{\log(2/\delta)}{2n}}}_{\text{Statistical learning error}}, \ \ \ \forall P_T.
\end{align}
\end{proposition}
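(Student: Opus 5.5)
We plan a three-step telescoping decomposition of the target excess loss, followed by a standard uniform-convergence argument for the source-side term. Writing $\hat x=\hat x_{ERM}$, we split
\begin{align*}
\mathcal{R}_{P_T}(\hat x)
&= \big(\mathbb{E}_{P_T}[f(\hat x,z)]-\mathbb{E}_{P_S}[f(\hat x,z)]\big)
+\big(\mathbb{E}_{P_S}[f(\hat x,z)]-J_S\big)
+\big(J_S-J_T\big).
\end{align*}
The third bracket is exactly the environment gap and needs no further work.

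For the first bracket, we use Kantorovich--Rubinstein duality for the type-I Wasserstein distance. By Assumption~\ref{f-lipschitz}, the map $z\mapsto f(\hat x,z)$ is $L$-Lipschitz for every fixed $\hat x\in\mathcal X$, uniformly in $\hat x$. Hence $|\mathbb{E}_{P_T}f(\hat x,\cdot)-\mathbb{E}_{P_S}f(\hat x,\cdot)|\le L\, d_W(P_S,P_T)$. This holds deterministically, even though $\hat x$ is data-dependent, because the bound is uniform over $x$. It also holds for every $P_T$ simultaneously, since the probability event we construct involves only the source sample. That is what yields the ``$\forall P_T$'' in the statement.

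For the second bracket, we use the standard ERM argument. Take $x_S$, a source minimizer; if the infimum is not attained, use an $\eta$-minimizer and let $\eta\to 0$. Then
\[
\mathbb{E}_{P_S}f(\hat x)-J_S \le \big(\mathbb{E}_{P_S}f(\hat x)-\mathbb{E}_{\hat P_n}f(\hat x)\big)+\big(\mathbb{E}_{\hat P_n}f(x_S)-\mathbb{E}_{P_S}f(x_S)\big)\le 2\sup_{g\in\mathcal A}\big|\mathbb{E}_{\hat P_n}g-\mathbb{E}_{P_S}g\big|,
\]
using $\mathbb{E}_{\hat P_n}f(\hat x)\le \mathbb{E}_{\hat P_n}f(x_S)$. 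We control this supremum in three steps.
\begin{enumerate}
\item Apply McDiarmid's inequality. Since $0\le f\le M$, changing one sample moves the supremum by at most $M/n$. So with probability $1-\delta$, the supremum is at most its expectation plus $M\sqrt{\log(2/\delta)/(2n)}$. The two-sided version, or a union over the two one-sided deviations, accounts for the $\log(2/\delta)$.
\item Bound the expectation by $2\,\mathbb{E}\,\mathfrak{R}_n(\mathcal A)$ via symmetrization.
\item Bound the Rademacher complexity with Dudley's entropy integral, $\mathfrak R_n(\mathcal A)\le \frac{12}{\sqrt n}\int_0^\infty\sqrt{\log\mathcal N(\mathcal A,\|\cdot\|_\infty,u)}\,du$. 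Here the $\ell_\infty$ covering number dominates the empirical $L_2$ covering number.
\end{enumerate}
Combining these gives $2\cdot 2\cdot 12/\sqrt n\cdot \mathcal C(\mathcal A)$ for the expectation term, and $2M\sqrt{\log(2/\delta)/(2n)}$ after the factor $2$ for the deviation term.

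The main obstacle is bookkeeping rather than ideas. The constants must come out to exactly $24/\sqrt n$ and $2M\sqrt{\log(2/\delta)/(2n)}$. The naive chain above gives $48$, so the $2$ from the two-sided ERM decomposition and the symmetrization constant must be handled more economically. One option is to apply the uniform deviation only to the one-sided term $\sup_{g}(\mathbb{E}_{P_S}g-\mathbb{E}_{\hat P_n}g)$ and to bound the fixed-$x_S$ term separately by Hoeffding, splitting $\delta$ between the two events. That gives $\frac{24}{\sqrt n}\mathcal C(\mathcal A)+M\sqrt{\log(2/\delta)/(2n)}$ plus a Hoeffding term of the same order. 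If the constants still do not match exactly, we would track which version of Dudley's bound (constant $12$ versus $4\sqrt2$) the authors intend and adjust. We also need measurability of the supremum, which follows from lower semicontinuity in $x$ together with a separable cover.
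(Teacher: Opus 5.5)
Your proposal is correct and is essentially the paper's proof: you bound the shift term by Kantorovich--Rubinstein duality, decompose the ERM source risk through $x_S$, and then follow exactly the ``one option'' you describe. That is, you apply symmetrization plus McDiarmid to the one-sided deviation $\sup_{x\in\mathcal X}\{\mathbb{E}_{P_S}f(x,z)-\mathbb{E}_{\hat P_n}f(x,z)\}$ at level $\delta/2$, Hoeffding to the fixed $x_S$ at level $\delta/2$, and Dudley with constant $12$; this already yields the stated constants $\frac{24}{\sqrt n}\mathcal{C}(\mathcal{A})+2M\sqrt{\log(2/\delta)/(2n)}$ exactly, so no further adjustment is needed.
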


This bound consists of three components. 
The first, $L \cdot d_W(P_S, P_T)$, is the \textit{sensitivity-to-shift} term, which quantifies the  discrepancy between the source and target distributions. This is the key term that both robust methods will improve upon at the cost of introducing regularization penalty from robustness.
The second, $J_S-J_T$, represents the difference between the minimal expected losses under the source and target distributions, which we refer to as the \textit{environment gap}. This term captures the irreducible gap in the optimal losses between the source and target distributions, independent of any particular learning algorithm, and  will also appear in the generalization bounds for DRO and RS. 
The third is the \textit{statistical learning error}, a residual term that decreases at the standard $n^{-1/2}$ rate as the sample size grows; similar terms of the same rate will also appear in the generalization bounds for the two robust methods. In what follows, we use Proposition \ref{f-space, ERM bound} as the baseline to benchmark the generalization bounds of DRO and RS.

\subsection{Generalization Bound for Distributionally Robust Optimization (DRO)}

We now characterize the generalization error bound for DRO. 
Let $\hat{x}_{{DRO}}$ denote the  optimizer obtained by solving the DRO problem in Eq. \eqref{eq:DRO}. 
Following \cite{gao2023finite}, we define the DRO regularizer as:
\begin{align}
\label{eq:wasserstein_regularizer}
    \Lambda_r(Q,x)= \sup_{P\in \mathcal{B}(Q,r)}\mathbb{E}_P[f(x,z)]-\mathbb{E}_Q[f(x,z)].
\end{align}
This regularizer measures the deviation between the worst-case loss within the Wasserstein ball centered at $Q$ and the loss under $Q$ itself. By construction,  $\Lambda_r(Q,x)$ is non-decreasing in  $r$.

\begin{theorem}[DRO, Generalization upper bound]\label{DRO bound}
If Assumption~\ref{assump:regularity} holds, then with probability at least $1-\delta$,
\begin{align}\label{eq:dro_bound}
     \mathcal{R}_{P_T}(\hat{x}_{DRO}) \leq & \underbrace{L\cdot \inf_{P\in \mathcal{B}(P_S,r)} d_W(P_T,P)}_{\text{Sensitivity to shift}}
     + \underbrace{\Lambda_r(P_S,x_S)}_{\text{Regularization penalty}} \\ \nonumber& +\underbrace{[J_S-J_T]}_{\text{Environment gap}}+ \underbrace{\frac{48}{\sqrt{n}}\mathcal{C}(\mathcal{A})+ \frac{72L\cdot diam(\mathcal{Z})}{\sqrt{n}}+2M\sqrt{\frac{\log(4/\delta)}{2n}}}_{\text{Statistical learning error}}, \ \ \ \forall P_T.
\end{align}

\end{theorem}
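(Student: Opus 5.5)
The plan is to decompose the target excess loss into a chain of comparisons, each controlled by one term of the bound. Write $F_Q(x):=\sup_{P\in\mathcal{B}(Q,r)}\mathbb{E}_P[f(x,z)]$ for the DRO objective centered at $Q$, and let $P^\ast\in\mathcal{B}(P_S,r)$ (nearly) attain $\inf_{P\in\mathcal{B}(P_S,r)}d_W(P_T,P)$. The chain is
\begin{align*}
\mathbb{E}_{P_T}[f(\hat{x}_{DRO},z)] - J_T
&\le \bigl(\mathbb{E}_{P_T}[f(\hat{x}_{DRO},z)] - \mathbb{E}_{P^\ast}[f(\hat{x}_{DRO},z)]\bigr) + F_{P_S}(\hat{x}_{DRO}) - J_T,
\end{align*}
which uses $P^\ast\in\mathcal{B}(P_S,r)$. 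By Kantorovich--Rubinstein duality and Assumption~\ref{f-lipschitz}, the first bracket is at most $L\,d_W(P_T,P^\ast)$. This gives the sensitivity-to-shift term.

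Next I would write $F_{P_S}(\hat{x}_{DRO}) \le F_{\hat P_n}(\hat{x}_{DRO}) + \Delta_1$ and $F_{\hat P_n}(\hat{x}_{DRO})\le F_{\hat P_n}(x_S) \le F_{P_S}(x_S)+\Delta_2$. The middle inequality is optimality of $\hat{x}_{DRO}$. The error terms are $\Delta_1 = \sup_x |F_{P_S}(x)-F_{\hat P_n}(x)|$ and $\Delta_2 = F_{\hat P_n}(x_S)-F_{P_S}(x_S)$. Then $F_{P_S}(x_S)=J_S+\Lambda_r(P_S,x_S)$, so subtracting $J_T$ produces the regularization penalty plus the environment gap $J_S-J_T$.

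It remains to show that $2\sup_{x}|F_{P_S}(x)-F_{\hat P_n}(x)|$ is bounded by the statistical learning error. I would avoid the naive bound $L\,d_W(P_S,\hat P_n)$, which carries the curse of dimensionality. Instead, following \citet{lee2018minimax} and \citet{gao2023finite}, I would use strong duality for type-I Wasserstein DRO:
\[
F_Q(x)=\min_{\lambda\ge 0}\Bigl\{\lambda r + \mathbb{E}_Q\bigl[\phi_{\lambda,x}(z)\bigr]\Bigr\},\qquad \phi_{\lambda,x}(z)=\sup_{z'\in\mathcal{Z}}\{f(x,z')-\lambda\|z'-z\|\}.
\]
Since $f$ is $L$-Lipschitz in $z$, the optimal multiplier can be restricted to $\lambda\in[0,L]$: for $\lambda\ge L$ we have $\phi_{\lambda,x}=f(x,\cdot)$. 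Hence $|F_{P_S}(x)-F_{\hat P_n}(x)|\le\sup_{\lambda\in[0,L]}|(\mathbb{E}_{P_S}-\mathbb{E}_{\hat P_n})\phi_{\lambda,x}|$, and the problem reduces to uniform convergence over the class $\Phi=\{\phi_{\lambda,x}:x\in\mathcal{X},\lambda\in[0,L]\}$. Each $\phi_{\lambda,x}$ takes values in $[0,M]$, because $\phi_{\lambda,x}\ge f(x,z)\ge 0$ and $\phi_{\lambda,x}\le M$. McDiarmid's inequality then gives the $2M\sqrt{\log(4/\delta)/(2n)}$ term, after splitting $\delta$ between the two deviation events.

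The Rademacher complexity of $\Phi$ is controlled by Dudley's entropy integral. The covering numbers satisfy $\|\phi_{\lambda,x}-\phi_{\lambda',x'}\|_\infty\le\|f(x,\cdot)-f(x',\cdot)\|_\infty+|\lambda-\lambda'|\,\mathrm{diam}(\mathcal{Z})$, so
\[
\log\mathcal{N}(\Phi,\|\cdot\|_\infty,u)\le\log\mathcal{N}(\mathcal{A},\|\cdot\|_\infty,u/2)+\log\!\bigl(L\,\mathrm{diam}(\mathcal{Z})/u\bigr).
\]
Integrating via $\sqrt{a+b}\le\sqrt a+\sqrt b$ gives a contribution of order $\mathcal{C}(\mathcal{A})/\sqrt n$ (the constant 48, from the factor 2 in the rescaling and Dudley's 12 applied twice) plus an order-$L\,\mathrm{diam}(\mathcal{Z})/\sqrt n$ term from the one-dimensional $\lambda$-cover; the constant 72 comes from bounding $\int_0^{L\,\mathrm{diam}}\sqrt{\log(L\,\mathrm{diam}/u)}\,du$.

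I expect this last step, the dimension-free uniform control of $F_{P_S}-F_{\hat P_n}$ via the duality and the restriction $\lambda\le L$, to be the main obstacle. Its constants must also be tracked carefully. Existence of the minimizers $\hat{x}_{DRO}$ and $x_S$ follows from lower semicontinuity; otherwise near-minimizers suffice.
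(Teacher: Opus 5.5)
Your architecture is the paper's. It uses the same decomposition through $\sup_{P\in\mathcal B(P_S,r)}\mathbb{E}_P[f(\cdot,z)]$: the sensitivity term, optimality of $\hat x_{DRO}$ against $x_S$, then $\Lambda_r(P_S,x_S)$ and $J_S-J_T$. It also uses the same dual representation with the multiplier restricted to $[0,L]$, and the same envelope class $\Phi$ controlled by Dudley's integral.

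The one place your write-up does not deliver the stated inequality is the treatment of $\Delta_2=F_{\hat P_n}(x_S)-F_{P_S}(x_S)$. You bound $\Delta_1+\Delta_2\le 2\sup_x|F_{P_S}(x)-F_{\hat P_n}(x)|$, which charges the fixed-$x_S$ term with the full complexity of $\Phi$. That produces $4\mathcal R_n(\Phi)$, i.e.\ $96\,\mathcal C(\mathcal A)/\sqrt n$ with your $u/2$ rescaling. No split of the covering radius between $\mathcal A$ and $\lambda$ brings this down to $48$: reaching $48$ would require putting the entire radius on $\mathcal A$, which makes the $\lambda$-part diverge. Your constant accounting is therefore inconsistent with your own chain. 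The factor $48=2\cdot 2\cdot 12$ is the cost of a single uniform deviation. Also, $72$ does not come from $\int_0^{L\,\mathrm{diam}}\sqrt{\log(L\,\mathrm{diam}/u)}\,du$, which equals $\frac{\sqrt{\pi}}{2}L\,\mathrm{diam}(\mathcal Z)$.

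The missing observation is that $x_S$ does not depend on the sample, so
\[
\Delta_2\le\sup_{\lambda\in[0,L]}\bigl|(\mathbb{E}_{P_S}-\mathbb{E}_{\hat P_n})\phi_{\lambda,x_S}\bigr|.
\]
This is a supremum over a one-parameter class that is $\mathrm{diam}(\mathcal Z)$-Lipschitz in $\lambda$ in sup norm, and it carries no $\mathcal C(\mathcal A)$ contribution. Dudley gives Rademacher complexity at most $12L\,\mathrm{diam}(\mathcal Z)/\sqrt n$. Hence, with probability $1-\delta/2$,
\[
\Delta_2\le 24L\,\mathrm{diam}(\mathcal Z)/\sqrt n+M\sqrt{\log(4/\delta)/(2n)}.
\]
On a second event of probability $1-\delta/2$, the uniform deviation satisfies
\[
\Delta_1\le 48\,\mathcal C(\mathcal A)/\sqrt n+48L\,\mathrm{diam}(\mathcal Z)/\sqrt n+M\sqrt{\log(4/\delta)/(2n)}.
\]
A union bound then gives exactly $48$, $72=48+24$, and $2M\sqrt{\log(4/\delta)/(2n)}$. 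This is also what your ``two deviation events'' should refer to. With this change your proof coincides with the paper's; without it, you prove the bound only with larger absolute constants.
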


This upper bound consists of four components.  The first is the \textit{sensitivity-to-shift}: $L\cdot \inf_{P\in \mathcal{B}(P_S,r)} d_W(P_T,P)$, which measures the discrepancy between the target distribution and the closest distribution in the Wasserstein ball centered at the source distribution. Setting $r=0$ recovers the shift term in the ERM bound \eqref{eq:erm_bound}, while $r>0$ allows DRO to improve upon it. In particular, once $r \ge d_W(P_S, P_T)$ so that the target distribution $P_T$ lies in the Wasserstein ball $\mathcal{B}(P_S, r)$, this shift term becomes zero. The second is the \textit{regularization penalty}: $\Lambda_r(P_S,x_S)$, which is defined in \eqref{eq:wasserstein_regularizer} and does not appear in the ERM bound. This term reflects the additional cost introduced by robustness; it vanishes when $r=0$, aligning DRO with ERM. The third is the common \textit{environment gap} term, $J_S-J_T$, which also appears in the ERM bound in \eqref{f-space, ERM bound}. The fourth is the statistical learning error, a residual term that  decreases at the standard $n^{-1/2}$ rate.

\begin{remark}[The First DRO's Shift-Aware Trade-off Bound on the Radius $r$] Theorem \ref{DRO bound} provides the first target-environment generalization error bound in the literature that explicitly characterizes the trade-off induced by the DRO radius $r$. 
The shift term $L \cdot \inf_{P\in \mathcal{B}(P_S,r)} d_W(P_T,P)$ decreases as $r$ increases, because enlarging the Wasserstein ball makes it more likely to cover the target distribution $P_T$. By optimizing against the worst case over the ambiguity set, DRO protects the decision against all target distributions that lie inside $\mathcal{B}(P_S,r)$, so only shifts that fall outside this covered region continue to contribute to the error. 
In contrast, the regularization term $\Lambda_r(P_S,x_S)$ grows with $r$, since a larger Wasserstein ball amplifies the worst-case loss deviation. This term therefore reflects the cost of robustness: enforcing worst-case performance over a larger ambiguity set makes the decision more conservative.
The generalization error bound in Theorem \ref{DRO bound} hence highlights the fundamental trade-off in setting $r$ appropriately under distributional shifts.
\end{remark}

\subsection{Generalization Bound for Robust Satisficing (RS) }
We now turn to the generalization error bound of RS. Let $\hat{x}_{RS}$ and $k_{\tau_\epsilon}$ denote the optimal solution and optimal value obtained from RS in Eq. \eqref{eq:rs}. The following result from \citet{li2024statistical} characterizes the magnitude of the  fragility measure $k_{\tau_\epsilon}$ relative to the Lipschitz constant $L$. We restate it here because it is useful in interpreting the RS generalization error bound that we present next.

\begin{lemma}[\cite{li2024statistical}]\label{k leq L}
Under Assumption~\ref{f-lipschitz}, we have $k_{\tau_\epsilon}\leq L$. 
\end{lemma}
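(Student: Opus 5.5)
The plan is to exhibit a feasible pair $(x,k)$ for the RS program \eqref{eq:rs} with $k = L$. Since $k_{\tau_\epsilon}$ is the minimum over feasible pairs, this immediately gives $k_{\tau_\epsilon}\le L$.

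The natural candidate is the ERM solution. Let $\hat x_n$ be a minimizer (or near-minimizer) of $\mathbb{E}_{\hat P_n}[f(x,z)]$ over $\mathcal{X}$. Because $\epsilon\ge 0$, we have
\[
\mathbb{E}_{\hat P_n}[f(\hat x_n,z)] \le \inf_{x\in\mathcal X}\mathbb{E}_{\hat P_n}[f(x,z)]+\epsilon=\tau_\epsilon .
\]
If the infimum is not attained, the minimizer can be replaced by any $x$ whose empirical loss is within $\epsilon$ of the infimum. The degenerate case $\epsilon=0$ with no attainment needs separate handling. There I would use lower semicontinuity from Assumption~\ref{bounded f} together with compactness, or else interpret $k_{\tau}$ as an infimum. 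I expect this attainment issue to be the main technical point, and it is a minor one.

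Next I would check the constraint for every $P\in\mathcal P(\mathcal Z)$, using Assumption~\ref{f-lipschitz} and the Kantorovich--Rubinstein duality for the type-I Wasserstein distance. Since $z\mapsto f(\hat x_n,z)$ is $L$-Lipschitz, the duality gives
\[
\mathbb{E}_P[f(\hat x_n,z)]-\mathbb{E}_{\hat P_n}[f(\hat x_n,z)]\le L\, d_W(P,\hat P_n).
\]
This inequality can also be shown directly. Take any coupling $\pi\in\Pi(P,\hat P_n)$ and bound the difference by $\int L\|z-z'\|\,d\pi$. Then take the infimum over couplings.

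Combining the two displays yields
\[
\mathbb{E}_P[f(\hat x_n,z)]-\tau_\epsilon\le L\,d_W(P,\hat P_n)\quad\text{for all } P.
\]
So $(\hat x_n, L)$ is feasible, with $L\ge 0$, and therefore $k_{\tau_\epsilon}\le L$.
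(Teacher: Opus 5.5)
Your argument is correct and is the standard one behind this imported lemma. The paper gives no separate proof of it, but it relies on the same fact elsewhere, e.g.\ that $\phi_{x,k}=f(x,\cdot)$ for $k\ge L$ in the proof of Theorem~\ref{DRO bound}: any $x$ with $\mathbb{E}_{\hat P_n}[f(x,z)]\le\tau_\epsilon$ makes $(x,L)$ feasible by the Lipschitz/Kantorovich--Rubinstein bound.

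For your $\epsilon=0$ attainment worry, compactness is not needed, and reinterpreting $k_\tau$ as an infimum would not help, since without attainment the program \eqref{eq:rs} is infeasible. Instead, take any $x$ feasible for \eqref{eq:rs}, which exists because $k_{\tau_\epsilon}$ is presumed well defined. Choosing $P=\hat P_n$ already forces $\mathbb{E}_{\hat P_n}[f(x,z)]\le\tau_\epsilon$, so your Lipschitz bound shows $(x,L)$ is feasible.
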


Next we show the generalization upper bound for RS.
\begin{theorem}[RS, Generalization upper bound]\label{RS bound}
If Assumption~\ref{assump:regularity} holds,  then with probability at least $1-\delta$,
    \begin{align}\label{eq:rs_bound}
    \mathcal{R}_{P_T}(\hat{x}_{RS})\leq  ~&\underbrace{k_{\tau_\epsilon} \cdot d_W(P_S,P_T)}_{\text{Sensitivity to shift}} +\underbrace{\epsilon}_{\text{Regularization penalty}}\\
   &+\underbrace{[J_S-J_T]}_{\text{Environment gap}} + \underbrace{\frac{48}{\sqrt{n}}\mathcal{C}(\mathcal{A})+ \frac{48L\cdot diam(\mathcal{Z})}{\sqrt{n}}+2M\sqrt{\frac{\log(2/\delta)}{2n}}}_{\text{Statistical learning error}}, \ \ \ \forall P_T.\nonumber
\end{align}
\end{theorem}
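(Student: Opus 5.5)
We bound $\mathbb{E}_{P_T}[f(\hat x_{RS},z)]$ in three moves: pass from $P_T$ to $P_S$ using the RS fragility, pass from $P_S$ to $\hat P_n$ with a uniform concentration argument, and pass back to $x_S$ using the definition of $\tau_\epsilon$. Subtracting $J_T$ and rearranging then gives the terms of \eqref{eq:rs_bound}.

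\textbf{Step 1 (target to source via fragility).} The RS constraint \eqref{eq:rs} at $(\hat x_{RS},k_{\tau_\epsilon})$ is stated relative to $\hat P_n$, not $P_S$. The natural route is to show that the function
\[
g(z):=\bigl(f(\hat x_{RS},z)\bigr)
\]
behaves as if it were $k_{\tau_\epsilon}$-Lipschitz "above $\tau_\epsilon$". That is, we want
\[
\mathbb{E}_P[f(\hat x_{RS},z)]\le \tau_\epsilon+k_{\tau_\epsilon}d_W(P,\hat P_n)\quad\text{for all }P.
\]
I would instead work with the dual form of the constraint. The RS constraint is equivalent to
\[
f(\hat x_{RS},z)\le \phi(z),\qquad \phi(z):=\inf_i\bigl\{s_i+k_{\tau_\epsilon}\|z-z_i\|\bigr\}\ \text{type envelope},
\]
with $\frac1n\sum_i s_i\le\tau_\epsilon$ and each $s_i\ge f(\hat x_{RS},z_i)$. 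More precisely, the sup over $P$ decomposes pointwise: $\sup_z\{f(\hat x_{RS},z)-k\|z-z_i\|\}=:s_i$ and $\frac1n\sum_i s_i\le\tau_\epsilon$.

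Set $h_k(x,z'):=\sup_{z}\{f(x,z)-k\|z-z'\|\}$, which is $k$-Lipschitz in $z'$, satisfies $h_k\ge f$, and has $\mathbb{E}_{\hat P_n}h_{k_{\tau_\epsilon}}(\hat x_{RS},\cdot)\le\tau_\epsilon$. Then, for a coupling of $P_T$ and $P_S$,
\[
\mathbb{E}_{P_T}f(\hat x_{RS},z)\le \mathbb{E}_{P_T}h_{k}(\hat x_{RS},z)\le \mathbb{E}_{P_S}h_k(\hat x_{RS},z)+k_{\tau_\epsilon}d_W(P_S,P_T).
\]
This gives the sensitivity term with coefficient $k_{\tau_\epsilon}$.

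\textbf{Step 2 (source to empirical).} We must bound
\[
\mathbb{E}_{P_S}h_k(\hat x_{RS},\cdot)-\mathbb{E}_{\hat P_n}h_k(\hat x_{RS},\cdot)
\]
uniformly over $x\in\mathcal X$ and $k\in[0,L]$, which is legitimate by Lemma~\ref{k leq L}. This is the main obstacle, because both $x$ and $k$ are data-dependent.

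I would bound the Rademacher complexity of the class $\{h_k(x,\cdot)\}$ by following \citet{lee2018minimax}'s treatment of the robust surrogate. Write $h_k(x,z')=\sup_z\{f(x,z)-k\|z-z'\|\}$; this map is $1$-Lipschitz in $f$ under $\|\cdot\|_\infty$ and $\mathrm{diam}(\mathcal Z)$-Lipschitz in $k$. A covering of $\mathcal A$ combined with a grid on $[0,L]$, fed into Dudley's entropy integral, then gives a bound of the form
\[
\frac{24}{\sqrt n}\bigl(\mathcal C(\mathcal A)+L\,\mathrm{diam}(\mathcal Z)\bigr)
\]
up to constants. Together with McDiarmid, the range of $h_k$ being at most $M$ (since $f\in[0,M]$ and $h_k\le M$), this yields a deviation term with $2M\sqrt{\log(2/\delta)/2n}$.

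\textbf{Step 3 (back to $x_S$).} By \eqref{eq:tau_reparametrization},
\[
\tau_\epsilon\le \mathbb{E}_{\hat P_n}f(x_S,z)+\epsilon .
\]
A second concentration step, using the same symmetrized bound over $\mathcal A$, contributes the $\frac{24}{\sqrt n}\mathcal C(\mathcal A)$ part again. This accounts for the factors $48$ in the statement, and it gives
\[
\mathbb{E}_{\hat P_n}f(x_S,z)\le J_S+\text{error}.
\]
Chaining Steps 1 through 3 and subtracting $J_T$ gives
\[
k_{\tau_\epsilon}d_W(P_S,P_T)+\epsilon+(J_S-J_T)+\text{statistical error}.
\]
To keep the failure probability at $\delta$, I would handle both deviations with a single uniform bound over the joint class $\{h_k(x,\cdot)\}$, which includes $f=h_L$, rather than a union bound.
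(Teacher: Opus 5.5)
Your skeleton matches the paper's proof. It uses the envelope $h_k(x,\cdot)$ (the paper's $\phi_{x,k}$), the consequence $\mathbb{E}_{\hat P_n}h_{k_{\tau_\epsilon}}(\hat x_{RS},\cdot)\le\tau_\epsilon$ of the RS constraint, and the restriction $k\in[0,L]$ via Lemma~\ref{k leq L}. It also uses a one-sided uniform deviation $Z_n:=\sup_{x\in\mathcal X,\,0\le k\le L}(\mathbb{E}_{P_S}-\mathbb{E}_{\hat P_n})h_k(x,\cdot)$ and the inequality $\tau_\epsilon\le\mathbb{E}_{\hat P_n}f(x_S,z)+\epsilon$. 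Your $P_T\to P_S$ transfer, via the $k_{\tau_\epsilon}$-Lipschitz property of $h_k$ and Kantorovich--Rubinstein, is an equivalent and slightly more elementary form of the paper's penalized-dual step at nominal $P_S$. What does not hold up is the concentration bookkeeping, and that matters because the statement has explicit constants.

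Since $x_S$ is deterministic, the term $(\mathbb{E}_{\hat P_n}-\mathbb{E}_{P_S})f(x_S,\cdot)$ needs only Hoeffding at level $\delta/2$ and contributes no $\mathcal C(\mathcal A)$ term. Both factors $48$ come from symmetrization alone: $Z_n\le 2\mathcal R_n(\Phi)+M\sqrt{\log(2/\delta)/(2n)}$, with $\mathcal R_n(\Phi)\le\frac{24}{\sqrt n}(\mathcal C(\mathcal A)+L\,\mathrm{diam}(\mathcal Z))$. Your ``$24+24$'' accounting has two possible readings, and neither gives the statement. If it silently drops that factor $2$, the $L\,\mathrm{diam}(\mathcal Z)$ coefficient is $24$, not $48$. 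If it keeps the factor $2$, it stacks a superfluous uniform bound over $\mathcal A$ on top and gives $72\,\mathcal C(\mathcal A)/\sqrt n$. Likewise, $2M\sqrt{\log(2/\delta)/(2n)}$ is the sum of two deviation terms at level $\delta/2$, not the output of one McDiarmid step.

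Your closing device also fails as justified. Having $f(x_S,\cdot)=h_L(x_S,\cdot)\in\Phi$ does not let $Z_n$ absorb the $x_S$ term, because $Z_n$ bounds $(\mathbb{E}_{P_S}-\mathbb{E}_{\hat P_n})\phi$ while Step 3 needs the opposite sign. Switching to a two-sided supremum fixes the sign but doubles the complexity terms. The union bound costs nothing here; it is exactly what produces $\log(2/\delta)$. If you do want a single event, apply McDiarmid to $Z_n+(\mathbb{E}_{\hat P_n}-\mathbb{E}_{P_S})f(x_S,\cdot)$. This quantity has bounded differences $2M/n$ and mean $\mathbb{E}Z_n\le 2\mathcal R_n(\Phi)$, so it yields the stated bound even with $\log(1/\delta)$ in place of $\log(2/\delta)$.
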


This upper bound consists of four components. The first is the \textit{sensitivity-to-shift} $k_{\tau_\epsilon}\cdot d_W(P_S,P_T)$, which quantifies the distributional discrepancy. 
Unlike ERM, the RS framework introduces a hyperparameter-dependent multiplicative factor $k_{\tau_\epsilon}$ in the shift term. By Lemma \ref{k leq L}, this factor $k_{\tau_\epsilon}$ is no larger than the Lipschitz constant $L$, thereby improving on the ERM bound.
The second is the \textit{regularization penalty}, $\epsilon$, which is the tolerance value specified in the RS model \eqref{eq:rs} and reflects the additional cost of robustness.
The third is the \textit{environment gap}, $J_S-J_T$, which also appears in the ERM bound \eqref{eq:erm_bound} and the DRO bound \eqref{eq:dro_bound}. 
The fourth is the statistical learning error, a residual term that decays at the standard $n^{-1/2}$ rate.

\begin{remark}[RS shift-aware trade-off]
    Theorem \ref{RS bound} highlights the trade-off in setting the RS hyperparameter \(\epsilon\). On one hand, enlarging \(\epsilon\) (and hence \(\tau\)) reduces the fragility parameter \(k_{\tau_\epsilon}\), thereby decreasing the coefficient of the shift term. On the other hand, the satisficing regularization penalty grows with \(\epsilon\). Thus, RS achieves robustness by attenuating the sensitivity coefficient on the source-target distance, whereas DRO achieves robustness by reducing the effective distance from the target distribution to the ambiguity set. The two methods therefore exhibit different robustness mechanisms and different trade-offs.
\end{remark}

\begin{remark}[Breaking the Curse of Dimensionality]

We improve upon \citet{li2024statistical} by avoiding the Wasserstein concentration rate that leads to the curse of dimensionality in their bound. In our result, the statistical error depends on the instance space through $\mathrm{diam}(\mathcal Z)/\sqrt n$ instead of  a direct concentration bound for $d_W(P_S,\hat P_n)$. Although $\mathrm{diam}(\mathcal Z)$ may itself depend on the dimension, the sample-size rate remains $O(n^{-1/2})$, improving over the rate $O(n^{-\min\{1/d,1/2\}})$ that arises from direct Wasserstein concentration. This improvement is made possible because our analysis controls deviations of expected losses rather than the Wasserstein distance between $P_S$ and $\hat P_n$ itself.
\end{remark}

\begin{remark}[Sharpened RS Bounds in the Absence of Distributional Shifts]
Our result in Eq. \eqref{eq:rs_bound} can be sharpened in the absence of distributional shift. When $P_S=P_T$, $\hat{x}_{RS}$ reduces to the $\epsilon$-approximate ERM defined in Equation 3.3 of \cite{zhang2023mathematical}. To see that, with the hyperparameter choice in \eqref{eq:tau_reparametrization}, substituting $\hat{P}_n$
 into \eqref{eq:rs} implies that the empirical loss of 
$\hat{x}_{RS}$ is at most $\epsilon$ larger than the optimal empirical loss. By Proposition 4.20 in \citet{zhang2023mathematical}, under Assumption \ref{bounded f}, with probability at least $1-\delta$, we have:
    \begin{align}\label{RS bound without shift}
    \mathcal{R}_{P_T}(\hat{x}_{RS}) \leq \epsilon + \frac{24}{\sqrt{n}}\mathcal{C}(\mathcal{A})+2M\sqrt{\frac{\log(2/\delta)}{2n}}.
\end{align}

Comparing \eqref{RS bound without shift} with the generic RS bound \eqref{eq:rs_bound}, we see that  the term involving the instance diameter $diam(\mathcal{Z})$ disappears without distributional shift and therefore removes the need for  the boundedness assumption on $diam(\mathcal{Z})$ (Assumption \ref{bounded z}) and the Lipschitz assumption (Assumption \ref{f-lipschitz}). The additional term in \eqref{eq:rs_bound} thus arises as an artifact of handling distributional shifts in our proof, and   the regularity conditions (Assumptions~\ref{bounded z} and~\ref{f-lipschitz}) help to control how decision performance transfers between the source and target distributions.
In cases without shifts, however, the problem reduces to classical ERM analysis \citep{zhang2023mathematical}, where only a uniform bound  on the loss is needed. 

\end{remark}  

\subsection{An  Illustrating Example}

Now we provide a simple example to contextualize the generalization error bounds presented above, which serves two roles. First, it shows that the ERM sensitivity-to-shift term \(L\,d_W(P_S,P_T)\) can be attained, up to the common environment gap. Second, it illustrates the two robust trade-off mechanisms: DRO reduces the residual distance from the target distribution to the Wasserstein ball but pays a radius-induced regularization penalty, whereas RS lowers the coefficient multiplying the source-target distance but pays a tolerance penalty.

In this example, let \(\mathcal X=[0,1]\) and \(\mathcal Z=[-B,B]\)  be a sufficiently large bounded interval. We consider the loss \(f(x,z)=|1-xz|\), where \(x\in\mathcal X\) is the decision variable and \(z\in\mathcal Z\). The source distribution for the data variable $z$ is the Dirac measure at value $z = 1$, i.e.,  \(P_S=\delta_1\). The loss is Lipschitz in \(z\) with constant \(L=1\), and its minimum is \(J_S=0\) attained by $x_S = 1$. 
This example is free from finite-sample uncertainty, so the optimizers in this example have simple closed forms. ERM chooses \(\hat x_{\mathrm{ERM}}=x_S = 1\), because the source mass is located at \(z=1\). For \(r<1\), DRO also chooses \(\hat x_{\mathrm{DRO}}=1\), coinciding with the ERM solution. RS instead chooses \(\hat x_{\mathrm{RS}}=1-\epsilon\) for \(\epsilon<1\), sacrificing source fit in order to reduce its fragility coefficient to \(k_{\tau_\epsilon}=1-\epsilon\). We note that when $r \ge 1$ and $\epsilon \ge 1$,  DRO and RS have degenerate solutions \(\hat x_{\mathrm{DRO}} = \hat x_{\mathrm{RS}}= 0\), so we omit them subsequently. 

The following result first shows the sharpness of the generic ERM bound derived in Proposition \ref{f-space, ERM bound}. This guarantee holds for an arbitrary target distribution \(P_T\) supported on \(\mathcal Z\), and shows that the ERM shift term in \eqref{eq:erm_bound}, including the coefficient \(L\), is tight up to the common environment gap.

\begin{proposition}[Sharpness of the ERM shift term]\label{LB for ERM}
For the loss example described above, for any target distribution \(P_T\) supported on \(\mathcal Z\), we have
\[
\mathcal{R}_{P_T}(\hat x_{\mathrm{ERM}})
=
[J_S-J_T]+L\,d_W(P_S,P_T).
\]
\end{proposition}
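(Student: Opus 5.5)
Since $J_S=0$, $L=1$ and $\hat x_{\mathrm{ERM}}=1$, the definition \eqref{eq:transfer_error} gives $\mathcal R_{P_T}(\hat x_{\mathrm{ERM}})=\mathbb E_{P_T}|1-z|-J_T$. The right-hand side of the claim is $[0-J_T]+d_W(\delta_1,P_T)$, so $J_T$ cancels on both sides. The claim therefore reduces to the identity
\[
\mathbb E_{P_T}\big[f(1,z)\big]=\mathbb E_{P_T}|1-z| = d_W(\delta_1,P_T).
\]
I will prove this directly from the definition of the type-I Wasserstein distance, with $\|\cdot\|=|\cdot|$ on $\mathcal Z=[-B,B]\subset\mathbb R$.

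The key step is that the coupling set $\Pi(\delta_1,P_T)$ is a singleton: a joint law whose first marginal is a Dirac mass must equal $\delta_1\otimes P_T$. To see this, note that any $\pi\in\Pi(\delta_1,P_T)$ gives zero mass to $\{z\neq 1\}\times\mathcal Z$. Hence for measurable $A$ we have $\pi(\mathcal Z\times A)=\pi(\{1\}\times A)=P_T(A)$, which identifies $\pi$ as the product measure. The infimum in the definition of $d_W$ is then attained trivially, and it equals
\[
\int |z-z'|\,\delta_1(dz)P_T(dz')=\mathbb E_{P_T}|1-z'|.
\]

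Combining the two displays gives $\mathcal R_{P_T}(\hat x_{\mathrm{ERM}})=d_W(P_S,P_T)-J_T=[J_S-J_T]+L\,d_W(P_S,P_T)$ for every $P_T$ supported on $\mathcal Z$. This holds with equality, with no probability qualifier needed, because the example has no sampling noise: $\hat P_n=\delta_1$ almost surely, so ERM is exactly the source minimizer.

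I expect no real obstacle. The only points needing care are bookkeeping. First, ERM's solution must be unique, or at least $x=1$ must be the one selected; this holds because $|1-x|=0$ only at $x=1$. Second, $J_T$ is finite and cancels, which holds since $f\ge0$ and $f$ is bounded on the compact set. Third, the identification $L=1$ requires the Lipschitz constant to be exactly $1$, which holds at $x=1$; this is the decision ERM uses, so the coefficient is attained.
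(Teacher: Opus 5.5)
Your proof is correct and follows the same route as the paper. Since $\hat P_n=\delta_1$ almost surely, you get $\hat x_{\mathrm{ERM}}=1$ and $J_S=0$, which reduces the claim to $\mathbb E_{P_T}|1-z|=d_W(\delta_1,P_T)$; you then obtain this from the fact that the only coupling with a Dirac marginal is $\delta_1\otimes P_T$, after which $J_T$ cancels (your justification of the uniqueness of the coupling is just more explicit than the paper's one-line assertion).
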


For ease of exposition, we next restrict the target distribution to the Dirac family \(P_T=\delta_a\), where \(a \in \mathcal Z\) indexes the location of the target Dirac mass. Moving \(a\) away from \(1\) therefore represents a distributional shift away from the source environment, yielding \(d_W(P_S,P_T)=|a-1|\). The target optimum is 
\[ J_T = \inf_{x\in[0,1]} |1-xa| = \begin{cases} 1, & a\le 0,\\ 1-a, & 0\le a\le 1,\\ 0, & a\ge 1. \end{cases} \]
We write the signed environment gap as \(\Delta_{\mathrm{env}}(a):=J_S-J_T=-J_T\).

Table~\ref{tab:dirac_actual_bounds}  reports, for each method's optimizer, the actual target-environment risk computed  under \(P_T=\delta_a\) and the corresponding bound expression  by instantiating \eqref{eq:erm_bound}, \eqref{eq:dro_bound}, and \eqref{eq:rs_bound}. The ``Comment'' column shows the sensitivity-to-shift term and regularization penalty term associated. 

\begin{table}[t]
\centering
\begin{tabular}{c|c|c}
\hline
Quantity & Formula & Comment \\
\hline
ERM actual risk &
\(\Delta_{\mathrm{env}}(a)+|a-1|\) &
 \\
ERM bound &
\(\Delta_{\mathrm{env}}(a)+|a-1|\) &
Sensitivity \(|a-1|\); penalty \(0\) \\
\hline 
DRO actual risk (\(r<1\)) &
\(\Delta_{\mathrm{env}}(a)+|a-1|\) &
 \\
DRO bound (\(r<1\)) &
\(\Delta_{\mathrm{env}}(a)+(|a-1|-r)_+ + r\) &
Sensitivity \((|a-1|-r)_+\); penalty \(r\) \\
\hline 
RS actual risk ($\epsilon < 1$) &
\(\Delta_{\mathrm{env}}(a)+|1-(1-\epsilon)a|\) &
 \\
RS bound ($\epsilon < 1$) &
\(\Delta_{\mathrm{env}}(a)+(1-\epsilon)|a-1|+\epsilon\) &
Sensitivity \((1-\epsilon)|a-1|\); penalty \(\epsilon\) \\
\hline
\end{tabular}
\caption{Actual risks and theoretical bounds for different methods in the Dirac example.}
\label{tab:dirac_actual_bounds}
\end{table}

The table highlights the two distinct trade-off mechanisms of the two robust methods. For DRO, the shift-dependent part changes from the ERM distance \(|a-1|\) to the residual distance \((|a-1|-r)_+\). Increasing \(r\) therefore reduces the part of the target shift that remains outside the protected ball \(\mathcal B(P_S,r)\), but the same increase raises the regularization penalty \(r\). For RS, increasing \(\epsilon\) raises the tolerance penalty, but lowers the slope of the shift term from \(1\) to \(1-\epsilon\). Thus, DRO trades residual shift for radius regularization, whereas RS trades source-side tolerance for a smaller shift sensitivity.

 Figure~\ref{loss_comparison} further shows that the established generalization error bounds above effectively control the true losses under the target environment, after accounting for the common environment gap $J_S-J_T$.
 Specifically, define the ``adjusted risk'' as the realized target loss of the  optimizer after removing the  environment gap, and the ``adjusted bound'' as the corresponding upper bound given by the sensitivity-to-shift term plus regularization-penalty term.
 Figure~\ref{loss_comparison} confirms that these adjusted bounds provide valid upper envelopes for the corresponding adjusted risks. The ERM bound is exact in this example; the DRO and RS bounds are conservative but still meaningful, with DRO displaying the residual-shift versus radius-penalty trade-off and RS showing a slower deterioration under distributional shift because its shift coefficient is smaller.

\begin{figure}[H]
    \centering    \includegraphics[width=\linewidth]{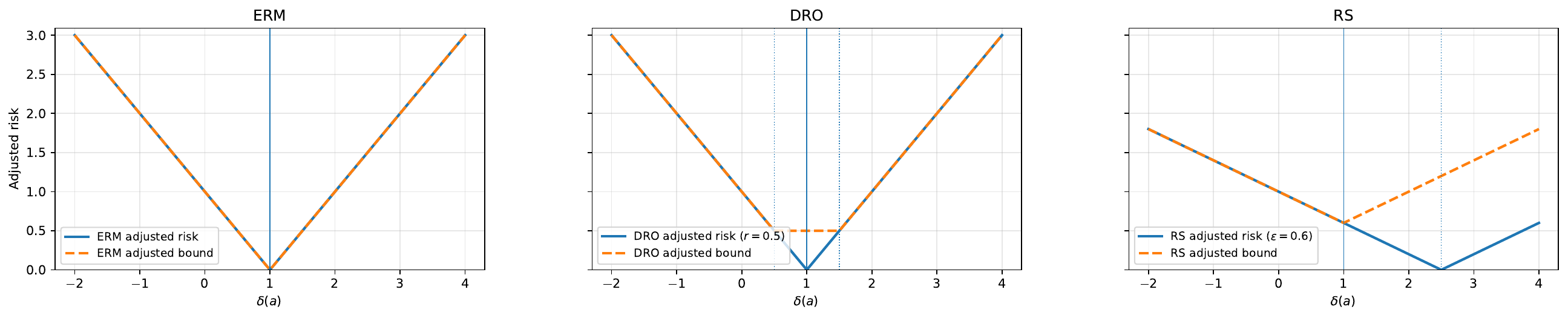}
    \caption{Adjusted risks and adjusted bounds of different methods in the Dirac example with target distribution \(P_T=\delta_a\) for $a \in [-2, 4]$. We consider DRO with  $r = 0.5$ and RS with  $\epsilon = 0.6$ as examples. }
\label{loss_comparison}
\end{figure}

\section{Comparative Statics under Partial Shift Information}\label{sec: partial}

We now compare how DRO and RS respond to distributional shifts. The generalization bounds in Section~\ref{sec: theoretical bounds} show that, relative to ERM,
DRO improves robustness by enlarging the ambiguity set through the radius $r$,
thereby reducing sensitivity to distribution shift but incurring an additional
regularization penalty. RS, in contrast, controls shift-related fragility through
the satisficing threshold $\tau$, at the cost of allowing a prescribed level of
suboptimality. These two methods enforce robustness via distinct mechanisms by setting corresponding 
hyperparameters, so their relative performance depends
critically on how those hyperparameters are chosen.

However, direct comparison between these two methods is difficult without any information about the target shift, 
since training data alone does not offer a principled criterion for setting these two robustness hyperparameters in a consistent and comparable manner.
We therefore focus on settings in which partial information about the
distributional shift is available. This information provides a common input for
calibrating hyperparameters for both methods, while still allowing each  to use that input according to its
own native interpretation. In particular, we consider two regimes: (i) the known shift
magnitude, measured by the type-I Wasserstein distance between the source and
target distributions, but unknown shift direction; and (ii) the known
shift direction within a parametric distribution family, but unknown shift
magnitude. These two regimes capture complementary forms of partial
shift information and allow us to compare DRO and RS under controlled and
mechanism-respecting calibrations.

Specifically, we consider the two terms of the generalization error bounds in
Section~\ref{sec: theoretical bounds} that capture a robustness trade-off: a sensitivity-to-shift
term, which captures the part of the target loss bound driven by the discrepancy
between the source distribution \(P_S\) and the target distribution \(P_T\), and a
regularization-penalty term, which captures the additional cost incurred by the
robustness mechanism. 
For DRO, these terms are naturally indexed by the
ambiguity radius \(r\). For RS, the sensitivity-to-shift component is indexed
by the reference value  \(\tau\), and its regularization
penalty is \(\epsilon(\tau):=
\tau-\inf_{x\in\mathcal X}\mathbb E_{\hat P_n}[f(x,z)]\), for any feasible threshold
\(\tau\ge \inf_{x\in\mathcal X}\mathbb E_{\hat P_n}[f(x,z)]\).\footnote{Although the bound in
Section~\ref{sec: theoretical bounds} is stated using the parameterization
\(\tau_\epsilon=\inf_{x\in\mathcal X}\mathbb E_{\hat P_n}[f(x,z)]+\epsilon\),
the analyses in this section are more naturally expressed in terms of the reference
threshold \(\tau\) itself.}
Table~\ref{tab:tradeoff-components} summarizes the corresponding terms by
\(\mathrm{Sen}_{\mathrm{DRO}}\), \(\mathrm{Reg}_{\mathrm{DRO}}\),
\(\mathrm{Sen}_{\mathrm{RS}}\), and \(\mathrm{Reg}_{\mathrm{RS}}\), and each method's trade-off term is defined as:
\begin{equation}\label{eq:TO-components}
\mathrm{TO}_{\mathrm{DRO}}(r)
=
\mathrm{Sen}_{\mathrm{DRO}}(r)
+
\mathrm{Reg}_{\mathrm{DRO}}(r),
\qquad
\mathrm{TO}_{\mathrm{RS}}(\tau)
=
\mathrm{Sen}_{\mathrm{RS}}(\tau)
+
\mathrm{Reg}_{\mathrm{RS}}(\tau).
\end{equation}
 Next we compare DRO and RS through these trade-off 
terms, after calibrating $r$ and $\tau$ using the same available partial shift 
information. Figure~\ref{pic:dro_vs_rs} summarizes the main conclusions.

\begin{table}[t]
\centering
\begin{tabular}{c|c|c}
\toprule
Method & Sensitivity to distribution shift & Regularization penalty \\
\midrule
DRO &
$\mathrm{Sen}_{\mathrm{DRO}}(r)
= L\cdot \inf_{P\in \mathcal{B}(P_S,r)} d_W(P_T,P)$
&
$\mathrm{Reg}_{\mathrm{DRO}}(r)
= \Lambda_r(P_S,x_S)$
\\[6pt]
RS &
$\mathrm{Sen}_{\mathrm{RS}}(\tau)
= k_{\tau}\cdot d_W(P_S,P_T)$
&
$\mathrm{Reg}_{\mathrm{RS}}(\tau)
= \epsilon(\tau)$
\\
\bottomrule
\end{tabular}
\caption{Sensitivity-to-shift and regularization-penalty components of the trade-off terms.}
\label{tab:tradeoff-components}
\end{table}

\begin{figure}[h]
    \centering
    \includegraphics[width=0.8\linewidth]{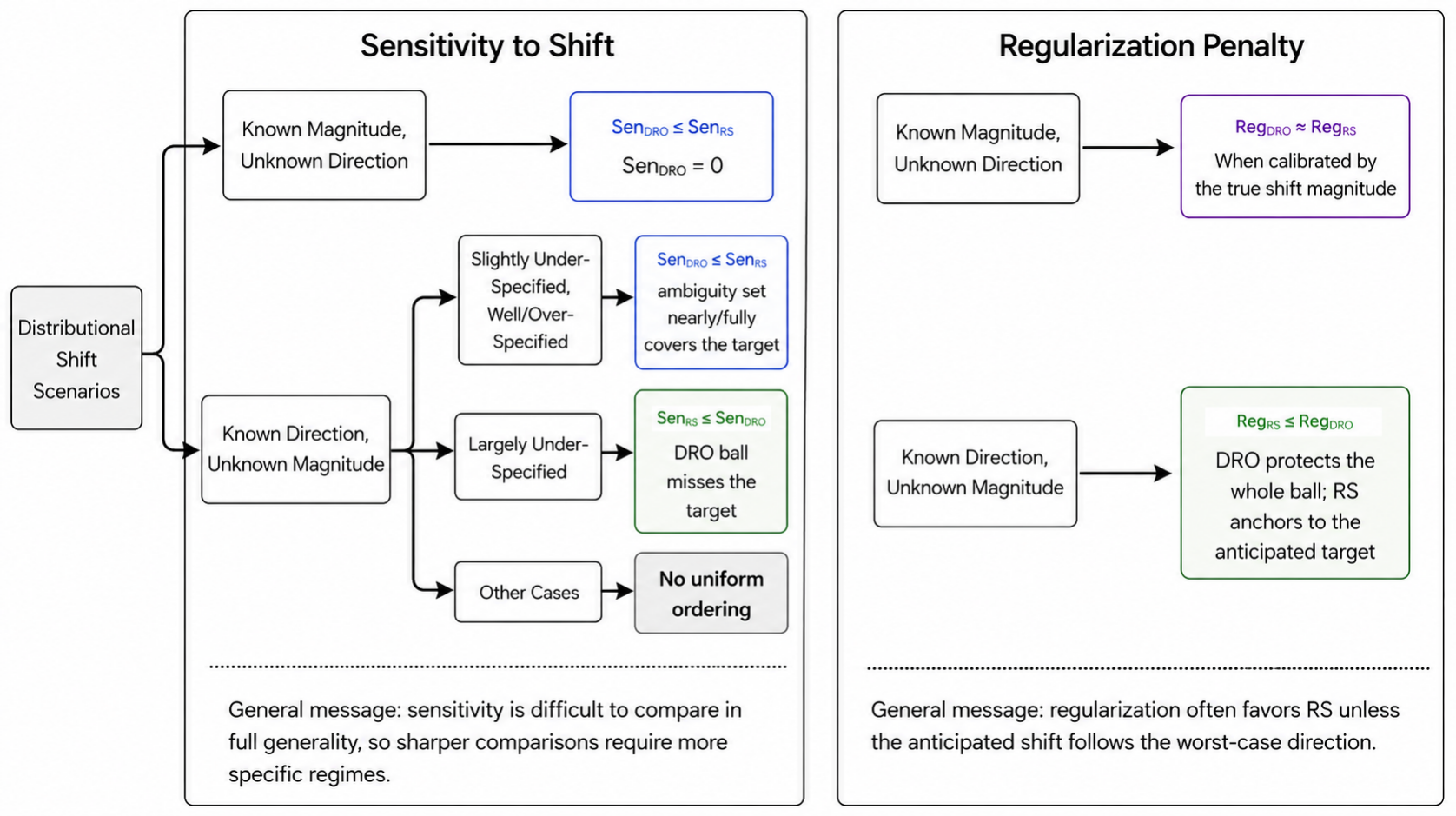}
    \caption{Comparison between DRO and RS when calibrating the hyperparameters to different shift information. The relations \(\leq\)  and \(\approx\) are understood up to vanishing error terms. }
    \label{pic:dro_vs_rs}
\end{figure}

We  acknowledge that  the hyperparameter calibration rules studied below
may not be universally optimal in practice, and   a decision maker may not  always observe
the partial shift information assumed here. However, these calibrations serve as
a systematic device for isolating how the distinct robustness mechanisms of DRO
and RS affect target-environment performance under the same informational input. Our analyses complements the optimization-based hyperparameter
correspondence literature by focusing on a learning-theoretic comparison under
controlled information regimes.

\subsection{Scenario I: Shift with Known Magnitude but Unknown Direction}
\label{sec:known_shift}

We begin with the setting where the shift magnitude $r = d_W(P_S, P_T)$ is known but the shift direction is unknown.  Under this information, the natural calibration for the DRO hyperparameter is to set
the ambiguity radius equal to that known shift magnitude: $\mathcal{B}(\hat P_n,r):=\{P\in\mathcal P(\mathcal Z):d_W(P,\hat P_n)\le r\}$.  This calibration reflects the native semantics of DRO: it protects against all candidate target distributions consistent with the known magnitude.
For RS, the hyperparameter $\tau$ has a different meaning: it is a reference value of anticipated loss. To set $\tau$'s value, 
we translate the known shift magnitude into a reference value via the candidate target distribution induced by that magnitude. Since no directional information is available, we evaluate the ERM solution over all candidate target distributions in the Wasserstein ball and take the worst-case loss as the reference value: $\tau_r=\sup_{P\in \mathcal{B}(\hat P_n,r)}\mathbb{E}_P[f(\hat x_{\mathrm{ERM}},z)]$, where $r$ is the known shift magnitude (and coincides with the DRO radius). 
This choice maps the same magnitude information into the RS hyperparameter by asking what loss level the source-trained ERM solution may have to tolerate over the set of target distributions consistent with that information. 
By doing so, we set the hyperparameters of DRO and RS with the same magnitude information while respecting the distinct semantics of $r$ and $\tau$.

We now provide a quantitative comparison of the trade-off terms of DRO and RS presented in \eqref{eq:TO-components}. 
We introduce  additional regularity assumptions for the empirical optimizer around the
source-population optimizer. 
These assumptions are standard regularity and identification conditions and hold for common loss functions such as newsvendor, logistic, and Huber losses under standard boundedness conditions and suitable nondegeneracy conditions on the source distribution $P_S$ (see Appendix~\ref{app:examples_assumptions}).

\begin{assumption}\label{assump:strong convex and lipschitz}
We assume:
\begin{enumerate}[label=(\alph*),ref=\theassumption(\alph*)]
  \item \textbf{Decision-Lipschitz loss:}
  There exists a constant $L'>0$ and a norm $\|\cdot\|_{\mathcal{X}}$ on the decision space such that $\big|f(x_1,z)-f(x_2,z)\big| ~\le~ L'\,\|x_1-x_2\|_{\mathcal{X}}$ for any $z\in\mathcal{Z}$ and $x_1, x_2\in\mathcal{X}$.
  \item \textbf{Quadratic growth around the source optimum:}
  Let $\mathcal{E}(x):=\mathbb{E}_{P_S}[f(x,z)]$. There exists $\alpha>0$ such that  for all $x \in \mathcal{X}$, $\mathcal{E}(x) ~\ge~ \mathcal{E}(x_S)  + \frac{\alpha}{2}\,\|x-x_S\|_{\mathcal{X}}^2$.
\end{enumerate}
\end{assumption}

The following proposition compares the sensitivity-to-shift terms and regularization-penalty terms for DRO and RS defined in Table \ref{tab:tradeoff-components} under our hyperparameter calibration scheme above.

\begin{proposition}
\label{prop:known_shift}
Under Assumptions~\ref{assump:regularity} and~\ref{assump:strong convex and lipschitz}, with probability at least \(1-\delta\), we have
\[
\begin{gathered}
0 = \mathrm{Sen}_{\mathrm{DRO}}(r)
\leq \mathrm{Sen}_{\mathrm{RS}}(\tau_r)
= k_{\tau_r} r,\\
\mathrm{Reg}_{\mathrm{RS}}(\tau_r) - \rho_n(\delta)
\leq \mathrm{Reg}_{\mathrm{DRO}}(r)
\leq \mathrm{Reg}_{\mathrm{RS}}(\tau_r) + \rho_n(\delta),
\end{gathered}
\]
where \(\tau_r\) denotes the RS threshold calibrated under shift radius \(r\),
\(\mathrm{Sen}_{\mathrm{DRO}}\), \(\mathrm{Reg}_{\mathrm{DRO}}\),
\(\mathrm{Sen}_{\mathrm{RS}}\), and \(\mathrm{Reg}_{\mathrm{RS}}\) are defined in
Table~\ref{tab:tradeoff-components}, and
\(
\rho_n(\delta)
=
4L'\sqrt{\frac{\mathfrak G_n(\delta)}{\alpha}}
+
\frac{24L\,\mathrm{diam}(\mathcal Z)}{\sqrt n}
+
2M\sqrt{\frac{\log(8/\delta)}{2n}}
\)
is the statistical error with
\(
\mathfrak G_n(\delta)
:=
\frac{24}{\sqrt n}\mathcal C(\mathcal A)
+
M\sqrt{\frac{\log(8/\delta)}{2n}}.
\) Therefore, for the trade-off terms defined in \eqref{eq:TO-components}, we have $\mathrm{TO}_{\mathrm{DRO}}(r)\leq \mathrm{TO}_{\mathrm{RS}}(\tau_r)-k_{\tau_r}r+\rho_n(\delta)$.
\end{proposition}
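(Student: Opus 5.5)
The two sensitivity terms are immediate, so the main work is to recognize both regularization penalties as Wasserstein regularizers and compare them.

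\textbf{Sensitivity terms.} Since the calibrated radius is $r=d_W(P_S,P_T)$, we have $P_T\in\mathcal B(P_S,r)$. Hence the infimum defining $\mathrm{Sen}_{\mathrm{DRO}}(r)$ is attained at $P=P_T$ and equals $0$. By definition $\mathrm{Sen}_{\mathrm{RS}}(\tau_r)=k_{\tau_r}d_W(P_S,P_T)=k_{\tau_r}r$, which is nonnegative because $k\ge0$ in \eqref{eq:rs}. This gives the first line. The final claim about $\mathrm{TO}$ follows by adding the upper inequality for the penalties to the identity $0=\mathrm{Sen}_{\mathrm{RS}}(\tau_r)-k_{\tau_r}r$.

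\textbf{Rewriting the penalties.} Since $\hat x_{\mathrm{ERM}}$ attains $\inf_x\mathbb E_{\hat P_n}[f(x,z)]$, we have
\[
\mathrm{Reg}_{\mathrm{RS}}(\tau_r)=\tau_r-\mathbb E_{\hat P_n}[f(\hat x_{\mathrm{ERM}},z)]=\Lambda_r(\hat P_n,\hat x_{\mathrm{ERM}}),
\]
using the definition \eqref{eq:wasserstein_regularizer}. Also $\mathrm{Reg}_{\mathrm{DRO}}(r)=\Lambda_r(P_S,x_S)$. It therefore suffices to show that, with probability at least $1-\delta$,
\[
|\Lambda_r(\hat P_n,\hat x_{\mathrm{ERM}})-\Lambda_r(P_S,x_S)|\le\rho_n(\delta).
\]
I split this difference via the intermediate quantity $\Lambda_r(\hat P_n,x_S)$.

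\textbf{Decision perturbation.} $\Lambda_r(Q,\cdot)$ is a difference of two functionals of $x$: a supremum of expectations and an expectation. Each is $L'$-Lipschitz by Assumption~\ref{assump:strong convex and lipschitz}(a), so
\[
|\Lambda_r(\hat P_n,\hat x_{\mathrm{ERM}})-\Lambda_r(\hat P_n,x_S)|\le 2L'\|\hat x_{\mathrm{ERM}}-x_S\|_{\mathcal X}.
\]
To bound the distance, I use quadratic growth (Assumption~\ref{assump:strong convex and lipschitz}(b)) together with the usual ERM argument:
\[
\tfrac{\alpha}{2}\|\hat x_{\mathrm{ERM}}-x_S\|^2\le \mathcal E(\hat x_{\mathrm{ERM}})-\mathcal E(x_S)\le 2\sup_{x}|\mathbb E_{\hat P_n}f-\mathbb E_{P_S}f|.
\]
The uniform deviation on the right is controlled, via Dudley chaining and McDiarmid as in Proposition~\ref{f-space, ERM bound}, by $\mathfrak G_n(\delta)$ on an event of probability $1-\delta/4$. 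This gives $\|\hat x_{\mathrm{ERM}}-x_S\|\le 2\sqrt{\mathfrak G_n/\alpha}$, and hence the term $4L'\sqrt{\mathfrak G_n(\delta)/\alpha}$.

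\textbf{Nominal perturbation (main obstacle).} With $x_S$ fixed, I need to bound $|\Lambda_r(\hat P_n,x_S)-\Lambda_r(P_S,x_S)|$ without Wasserstein concentration, which would reintroduce the curse of dimensionality. I would use the strong duality for type-I Wasserstein DRO:
\[
\sup_{P\in\mathcal B(Q,r)}\mathbb E_P[f(x_S,z)]=\inf_{\lambda\ge0}\Big\{\lambda r+\mathbb E_Q[\phi_\lambda(z)]\Big\},\qquad \phi_\lambda(z)=\sup_{z'}\{f(x_S,z')-\lambda\|z-z'\|\}.
\]
Because $f$ is $L$-Lipschitz in $z$, for $\lambda\ge L$ we get $\phi_\lambda=f(x_S,\cdot)$, so $\lambda$ may be restricted to $[0,L]$. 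Each $\phi_\lambda$ lies in $[0,M]$, and $\lambda\mapsto\phi_\lambda(z)$ is $\mathrm{diam}(\mathcal Z)$-Lipschitz. The class $\{\phi_\lambda:\lambda\in[0,L]\}$ is therefore one-dimensional, with $\ell_\infty$ covering number at most about $L\,\mathrm{diam}(\mathcal Z)/u$. Its entropy integral is $O(L\,\mathrm{diam}(\mathcal Z))$, which yields a uniform deviation of order $L\,\mathrm{diam}(\mathcal Z)/\sqrt n$ plus a McDiarmid term $M\sqrt{\log(8/\delta)/(2n)}$.

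Since a difference of infima is bounded by the supremum of the differences, this controls the worst-case term. The nominal term $\mathbb E_Q f(x_S)$ is handled by Hoeffding at level $\delta/8$. Tracking constants in the chaining is what should produce $24L\,\mathrm{diam}(\mathcal Z)/\sqrt n+2M\sqrt{\log(8/\delta)/(2n)}$.

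\textbf{Conclusion.} A union bound over the events gives total probability $1-\delta$. Combining the two perturbation bounds gives $|\mathrm{Reg}_{\mathrm{DRO}}(r)-\mathrm{Reg}_{\mathrm{RS}}(\tau_r)|\le\rho_n(\delta)$. I expect the nominal-perturbation step to be the hardest, mainly the careful chaining constants and justifying the restriction $\lambda\le L$.
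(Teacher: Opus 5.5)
Your proposal is correct and follows the paper's proof essentially step for step. Like the paper, you rewrite both penalties as the Wasserstein regularizer $\Lambda_r$ and split the difference through $\Lambda_r(\hat P_n,x_S)$; you control the decision perturbation via $L'$-Lipschitzness plus the quadratic-growth stability bound $\|\hat x_{\mathrm{ERM}}-x_S\|_{\mathcal X}\le 2\sqrt{\mathfrak G_n(\delta)/\alpha}$, and the nominal perturbation via the Kantorovich dual restricted to the one-parameter class $\{\phi_{x_S,k}:0\le k\le L\}$ plus Hoeffding. The only difference is a slightly different split of the failure probability in the union bound, which does not affect the result.
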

 
Proposition~\ref{prop:known_shift} compares trade-off terms  between DRO and RS when their hyperparameters are calibrated  with the shift magnitude. The regularization-penalty components of the two methods are asymptotically comparable. In particular, the DRO regularization penalty \(\mathrm{Reg}_{\mathrm{DRO}}(r)=\sup_{P\in \mathcal{B}(P_S,r)}\mathbb{E}_P[f(x_S,z)]-\mathbb{E}_{P_S}[f(x_S,z)]\) matches the RS penalty \(\mathrm{Reg}_{\mathrm{RS}}(\tau_r)=\sup_{P\in \mathcal{B}(\hat P_n,r)}\mathbb{E}_{P}[f(\hat x_{\mathrm{ERM}},z)]-\mathbb{E}_{\hat P_n}[f(\hat x_{\mathrm{ERM}},z)]\) up to the statistical error \(\rho_n(\delta)\). This is because, as \(n\rightarrow\infty\), \(\hat P_n\) converges to \(P_S\), and \(\hat x_{\mathrm{ERM}}\) converges to \(x_S\) in norm under Assumption~\ref{assump:strong convex and lipschitz}.

The key difference lies in the sensitivity-to-shift components. For DRO, the sensitivity term \(\mathrm{Sen}_{\mathrm{DRO}}(r)=L\cdot \inf_{P\in \mathcal{B}(P_S,r)} d_W(P_T,P)\) equals zero, since setting $r = d_W(P_S, P_T)$ to the exact shift magnitude  ensures that \(P_T\) lies within the set $\mathcal{B}(P_S,r)$. In contrast, the RS sensitivity term \(\mathrm{Sen}_{\mathrm{RS}}(\tau_r)=k_{\tau_r}\cdot d_W(P_S,P_T)\) is equal to \(k_{\tau_r}r\) and  does not vanish.

Combining these two components, the aggregate trade-off comparison in Proposition~\ref{prop:known_shift} shows that DRO is favored 
by the non-vanishing sensitivity gap \(k_{\tau_r}r\), up to the statistical error \(\rho_n(\delta)\). This stems from DRO's ability to directly incorporate the shift magnitude into its ambiguity set and absorb the target distribution into its worst-case robustness framework.

\subsection{Scenario II: Shift with Known Direction but Unknown  Magnitude}
\label{sec:known_direction}
We now consider the complementary setting in which the direction of the distributional shift is known, while its magnitude remains unspecified. Let $\{P_t\}_{t\ge 0}$ denote the family of distributions induced by the known shift direction, where $t$ indexes the shift magnitude. 
We introduce the following assumption to regularize the distribution family $\{P_t\}_{t\geq 0}$.
\begin{assumption}[Monotonicity]\label{monotonicity}
    For all $0\leq u\leq t$, $d_W(P_{0}, P_{u})\leq d_W(P_{0},P_t)$, where $P_0\equiv P_S$ denotes the source distribution.
\end{assumption}

This monotonicity condition ensures that the Wasserstein distance between $P_t$ and the source distribution $P_0\equiv P_S$ increases monotonically with the shift magnitude index $t$. It holds  for a variety of distribution families, including the example given below; Appendix~\ref{sec:monotonicity_assump} provides further examples satisfying Assumption~\ref{monotonicity}, such as distributions characterized by stochastic differential equations.

\begin{example}[Parameter shift within a distribution family]
Consider a parametric family of distributions 
$\{P_\theta:\theta\in\Theta\}$, where $\theta$ may be vector- or matrix-valued.
A distributional shift of the form $P_t=P_{\theta_t}$ can then be viewed as 
a parameter shift. For instance, consider the multivariate Gaussian shift
\[
    P_0=\mathcal{N}(\boldsymbol{\mu}_0,\boldsymbol{\Sigma}),
    \qquad
    P_t=\mathcal{N}
    (\boldsymbol{\mu}_0+t\boldsymbol{v},\boldsymbol{\Sigma}),
\]
where $\boldsymbol{\mu}_0,\boldsymbol{v}\in\mathbb{R}^d$ and 
$\boldsymbol{\Sigma}\in\mathbb{S}_+^d$ remains fixed. In this case,
$\theta_t=(\boldsymbol{\mu}_0+t\boldsymbol{v},\boldsymbol{\Sigma})$ and
\(
    d_W(P_0,P_t)=t\|\boldsymbol{v}\|,\)
so Assumption~\ref{monotonicity} is satisfied.
\end{example}

We  denote the true but unknown distribution shift magnitude by $t_T$, so that the target distribution is $P_T \equiv P_{t_T} $. However, since the true shift magnitude $t_T$ is unknown, we instead specify a ``nominal" magnitude $t$ that may deviate from the truth $t_T$, which
singles out a ``nominal" target distribution \(P_t\) from the known family. This $P_t$  serves as a natural object for both robust methods to calibrate their hyperparameters:
DRO can straightforwardly set \(r_t=d_W(P_S,P_t)\), suggesting that the ambiguity set should extend exactly far enough to cover the nominal target distribution indexed by \(t\);
RS instead calibrates its anticipated loss threshold via 
 $\tau_t=\mathbb E_{P_t}[f(\hat x_{\mathrm{ERM}},z)]$, i.e., the  loss of the source-trained ERM under the nominal target $P_t$.\footnote{Throughout this comparison, we focus on candidate targets under which the source-trained ERM solution deteriorates in the specified direction. This is the main regime in which adopting a robust method is substantively motivated to handle distribution shifts. In this regime, the calibrated reference level $\tau_t$ is not below the relevant empirical source optimum $\inf_{x\in\mathcal X}\mathbb{E}_{\hat{P}_n}[f(x,z)]$, so the feasibility of the corresponding RS formulation is not an issue.}  
Thus, the calibration rules \(r_t\) and \(\tau_t\) are obtained from the same hypothetical target, but each is expressed in the native language of the corresponding method: distributional coverage for DRO and anticipated reference loss level for RS.

We first compare the regularization-penalty terms of both methods. 

\begin{proposition}\label{prop:reg_comparison}
Under Assumptions~\ref{assump:regularity} and~\ref{assump:strong convex and lipschitz}, with probability at least \(1-\delta\), we have
\[
    \mathrm{Reg}_{\mathrm{RS}}(\tau_t)+\sup_{P\in \mathcal{B}(P_S,r_t)}\mathbb{E}_P[f(x_S,z)]-\mathbb{E}_{P_t}\!\big[f(x_S,z)\big]
    \leq
    \mathrm{Reg}_{\mathrm{DRO}}(r_t)
    +
    \bar{\rho}_n(\delta),
\]
where \(\mathrm{Reg}_{\mathrm{DRO}}\) and \(\mathrm{Reg}_{\mathrm{RS}}\) are defined in Table~\ref{tab:tradeoff-components}, and \(\bar{\rho}_n(\delta)=2L'\sqrt{\frac{\bar{\mathfrak G}_n(\delta)}{\alpha}}+\bar{\mathfrak G}_n(\delta)\) with \(\bar{\mathfrak G}_n(\delta)=\frac{24}{\sqrt n}\mathcal C(\mathcal A)+M\sqrt{\frac{\log(2/\delta)}{2n}}\).
\end{proposition}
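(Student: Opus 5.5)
The plan is to unpack both regularization penalties from Table~\ref{tab:tradeoff-components} and check that, after rearranging, the claimed inequality reduces to a statement involving only ERM's statistical error. By definition,
\[
\mathrm{Reg}_{\mathrm{RS}}(\tau_t)=\tau_t-\inf_{x\in\mathcal X}\mathbb E_{\hat P_n}[f(x,z)]=\mathbb E_{P_t}[f(\hat x_{\mathrm{ERM}},z)]-\mathbb E_{\hat P_n}[f(\hat x_{\mathrm{ERM}},z)],
\]
and $\mathrm{Reg}_{\mathrm{DRO}}(r_t)=\sup_{P\in\mathcal B(P_S,r_t)}\mathbb E_P[f(x_S,z)]-\mathbb E_{P_S}[f(x_S,z)]$. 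Substituting these, the supremum over $\mathcal B(P_S,r_t)$ appears on both sides and cancels. The statement is therefore equivalent to
\[
\underbrace{\mathbb E_{P_t}\!\big[f(\hat x_{\mathrm{ERM}},z)-f(x_S,z)\big]}_{(\mathrm{I})}
+\underbrace{\mathbb E_{P_S}[f(x_S,z)]-\mathbb E_{\hat P_n}[f(\hat x_{\mathrm{ERM}},z)]}_{(\mathrm{II})}
\le \bar\rho_n(\delta).
\]
I will bound (II) by $\bar{\mathfrak G}_n(\delta)$ and (I) by $2L'\sqrt{\bar{\mathfrak G}_n(\delta)/\alpha}$.

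The probabilistic input is a single uniform deviation event. With probability at least $1-\delta$,
\[
\sup_{x\in\mathcal X}\big|\mathbb E_{P_S}[f(x,z)]-\mathbb E_{\hat P_n}[f(x,z)]\big|\le \bar{\mathfrak G}_n(\delta).
\]
This follows from the same symmetrization, Dudley entropy-integral and McDiarmid argument that underlies Proposition~\ref{f-space, ERM bound}: the Rademacher complexity of $\mathcal A$ is at most $12\,\mathcal C(\mathcal A)/\sqrt n$, and the bounded-differences term with $f\in[0,M]$ gives $M\sqrt{\log(2/\delta)/(2n)}$, the two-sided version costing the factor $2$ inside the log.

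On this event, (II) is handled by optimality of $x_S$:
\[
\mathbb E_{P_S}[f(x_S,z)]\le \mathbb E_{P_S}[f(\hat x_{\mathrm{ERM}},z)]\le \mathbb E_{\hat P_n}[f(\hat x_{\mathrm{ERM}},z)]+\bar{\mathfrak G}_n(\delta),
\]
so $(\mathrm{II})\le\bar{\mathfrak G}_n(\delta)$.

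For (I), Assumption~\ref{assump:strong convex and lipschitz}(a) holds pointwise in $z$, so it transfers to any distribution. In particular it holds under $P_t$, even though $P_t$ has nothing to do with the sample, giving $(\mathrm{I})\le L'\|\hat x_{\mathrm{ERM}}-x_S\|_{\mathcal X}$. Then quadratic growth, Assumption~\ref{assump:strong convex and lipschitz}(b), together with the usual two-step ERM excess-risk argument gives
\[
\tfrac{\alpha}{2}\|\hat x_{\mathrm{ERM}}-x_S\|_{\mathcal X}^2\le \mathcal E(\hat x_{\mathrm{ERM}})-\mathcal E(x_S)\le 2\bar{\mathfrak G}_n(\delta).
\]
This uses the deviation event twice, once at $\hat x_{\mathrm{ERM}}$ and once at $x_S$, and in between the empirical optimality of $\hat x_{\mathrm{ERM}}$. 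Hence $\|\hat x_{\mathrm{ERM}}-x_S\|_{\mathcal X}\le 2\sqrt{\bar{\mathfrak G}_n(\delta)/\alpha}$, and adding the bounds on (I) and (II) gives exactly $\bar\rho_n(\delta)$.

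The main obstacle is term (I). It compares two decisions under the \emph{shifted} distribution $P_t$, about which there is no sample information, so uniform concentration cannot control it directly. The resolution is to route it through decision-space distance via the $z$-uniform Lipschitz bound in $x$, and then control that distance on the source side via quadratic growth. The price is the slower $n^{-1/4}$-type rate reflected in the square-root term of $\bar\rho_n(\delta)$. The remaining care is in the constants: all uses of the deviation event must occur on one event of probability $1-\delta$, which the two-sided uniform bound provides.
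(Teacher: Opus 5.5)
Your proposal is correct and follows the paper's proof essentially step for step. The paper also cancels the supremum (via the nonnegative gap $\Gamma_t$) and reduces the claim to the same two terms. It bounds the $P_t$-term by $L'\|\hat x_{\mathrm{ERM}}-x_S\|_{\mathcal X}$ together with the quadratic-growth stability bound $\|\hat x_{\mathrm{ERM}}-x_S\|_{\mathcal X}\le 2\sqrt{\bar{\mathfrak G}_n(\delta)/\alpha}$. It bounds the source-side term by the uniform deviation $\bar{\mathfrak G}_n(\delta)$, with everything holding on a single event of probability at least $1-\delta$.
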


Proposition~\ref{prop:reg_comparison} shows that the DRO regularization penalty is asymptotically larger than the RS regularization penalty by a nonnegative gap $\sup_{P\in \mathcal{B}(P_S,r_t)}\mathbb{E}_P[f(x_S,z)]-\mathbb{E}_{P_t}\!\big[f(x_S,z)\big]$, which captures the difference between the worst-case loss of the source optimizer over the ambiguity set $\mathcal{B}(P_S,r_t)$ and its loss under the nominal target \(P_t\). This gap could be substantial if \(P_t\) is far from the worst-case distribution within \(\mathcal B(P_S,r_t)\), capturing the potential cost of DRO's hedging against the worst-case.

The sensitivity-to-shift terms of DRO and RS, however, do not admit a uniform ordering in general. 
We therefore consider three cases, distinguished by whether the nominal shift magnitude $t$ is ``under-specified" ($t<t_T$), ``well-specified" ($t=t_T$), or ``over-specified" ($t>t_T$), for which a sharper comparison can be made.

\begin{proposition}\label{prop:underspecified}
    Suppose Assumption~\ref{monotonicity} holds, \(P_T=P_{t_T}\), \(r_t=d_W(P_S,P_t)\). The following sensitivity comparisons hold: (i) If $\frac{d_W(P_t,P_S)}{d_W(P_T,P_S)}
    \leq
    1-\frac{k_{\tau_t}}{L}$, 
    then $\mathrm{Sen}_{\mathrm{RS}}(\tau_t)
    \leq
    \mathrm{Sen}_{\mathrm{DRO}}(r_t)$; (ii) If \(t\ge t_T\), then $0 = \mathrm{Sen}_{\mathrm{DRO}}(r_t)
    \leq
    \mathrm{Sen}_{\mathrm{RS}}(\tau_t)$.
    \end{proposition}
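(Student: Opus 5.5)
The key step is a closed form for the DRO sensitivity term: for any $P_T$ and $r\ge 0$,
\[
\inf_{P\in\mathcal B(P_S,r)} d_W(P_T,P)=\bigl(d_W(P_S,P_T)-r\bigr)_+ .
\]
The lower bound follows from the triangle inequality. For any $P$ in the ball, $d_W(P_T,P)\ge d_W(P_T,P_S)-d_W(P,P_S)\ge d_W(P_S,P_T)-r$; nonnegativity gives the positive part.

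For the upper bound I would use the mixture $P_\lambda=(1-\lambda)P_S+\lambda P_T$. Type-I Wasserstein distance is jointly convex, so mixing the couplings gives
\[
d_W(P_S,P_\lambda)\le\lambda D \quad\text{and}\quad d_W(P_\lambda,P_T)\le(1-\lambda)D,
\]
where $D=d_W(P_S,P_T)$. If $r\ge D$, take $\lambda=1$, so $P_T$ itself lies in the ball and the infimum is $0$. If $r<D$, take $\lambda=r/D$: then $P_\lambda\in\mathcal B(P_S,r)$ and $d_W(P_\lambda,P_T)\le D-r$. This is the only nontrivial step, and the mixture construction avoids any need for geodesic arguments.

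With the closed form, $\mathrm{Sen}_{\mathrm{DRO}}(r_t)=L\,(d_W(P_S,P_T)-d_W(P_S,P_t))_+$ and $\mathrm{Sen}_{\mathrm{RS}}(\tau_t)=k_{\tau_t}\,d_W(P_S,P_T)$.

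For part (i), write $D=d_W(P_S,P_T)$ and $D_t=d_W(P_S,P_t)$. If $D=0$, both sides vanish. Otherwise the hypothesis $D_t/D\le 1-k_{\tau_t}/L$ rearranges to $k_{\tau_t}D\le L(D-D_t)$. The right-hand side is nonnegative because $k_{\tau_t}\ge 0$, so it equals $L(D-D_t)_+=\mathrm{Sen}_{\mathrm{DRO}}(r_t)$. Hence $\mathrm{Sen}_{\mathrm{RS}}\le\mathrm{Sen}_{\mathrm{DRO}}$. I would note that Lemma~\ref{k leq L} guarantees $1-k_{\tau_t}/L\ge 0$, so the condition is not vacuous; also $L>0$ implicitly, and if $L=0$ both terms are zero anyway.

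For part (ii), if $t\ge t_T$, Assumption~\ref{monotonicity} with $u=t_T$ gives $D=d_W(P_0,P_{t_T})\le d_W(P_0,P_t)=r_t$. So $P_T\in\mathcal B(P_S,r_t)$, and the infimum is $0$ by taking $P=P_T$; this needs only the easy direction of the closed form. Then $\mathrm{Sen}_{\mathrm{RS}}(\tau_t)=k_{\tau_t}D\ge 0$, which completes the comparison.
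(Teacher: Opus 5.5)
Your proof is correct and matches the paper's argument. Part (i) rests on the triangle-inequality bound $\mathrm{Sen}_{\mathrm{DRO}}(r_t)\ge L\,(d_W(P_S,P_T)-r_t)$ combined with the rearranged hypothesis, and part (ii) uses monotonicity to place $P_T$ inside $\mathcal B(P_S,r_t)$. Your mixture construction for the exact closed form is valid but unnecessary, since the inequality in (i) uses only the lower bound on the infimum.
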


Part (i) corresponds to cases where the distribution shift magnitude is sufficiently under-specified (i.e., $t$ is sufficiently smaller than $t_T$), which formally states that
\(\frac{d_W(P_t,P_S)}{d_W(P_T,P_S)}\leq 1-\frac{k_{\tau_t}}{L}<1\).\footnote{A sufficient condition for the under-specified cases is \(\frac{d_W(P_t,P_S)}{d_W(P_T,P_S)}\leq 1-\frac{k_{\tau_0}}{L}\)  if \(\mathbb E_{P_t}[f(\hat x_{\mathrm{ERM}},z)]\) increases monotonically in \(t\) (i.e., as the distribution keeps shifting, the ERM learned on the source continuously deteriorates).}
Combining it with Proposition~\ref{prop:reg_comparison},  the resulting trade-off bound  favors RS  up to the statistical error \(\bar\rho_n(\delta)\).

In contrast,  part (ii) of Proposition~\ref{prop:underspecified} favors DRO for the sensitivity-to-shift term, when the shift magnitude is well-specified or over-specified and thus the ambiguity set already covers the true target distribution. However, Proposition~\ref{prop:reg_comparison} shows that the regularization-penalty term
 favors RS, since DRO still hedges against the worst-case distribution in the entire ball
$\mathcal{B}(P_S,r_t)$, whereas RS is anchored to the nominal target $P_t$. Therefore, in the
well-specified and over-specified regimes, the comparison of the trade-off terms between
$\mathrm{TO}_{\mathrm{DRO}}(r_t)$ and $\mathrm{TO}_{\mathrm{RS}}(\tau_t)$ is undecided.
However, as $t$
becomes increasingly over-specified, 
the gap between the regularization penalty terms ($\mathrm{Reg}_{\mathrm{DRO}}(r_t)$ and $\mathrm{Reg}_{\mathrm{RS}}(\tau_t)$) likely  increases, which    explains the deterioration  of DRO's numerical performance  in Sections~\ref{subsection:lr-know-direction} and \ref{sec:lot_sizing}.  

We finally note that  when the shift direction aligns with the worst-case distributional direction, referred to as the \emph{adversarial scenario}, Appendix \ref{sec:adversarial} shows that DRO may have better trade-off performance because it is designed to handle worst-case distributions.

\subsection{Simulation Studies}\label{sec:simulation}

We  present  a two-product Gaussian newsvendor problem to numerically validate the theoretical findings. 
With ERM as the baseline, we evaluate the performance of DRO and RS under two partial-information regimes: (i) known shift magnitude with unknown direction, and (ii) known shift direction with unknown magnitude. In both regimes, the hyperparameters of DRO and RS are calibrated from the available shift information following rules in Sections~\ref{sec:known_shift} and~\ref{sec:known_direction}.

\subsubsection{Setup.}
Consider a two-product Gaussian newsvendor problem. Let the random variable be the demand vector
\(z=(z_1,z_2)\in\mathbb R^2\). The source distribution of  demand $z$ is
\(
    P_S=N(\mu_S,\Sigma),
\)
where the covariance matrix \(\Sigma\) is fixed. From the source distribution \(P_S\), we observe
\(n\) i.i.d.~samples \(\{z_i\}_{i=1}^n\) and estimate the source mean by
\(
    \hat\mu=\frac1n\sum_{i=1}^n z_i .
\)
We use
\(
    P_{\hat\mu}=N(\hat\mu,\Sigma)
\)
 as the distribution used in ERM, the center of the Wasserstein ball in the DRO framework and the fitted nominal distribution in the RS framework.
For a decision \(x=(x_1,x_2)\), we consider the two-product newsvendor loss
\[
    f(x,z)
    =
    \sum_{j=1}^2
    h_j(x_j-z_j)_+
    +
    b_j(z_j-x_j)_+,
\]
where \(h_j\) is the overage cost and \(b_j\) is the underage cost of product \(j\). 
Following the standard newsvendor interpretation, we take underage costs to be larger than overage costs, reflecting that unmet demand is typically more costly than leftover inventory. 
The  set of feasible decisions is  
\(
    \mathcal X
    =
    \{x\in\mathbb R_+^2:x_1+x_2\le C\},
\)
where \(C\) is the total capacity. For any mean vector \(\mu\), define the loss
\[
    L(x,\mu)
    :=
    \mathbb E_{z\sim N(\mu,\Sigma)}[f(x,z)].
\]
In our numerical experiments, we set
\(
    \mu_S=(50,50)^\top,
    \Sigma
    =
    2^2
    \begin{pmatrix}
    1 & 0.3\\
    0.3 & 1
    \end{pmatrix},
 \)
and use
\(
    h=(1,1)^\top,
    b=(3,5)^\top,
    C=105
\) unless otherwise specified.

We model distributional shifts through a Gaussian mean-shift family.\footnote{In contrast, the  type-I Wasserstein DRO under unrestricted shifts degenerates to the ERM problem for the newsvendor loss \citep[Remark~6.7]{esfahani2015data}.} Specifically, any target distribution considered in the numerical study takes the form $P_T=N(\mu_T,\Sigma)$,  where the covariance matrix \(\Sigma\) remains fixed and the discrepancy between the source and target distributions arises only through the mean shift \(\mu_S\rightarrow \mu_T\), yielding $    d_W\bigl(N(\mu_S,\Sigma),N(\mu_T,\Sigma)\bigr)
    =
    \|\mu_S-\mu_T\|_2.$ This mean-shift representation allows us to separate the magnitude and direction of a distributional shift: given  a unit direction \(u\) and a magnitude \(t\ge 0\), the corresponding shifted distribution can be written as $P_t=N(\mu_t,\Sigma)$ for $\mu_t=\mu_S+tu$, 
with \(P_0=P_S\). In Scenario I below, we  fix the known shift magnitude and sample an unknown direction to create distributional shift. In Scenario II, we fix the direction and vary the specified magnitude.

We consider the following three methods:
\begin{enumerate}
    \item ERM (baseline):
    \(
        \hat x_{\mathrm{ERM}}
        =
        \arg\min_{x\in\mathcal X} L(x,\hat\mu).
    \)

    \item Gaussian-family DRO:
    \(
        \hat x_{\mathrm{DRO}}(r)
        =
        \arg\min_{x\in\mathcal X}
        \sup_{\|\mu-\hat\mu\|_2\le r}
        L(x,\mu).
    \) That is, the corresponding Wasserstein ball $\mathcal B_G(P_{\hat\mu},r)$ is  centered at \(P_{\hat\mu}\) and defined as
\[
    \mathcal B_G(P_{\hat\mu},r)
    :=
    \{P_\mu=N(\mu,\Sigma):\|\mu-\hat\mu\|_2\le r\}.
\]
    
    \item Gaussian-family RS: for a satisficing threshold \(\tau\), we solve
    \[
    \begin{aligned}
        \hat x_{\mathrm{RS}}
        \in
        \arg\min_{x\in\mathcal X,\ k\ge0}
        \quad & k \\
        \text{s.t.}\quad
        & L(x,\mu)
        \le
        \tau+k\|\mu-\hat\mu\|_2.
        \\
    \end{aligned}
    \]
\end{enumerate}
We evaluate them by the expected loss of their optimizers in the target environment,
    $L(\hat x,\mu_T)
    =
    \mathbb E_{P_T}[f(\hat x,z)] $,
under two distributional shift scenarios with separate partial shift information.

\subsubsection{Scenario I: known shift magnitude.}\label{sec: simulation_Known Shift Magnitude}

With a known shift magnitude, measured as the Wasserstein distance between the source and target Gaussian distributions, we set this value as the radius $r$ in the DRO framework. 
For RS, following the calibration rule discussed in Section~\ref{sec:known_shift}, we choose the satisficing threshold \(\tau_r\) to reflect the worst-case performance of the ERM solution over the Gaussian-family ball within radius \(r\):
\begin{equation}
    \tau_r
    =
    \sup_{P\in\mathcal B_G(P_{\hat\mu},r)}
    \mathbb E_{P}[f(\hat x_{\mathrm{ERM}},z)]
    =
    \sup_{\|\mu-\hat\mu\|_2\le r}
    L(\hat x_{\mathrm{ERM}},\mu),
    \label{eq:tau_known_mag_newsvendor}
\end{equation}
where the last equality follows from the definition of \(\mathcal B_G(P_{\hat\mu},r)\).

We now evaluate the performance of each optimizer on the target distribution. Specifically, given a shift magnitude \(r\), we conduct Monte Carlo simulations with \(500\) replications. In each replication, we sample a unit vector \(u\in\mathbb S^1\) to represent the shift direction and construct the target distribution as
\(
    P_T(u,r)=N(\mu_S+r u,\Sigma).
\)
By construction, for every sampled direction,
\(
    d_W(P_S,P_T(u,r))=r.
\)
We then evaluate ERM, DRO, and RS by their target expected losses under \(P_T(u,r)\). 

\begin{figure}[t]
    \centering
    \includegraphics[width=0.85\linewidth]{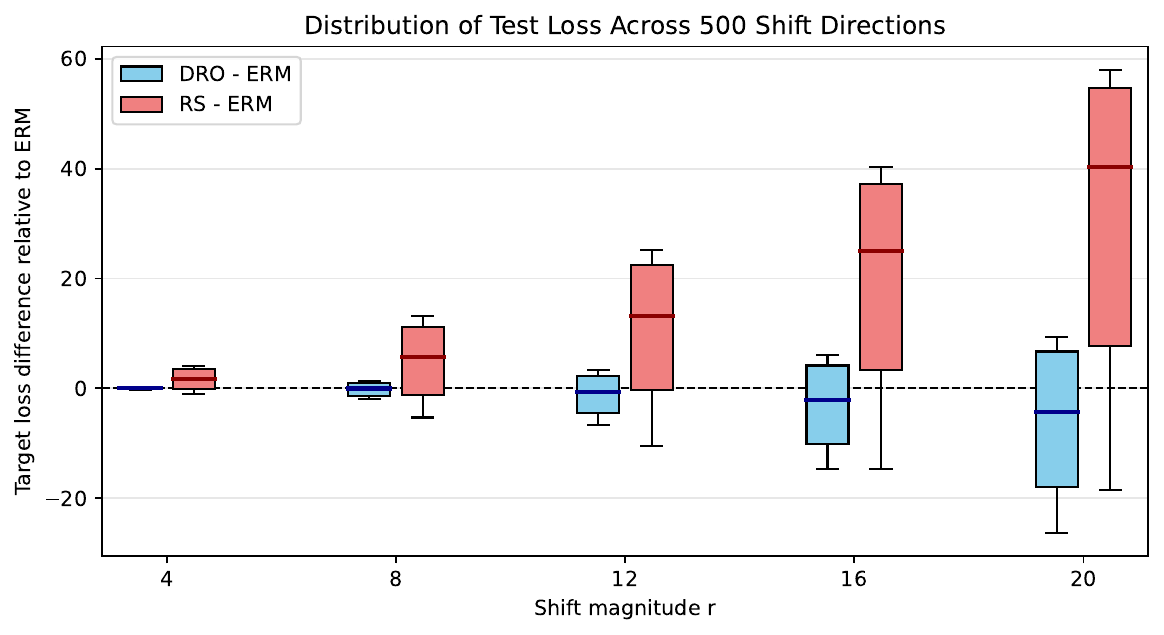}
    \caption{Distribution of loss differences relative to ERM across 500 randomly
sampled shift directions in the known-magnitude regime. For each shift magnitude $r$, the boxplots summarize the target-loss differences of DRO and RS relative to ERM, with both methods calibrated using
the known magnitude. Negative values indicate lower target loss than ERM.}
    \label{fig:known_mag_newsvendor}
\end{figure}

Figure~\ref{fig:known_mag_newsvendor} reports the target-loss differences of DRO and RS relative to ERM across  \(500\) simulated shift directions.  DRO has overall lower realized target loss than RS across the reported magnitudes, which is consistent with Proposition~\ref{prop:known_shift}: when only the shift magnitude is available, DRO is  favored as it uses the magnitude directly through its ambiguity radius, whereas RS is calibrated through a worst-case ERM-loss threshold over the same Gaussian-family ball.

Moreover, RS with the calibrated hyperparameter performs overall  worse than ERM. It also significantly deteriorates as the shift magnitude increases, while DRO remains  more competitive. The DRO--ERM comparison reflects the asymmetric cost structure of the newsvendor problem. Since product \(2\) has a larger underage cost, increasing the DRO radius leads the robust solution to order more of product \(2\). This adjustment is not beneficial for every possible shift direction. If the target shift mainly increases the demand for product \(1\), allocating more inventory to product \(2\) may hurt performance. However, if the target shift increases the demand for product \(2\), the same adjustment can substantially reduce shortage loss. Because the shortage cost of product \(2\) is high, the improvement in these directions is larger than the loss incurred in directions where the adjustment is less useful. This explains why the DRO boxplots move downward relative to zero as the shift magnitude increases, even though the improvement is not uniform across all directions.
\subsubsection{Scenario II:  known shift direction.}\label{subsection:lr-know-direction}

Let \(u\in\mathbb S^1\) denote the known mean-shift direction and consider the shifted distribution family $P_t=N(\mu_t,\Sigma)$ where $\mu_t=\mu_S+t u$ and \(t\) quantifies the shift magnitude. 
We set the true target distribution as
\(
    P_T=P_{t_T},
\)
with true shift magnitude \(t_T=16\). Since the actual shift magnitude is unknown, we consider nominal magnitudes 
$t=\alpha t_T$, with $
    \alpha\in\left\{\frac13,\frac23,1,2,3,4\right\}$,
which correspond to under-specified ($\alpha<1$), well-specified ($\alpha=1$), and over-specified  ($\alpha>1$) regimes.
For DRO, we use the radius $r_t$ corresponding to the 
nominal magnitude $t$, which equals $t$ under the Gaussian mean-shift parameterization.
For RS, since the shift direction is known, we calibrate the satisficing threshold by evaluating the ERM solution at the nominal shifted mean $\tau_{t}
=
L\bigl(\hat x_{\mathrm{ERM}},\,\hat\mu+t u\bigr)$.

We focus on demand-increasing directions, i.e., directions in the first quadrant, which corresponds to the demand-increasing shift as the  main robustness concern  in the newsvendor problem.\footnote{In contrast, demand-decreasing shifts mainly create overage losses, which are less costly under cost choice \(b_j>h_j\).} 
Since product \(2\) has a larger underage cost, robust methods tend to allocate more protection to product \(2\). Shifts that mainly increase the demand of product \(2\) are more aligned with the robust adjustment, while shifts that mainly increase the demand of product \(1\) are less aligned with it. To capture whether the known demand-shift direction is favorable or unfavorable to the inventory protection induced by the robust methods, we  consider   shift directions falling into the  two regimes below:
\begin{itemize}
    \item \textit{High alignment}: the demand shift direction is close to \(e_2=(0,1)^\top\), with
    \(
        \langle u,e_2\rangle\ge 0.9,
    \)
    so the target demand mainly increases for product \(2\);

    \item \textit{Low alignment}: the demand shift direction is close to \(e_1=(1,0)^\top\), with
    \(
        \langle u,e_1\rangle\ge 0.9,
    \)
    so the target demand mainly increases for product \(1\).
\end{itemize}
For each regime, we sample \(50\) directions and compute the loss differences of DRO and RS relative to ERM across the nominal values of $t$ (or corresponding \(\alpha\)). The target distribution in each replication is
\(
    P_T=N(\mu_S+t_T u,\Sigma),
\)
while the hyperparameters of DRO and RS are calibrated using the pre-specified magnitude \(t = \alpha t_T\) and the known direction \(u\). 

\begin{figure}[t]
    \centering
    \includegraphics[width=\linewidth]{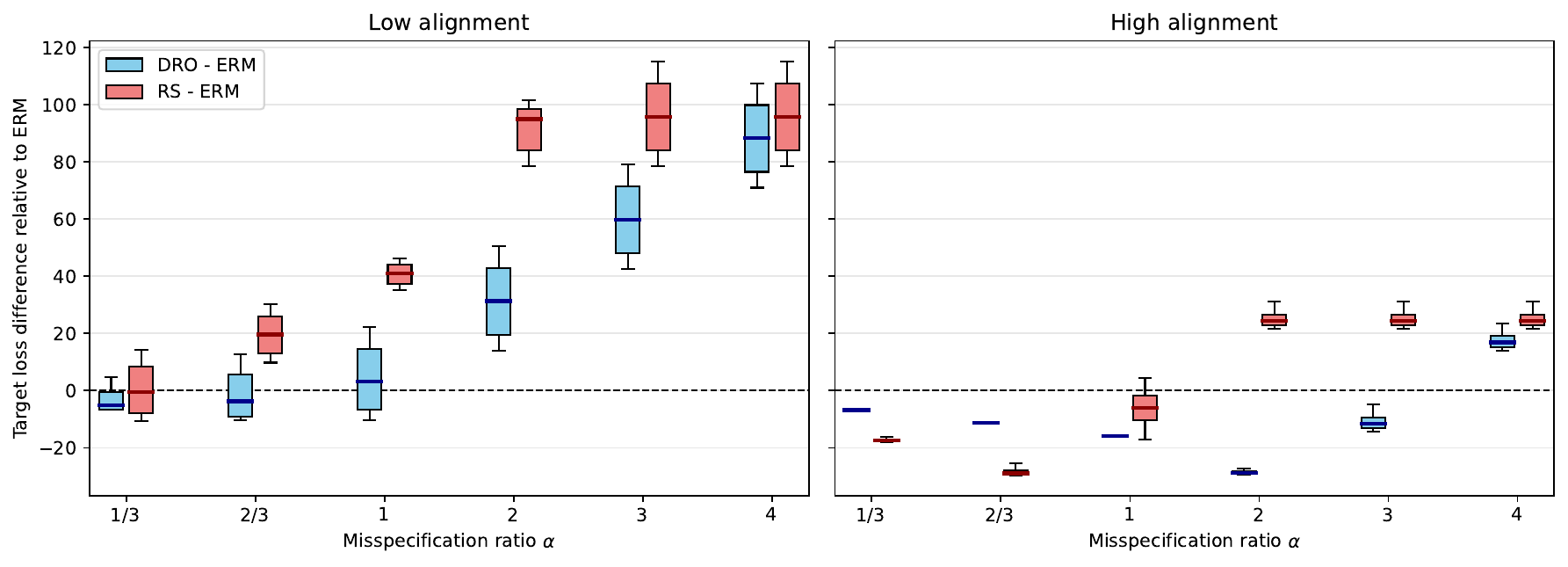}
    \caption{Loss differences relative to ERM across 50 randomly sampled shift directions in each alignment regime under the known-direction setting. DRO and RS are calibrated using a specified shift magnitude, while the true magnitude is fixed. Directions are sampled from two demand-increasing regimes: low alignment, where demand mainly increases for product \(1\), and high alignment, where demand mainly increases for product \(2\). The cases \(\alpha<1\), \(\alpha=1\), and \(\alpha>1\) correspond to under-specification, correct specification, and over-specification, respectively.}
    \label{fig:known_direction_newsvendor}
\end{figure}

Figure~\ref{fig:known_direction_newsvendor} shows that the performance of robust methods depends strongly on whether the known direction agrees with the protection induced by robustness. In the low-alignment regime, the target demand mainly increases for product \(1\), while the robust methods tend to protect product \(2\). As a result, the robust adjustment is directionally mismatched. When the specified magnitude is small, DRO remains close to ERM and can slightly improve upon it. As the nominal magnitude increases, however, this mismatch is amplified: RS deteriorates rapidly, and DRO also becomes increasingly worse under large over-specification.

In the high-alignment regime, the target demand mainly increases for product \(2\), which is the product that robust methods tend to protect because of its larger underage cost. Hence being robust is directionally favorable. When the shift magnitude is largely under-specified, both robust methods can improve upon ERM in this regime, and RS can be better than DRO. 
When the magnitude is over-specified, DRO in turn becomes more favorable than RS. 
These patterns are consistent with Propositions~\ref{prop:reg_comparison} and~\ref{prop:underspecified}: when the shift magnitude is sufficiently under-specified, RS can be favored over DRO; 
once the specified radius exceeds the true shift magnitude, the DRO ambiguity set covers the target distribution and its sensitivity term vanishes, while 
 RS still retains a non-vanishing sensitivity term of the form \(k_{\tau_t}t_T\) and is outperformed by DRO.
However, as the nominal magnitude becomes too large, the trade-off changes. 
According to the analysis below  Proposition \ref{prop:underspecified}, increasing the specified magnitude enlarges the ambiguity set used by DRO and may substantially increase its worst-case regularization penalty. 
In this regime, even though the DRO sensitivity term remains zero after the target is covered, the growing regularization gap becomes dominant. 
This is reflected in the upward movement of the DRO boxplots under  over-specification.

\section{Application to  Network Lot-sizing}\label{sec:lot_sizing} 

We now  apply our analytical framework to a network lot-sizing problem, in which the shift direction is known (demand shifts upward), but its magnitude is not.  

\subsection{Setup}\label{sec:lot_sizing_setup}
\subsubsection*{The network lot-sizing problem.}
Consider  $N$ geographically dispersed stores. Before the demand is realized, an initial inventory allocation is made across the stores. After observing the realized demand, the system can either rebalance inventory through transportation between stores or place emergency local orders to satisfy unmet demand. The total system cost consists of three components: the initial-ordering cost, the transportation cost from transshipment, and the emergency ordering cost.

The initial ordering decision is denoted as $x=(x_i)_{i\in[N]}$, where $x_i$ represents the pre-stocked quantity at store $i$ and $[N]:=\{1,\ldots,N\}$. Each store has a capacity limit $\delta=(\delta_i)_{i\in[N]}$, satisfying $x_i\leq \delta_i$ for all $i\in[N]$. 
After the realization of demand, the system makes a recourse decision $(y,w)$ to satisfy demand, where $y=(y_{ij})_{i,j\in[N]}$ denotes the amount transshipped from store $i$ to store $j$, and $w=(w_i)_{i\in[N]}$ represents the emergency order at store $i$. The cost parameters are as follows: $c=(c_i)_{i\in[N]}$ for per-unit initial orders, $l=(l_i)_{i\in[N]}$ for per-unit emergency orders, and $d=(d_{ij})_{i,j\in[N]}$ for per-unit transportation costs between stores. For brevity, we denote $d_i^\top y_i:=\sum_{j\in[N]}d_{ij}y_{ij}$.

The total system cost given an initial order $x$ and a realized demand vector $\boldsymbol z=(z_i)_{i\in[N]}$ is determined from the following
two-stage linear optimization model:
\begin{align}\label{lot_sizing}
    f(x,\boldsymbol z)
    =
    c^\top x
    +
    \min_{(y,w)\in\mathcal Y_{(x,\boldsymbol z)}}
    \left\{
        \sum_{i\in[N]}d_i^\top y_i+l^\top w
    \right\},
\end{align}
where the first-stage decision satisfies
\(x \in \mathcal{X} := \{x \in \mathbb{R}_+^N \mid x_i \leq \delta_i,\ 
\forall i \in [N]\}\), and the feasible set of second-stage decisions is
\(
\mathcal Y_{(x,\boldsymbol z)}
=
\{
(y,w)\in\mathbb R^{N\times N}_+\times\mathbb R^N_+
\mid
x_i+w_i+\sum_{j\in[N]}y_{ji}-\sum_{j\in[N]}y_{ij}-z_i\geq 0,\ \forall i\in[N]
\}.
\)
The first-stage decision $x$ specifies the initial ordering for the pre-positioned inventory across stores, while the second-stage decisions $(y,w)$ capture the adaptive replenishment response after demand is realized. The objective is to determine an initial stocking decision $x$ that minimizes the expected total cost (consisting of the initial-ordering cost $c^\top x$ and the operational cost $\min_{(y,w)\in\mathcal Y_{(x,\boldsymbol z)}}
\{
    \sum_{i\in[N]} d_i^\top y_i+l^\top w
\}$) under possible distributional shifts in $\boldsymbol z$. The tractable approximate reformulations for solving the DRO and RS optimizers are provided in Appendix~\ref{appd:network lotsizing reformulation}.

We instantiate the network lot-sizing problem as follows. Consider $N=10$ stores. For each store $i\in[N]$, the initial ordering quantity $x_i\geq0$ is constrained by the capacity limit $\delta_i=40$. We define the source distribution $P_S$ as the joint distribution of the random demand vector $\boldsymbol z=(z_i)_{i\in[N]}$, where customer demands $z_i$, $i\in[N]$, are independently distributed as $\mathcal N(20,50)$.
When  sampling demand realizations, we truncate negative values  to ensure non-negativity.\footnote{Sampling negative values occurs rarely, with probability approximately $0.0024$.} 
We draw $M$ i.i.d.\ source-demand realizations
$\{\widehat{\boldsymbol z}_S^{(m)}\}_{m\in[M]}$ from $P_S$, where $\widehat{\boldsymbol z}_S^{(m)}
=(\hat z_{S,i}^{(m)})_{i\in[N]}\in\mathbb R_+^N$
denotes the $m$-th joint demand realization across all stores. 
These source realizations serve as the training samples for computing the initial ordering decisions under ERM, DRO, and RS.
We assume a common unit initial-order price $c_i$ for all stores and vary it over $c_i\in\{5,10,20\}$. The emergency-order price is fixed at $l_i=30$. Store locations are randomly assigned on a $10\times10$ grid. Let $D_{ij}$ denote the Euclidean distance between stores $i$ and $j$; the per-unit transportation cost $d_{ij}$ is set proportional to this distance, with $d_{ij}=2D_{ij}$ for all $i,j\in[N]$.

\subsubsection*{Positive demand mean shifts.}
We consider distributional shifts in which the demand distribution shifts along a mean-increasing direction. In this setting, the baseline ERM model is trained on a source distribution with lower average demand than that of the target environment where the demand is realized. It thus tends to underestimate true demand, leading to insufficient initial ordering and a higher likelihood of incurring additional operational costs from lateral transportation and emergency orders.  The robust  methods DRO and RS are applied to  account for such upward demand shifts.

Specifically, the true target demand over all stores follows a multivariate normal distribution
\(P_T:=
    \mathcal N(20\cdot\mathbf 1_N+t_T d,\ 50I_N),
\)
where the shift direction $d=(d_1,\ldots,d_N)$ is randomly sampled from the positive orthant such that $d_i\ge0$ for all $i$ and $\|d\|_2=1$. The true shift magnitude is fixed at $t_T=4$. We assume that the shift direction is known, while the shift magnitude remains unknown.

Both robust models choose their hyperparameters based on a nominal target distribution
\(P_t:=
    \mathcal N(20\cdot\mathbf 1_N+t d,\ 50I_N)
\)
with nominal shift magnitude $t$ as in Section~\ref{sec:known_direction}.
The learned decisions are evaluated using an independent test set $\{\widehat{\boldsymbol z}_T^{(k)}\}_{k\in[K]}$, where $K$ denotes the
number of target-demand realizations used for evaluation and $\widehat{\boldsymbol z}_T^{(k)}$, $k\in[K]$, are i.i.d.\ realizations
from $P_T$. We set $K=1000$.
For each learned decision $\hat x$, we compute the corresponding test costs $\{f(\hat x,\widehat{\boldsymbol z}_T^{(k)})\}_{k\in[K]}$.
In the experiments, we vary the specified shift magnitude $t$ from values below the true shift magnitude $t_T=4$ (representing under-specified shifts) to values above $t_T=4$ (representing over-specified shifts). 

\begin{figure}[t]
  \centering
  \includegraphics[width=\linewidth]{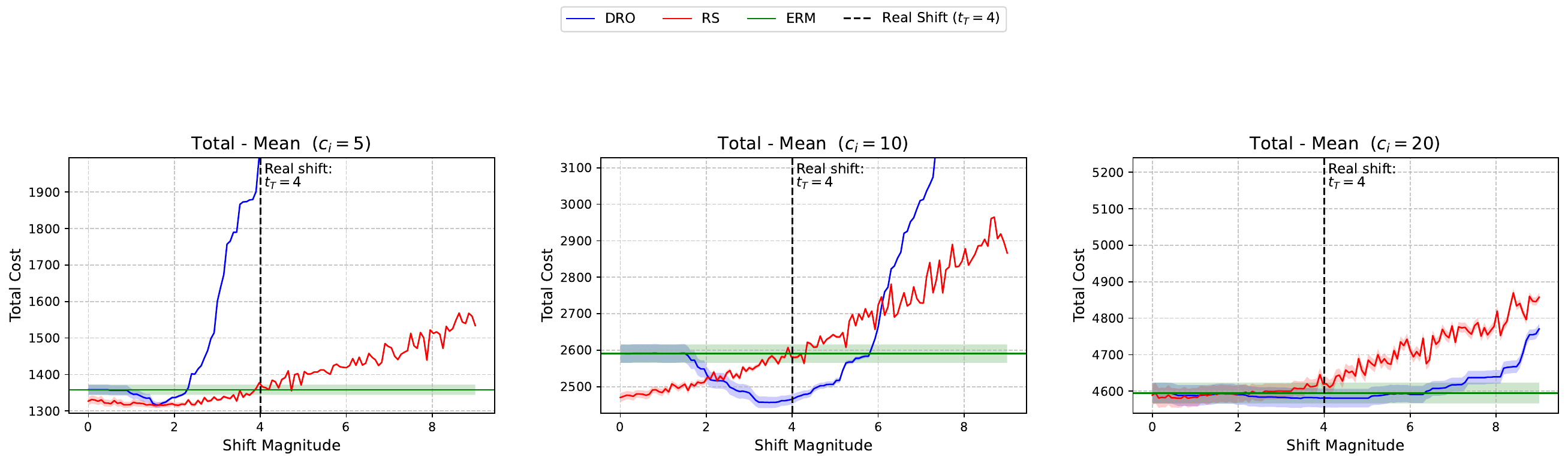}
    \caption{Mean total costs of different methods for the network lot-sizing problem across different  shift magnitude specifications ($t_T=4$ corresponds to the truth). The shift direction is known and used to calibrate DRO and RS hyperparameters, while the shift magnitude is unknown and specified along the horizontal axis. We consider three different levels of per-unit initial-ordering cost $c_i \in \{5, 10, 20\}$. }
  \label{fig:total-mean}
\end{figure}

\subsection{Model Performance}\label{sec:positive mean shift}
\subsubsection*{
Total cost.}

Figure~\ref{fig:total-mean} reports the mean total cost of ERM, DRO, and RS. In the under-specified regime \((t<t_T)\), robust methods generally improve upon ERM because the source-trained ERM decision tends to under-order for upward demand shifts. With large under-specification, RS attains a lower total cost than DRO. This pattern is  consistent with  Proposition~\ref{prop:underspecified}: when the nominal shift magnitude is small, the DRO ambiguity set does not  cover the true target distribution, whereas RS can still reduce the effective sensitivity to the shift through its fragility measure.

The performance changes in the over-specified regime \((t>t_T)\). RS gradually orders more inventory as the nominal demand level increases and can eventually become worse than ERM. DRO exhibits a more cost-dependent pattern. When the unit initial-ordering cost \(c_i\) is small, DRO becomes highly conservative and performs substantially worse than both RS and ERM. As \(c_i\) increases, this disadvantage diminishes, and DRO can eventually outperform RS.
We shall further examine this initial-ordering-cost-dependent performance in the subsequent sections.

Appendix~\ref{app: experiment} provides additional numerical results on quantile-based metrics. 
Results show that RS achieves better \(95\)th-percentile performance than DRO for  upward demand shifts, consistent with the empirical findings of \citet{long2023robust} in settings without distributional shifts.

\subsubsection*{Cost decomposition: initial ordering vs.~operational costs.}\label{sec:initial vs operational}

\begin{figure}[t]
    \centering
    \includegraphics[width=0.9\linewidth]{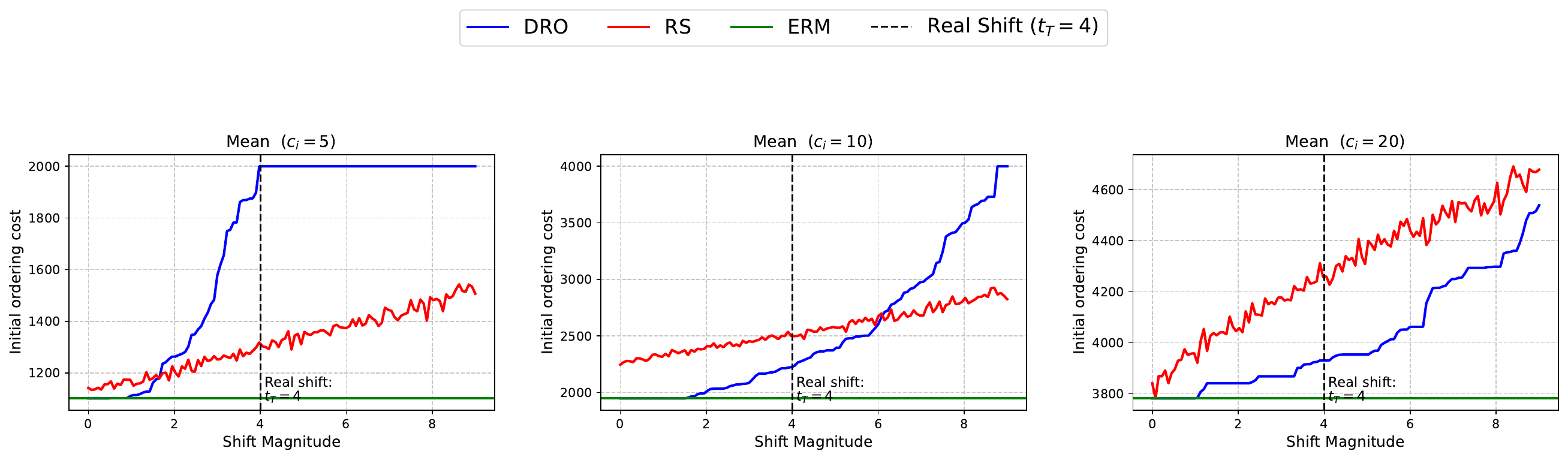}\\[0.5em]
    \includegraphics[width=0.9\linewidth]{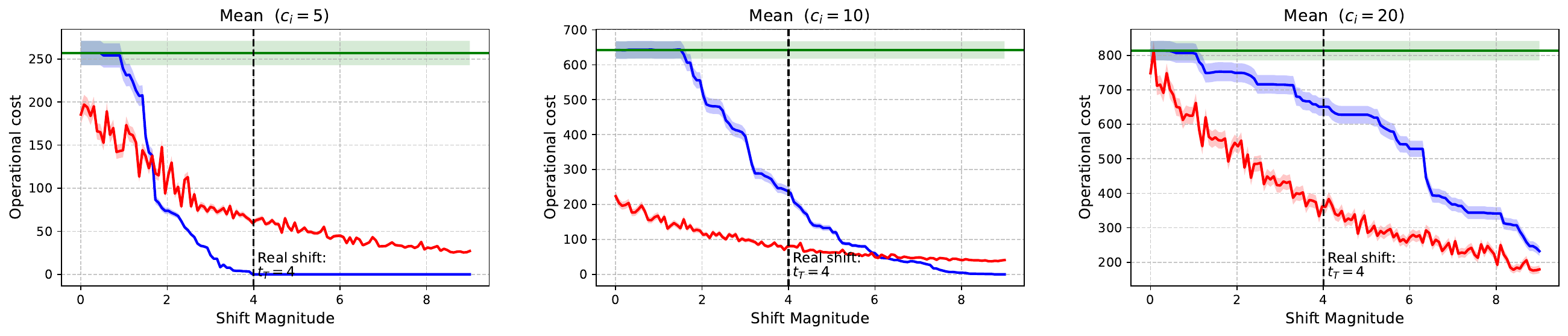}
    \caption{Mean cost decomposition for the network lot-sizing application. The top row reports initial-ordering costs, and the bottom row reports operational recourse costs (transshipment costs plus emergency-order costs). Together, they decompose the total costs in Figure~\ref{fig:total-mean}.}
    \label{fig:initial and operational panels}
\end{figure}

Figure~\ref{fig:initial and operational panels} decomposes the mean total cost in Figure~\ref{fig:total-mean} into  initial-ordering costs and  operational costs after demand is realized.\footnote{The corresponding \(95\%\) quantiles, reported in Appendix~\ref{appendix:95 Quantiles of Initial Ordering and Operational Costs}, show similar patterns.} Relative to ERM, both DRO and RS typically incur higher initial-ordering costs but lower operational costs, indicating that robust methods substitute preventive upfront inventory for corrective second-stage transshipment and emergency ordering.
The two methods differ in how this substitution responds to the nominal shift magnitude \(t\). RS changes almost linearly: as \(t\) increases, its initial-ordering cost rises steadily and its operational cost falls steadily. DRO is more sensitive to the unit ordering cost \(c_i\). 
When \(c_i\) is small, DRO rapidly front-loads inventory; in the \(c_i=5\) panels, the initial-ordering cost quickly reaches a high plateau, whereas the operational cost is already close to zero, so further increases in \(t\) mostly reflect over-protection rather than additional recourse-cost savings.
When \(c_i\) is large, DRO reacts more cautiously because the cost of excess initial inventory becomes more important.

This decomposition explains the total-cost patterns in Figure~\ref{fig:total-mean}. In the under-specified regime, RS achieves a better balance by moderately increasing initial orders while substantially reducing operational costs. For small or intermediate \(c_i\), DRO may become too aggressive, especially in the over-specified regime: its initial-ordering cost rises sharply without a proportional operational-cost reduction. As \(c_i\) increases, however, DRO's upfront ordering becomes less aggressive, while RS continues to scale approximately linearly with \(t\); consequently, DRO can become preferable at higher unit ordering costs.

\subsubsection*{Optimization correspondence and shift-calibrated hyperparameters.}

We now compare two ways of relating the DRO radius \(r\) and the RS reference value \(\tau\). First, following \citet{wang2023equivalence}, we derive the optimization-based correspondence between \(r\) and \(\tau\), where the two methods produce the same solution. Second, we overlay the shift-calibrated hyperparameter pairs generated by the calibration rule in Section~\ref{sec:known_direction}. Figure~\ref{6.2 hyperparameter correspondence} uses this comparison to help understand how the relative performance of DRO and RS changes with the initial-ordering cost \(c\). Let \(\tau^{\mathrm{eq}}(r)\) denote the optimization-equivalent RS threshold on the blue curve, i.e., the RS threshold that would reproduce the DRO solution with different radius \(r\).
Let
$\tau^{\mathrm{shift}}(r)$ denote the shift-calibrated RS threshold on the
orange curve, obtained directly from each nominal target distribution
$P_t$ that induces the DRO radius $r = P_W(P_S, P_t)$, over the same range of $r$.
For small $c_i=5$, the blue curve rises sharply and quickly saturates, while the orange
curve remains much lower over most shift magnitudes. Hence, for the same
specified magnitude, the shift-calibrated RS solution is substantially less
conservative than the solution-equivalent RS representation of DRO. This
explains why DRO overreacts to large specified shifts and incurs high total cost
in Figure~\ref{fig:total-mean}, whereas RS changes more gradually.
For large $c_i=20$, the ordering of the two curves is largely reversed: the
shift-calibrated RS threshold is above the solution-equivalent threshold
corresponding to DRO. In this case, RS induces a larger increase in initial
stocking as the specified magnitude grows, while DRO reacts more moderately.
This explains why RS becomes more costly than DRO for large $c_i$. The case with moderate $c_i=10$ is intermediate, with the two curves
crossing, which is consistent with the mixed performance pattern in Figure~\ref{fig:total-mean}.

\begin{figure}[t]
    \centering
    \includegraphics[width=\linewidth,height=0.2\textheight]{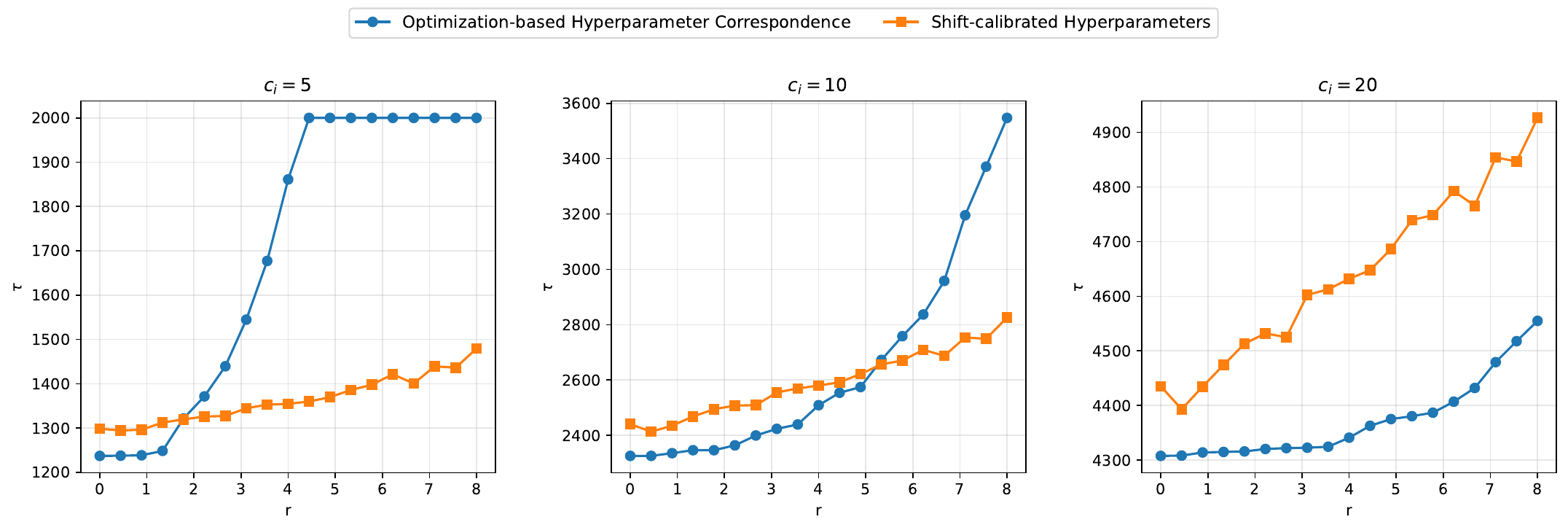}
    \caption{Optimization correspondence and shift-calibrated hyperparameter pairs for DRO and RS. The blue curve reports the optimization-based correspondence of \citet{wang2023equivalence}, where each \((r,\tau)\) pair yields identical DRO and RS solutions. The orange curve reports the parameter pairs induced by our shift-information-directed calibration, where \(r\) and \(\tau\) are both chosen from the same specified shift magnitude as in Section~\ref{sec:known_direction}. } 
    \label{6.2 hyperparameter correspondence}
  \end{figure}
\section{Conclusion}\label{sec:conclusion}

In this paper, we study how Distributionally Robust Optimization (DRO) and Robust Satisficing (RS) address distributional shifts.
We derive generalization error bounds for both methods under distributional shifts, explicitly characterizing the trade-off between reduced sensitivity to shift and regularization cost as a function of their respective hyperparameters.
Building on these results, we provide shift-information-directed hyperparameter calibration and method comparison under scenarios with partial shift information.
Under such hyperparameter calibration, when the shift magnitude is known but the direction is unknown, the resulting trade-off bound favors DRO because the ambiguity radius can directly encode the known magnitude.
When the shift direction is known but the magnitude is unknown, RS can be favored when the magnitude is sufficiently under-specified.
Further simulation studies show target-environment generalization patterns that largely align with these theoretical comparisons, and the network lot-sizing application gives an operational interpretation through the trade-off between preventive initial ordering and corrective operational recourse costs.

 We hope that this work contributes to a more refined, shift-explicit characterization of the finite-sample behavior of DRO and RS and represents a step toward connecting empirical evaluations of target-environment performance under distribution shifts.
 A natural future direction is to investigate whether the analysis can be extended to applications such as medical imaging diagnostics, autonomous-driving perception, supply-chain, disaster-response planning, and fairness-sensitive decision-making, where distribution shifts are well documented and robust learning methods are increasingly studied or deployed. Our network lot-sizing application and numerical experiments illustrate how theory can guide target-environment guarantees and hyperparameter selection in a structured setting. Extending such guidance to these more complex environments remains an important open problem.

\newpage

\bibliographystyle{informs2014} 
\bibliography{reference} 

@inproceedings{an2021generalization,
  author    = {An, Yang and Gao, Rui},
  title     = {Generalization Bounds for ({Wasserstein}) Robust Optimization},
  booktitle = {Advances in Neural Information Processing Systems},
  volume    = {34},
  pages     = {10382--10392},
  year      = {2021},
  url       = {https://proceedings.neurips.cc/paper/2021/hash/55fd1368113e5a675e868c5653a7bb9e-Abstract.html}
}

@inproceedings{azizian2023exact,
  author    = {Azizian, Wa{\"\i}ss and Iutzeler, Franck and Malick, J{\'e}r{\^o}me},
  title     = {Exact Generalization Guarantees for (Regularized) {Wasserstein} Distributionally Robust Models},
  booktitle = {Advances in Neural Information Processing Systems},
  volume    = {36},
  pages     = {14584--14596},
  year      = {2023},
  publisher = {Curran Associates, Inc.},
  doi       = {10.52202/075280-0641},
  url       = {https://proceedings.neurips.cc/paper_files/paper/2023/hash/2f060912eacace9ce61ef339205ec54c-Abstract-Conference.html}
}

@incollection{bayraksan2015data,
  author    = {Bayraksan, G{\"u}zin and Love, David K.},
  title     = {Data-Driven Stochastic Programming Using Phi-Divergences},
  booktitle = {The Operations Research Revolution},
  editor    = {Aleman, Dionne M. and Thiele, Aur{\'e}lie C.},
  series    = {{INFORMS} TutORials in Operations Research},
  pages     = {1--19},
  year      = {2015},
  publisher = {INFORMS},
  address   = {Catonsville, MD},
  doi       = {10.1287/educ.2015.0134}
}

@article{blanchet2019robust,
  author  = {Blanchet, Jose and Kang, Yang and Murthy, Karthyek},
  title   = {Robust {Wasserstein} Profile Inference and Applications to Machine Learning},
  journal = {Journal of Applied Probability},
  volume  = {56},
  number  = {3},
  pages   = {830--857},
  year    = {2019},
  doi     = {10.1017/jpr.2019.49}
}

@article{blanchet2025distributionally,
  author  = {Blanchet, Jose and Li, Jiajin and Lin, Sirui and Zhang, Xuhui},
  title   = {Distributionally Robust Optimization and Robust Statistics},
  journal = {Statistical Science},
  volume  = {40},
  number  = {3},
  pages   = {351--377},
  year    = {2025},
  doi     = {10.1214/24-STS955}
}

@article{chen2023designing,
  author  = {Chen, Shengjie and Chen, Yanju},
  title   = {Designing a Resilient Supply Chain Network under Ambiguous Information and Disruption Risk},
  journal = {Computers \& Chemical Engineering},
  volume  = {179},
  pages   = {108428},
  year    = {2023},
  doi     = {10.1016/j.compchemeng.2023.108428}
}

@article{dai2025assured,
  author  = {Dai, Tinglong and Simchi-Levi, David and Wu, Michelle Xiao and Xie, Yao},
  title   = {Assured Autonomy: How Operations Research Powers and Orchestrates Generative {AI} Systems},
  journal = {Production and Operations Management},
  year    = {2026},
  note    = {Published online May 18, 2026},
  doi     = {10.1177/10591478261455127}
}

@inproceedings{dong2023robustness,
  author    = {Dong, Yinpeng and Kang, Caixin and Zhang, Jinlai and Zhu, Zijian and Wang, Yikai and Yang, Xiao and Su, Hang and Wei, Xingxing and Zhu, Jun},
  title     = {Benchmarking Robustness of {3D} Object Detection to Common Corruptions in Autonomous Driving},
  booktitle = {2023 {IEEE/CVF} Conference on Computer Vision and Pattern Recognition ({CVPR})},
  pages     = {1022--1032},
  year      = {2023},
  publisher = {IEEE},
  doi       = {10.1109/CVPR52729.2023.00105}
}

@article{delage2010distributionally,
  author  = {Delage, Erick and Ye, Yinyu},
  title   = {Distributionally Robust Optimization under Moment Uncertainty with Application to Data-Driven Problems},
  journal = {Operations Research},
  volume  = {58},
  number  = {3},
  pages   = {595--612},
  year    = {2010},
  doi     = {10.1287/opre.1090.0741}
}

@article{deng2023distributionally,
  author  = {Deng, Menghua and Bian, Bomin and Zhou, Yanlin and Ding, Jianpeng},
  title   = {Distributionally Robust Production and Replenishment Problem for Hydrogen Supply Chains},
  journal = {Transportation Research Part E: Logistics and Transportation Review},
  volume  = {179},
  pages   = {103293},
  year    = {2023},
  doi     = {10.1016/j.tre.2023.103293}
}

@article{duchi2021statistics,
  author  = {Duchi, John C. and Glynn, Peter W. and Namkoong, Hongseok},
  title   = {Statistics of Robust Optimization: A Generalized Empirical Likelihood Approach},
  journal = {Mathematics of Operations Research},
  volume  = {46},
  number  = {3},
  pages   = {946--969},
  year    = {2021},
  doi     = {10.1287/moor.2020.1085}
}

@article{esfahani2015data,
  author  = {Mohajerin Esfahani, Peyman and Kuhn, Daniel},
  title   = {Data-Driven Distributionally Robust Optimization Using the {Wasserstein} Metric: Performance Guarantees and Tractable Reformulations},
  journal = {Mathematical Programming},
  volume  = {171},
  number  = {1--2},
  pages   = {115--166},
  year    = {2018},
  doi     = {10.1007/s10107-017-1172-1}
}

@article{fournier2015rate,
  author  = {Fournier, Nicolas and Guillin, Arnaud},
  title   = {On the Rate of Convergence in {Wasserstein} Distance of the Empirical Measure},
  journal = {Probability Theory and Related Fields},
  volume  = {162},
  number  = {3--4},
  pages   = {707--738},
  year    = {2015},
  doi     = {10.1007/s00440-014-0583-7}
}

@article{gao2023finite,
  author  = {Gao, Rui},
  title   = {Finite-Sample Guarantees for {Wasserstein} Distributionally Robust Optimization: Breaking the Curse of Dimensionality},
  journal = {Operations Research},
  volume  = {71},
  number  = {6},
  pages   = {2291--2306},
  year    = {2023},
  doi     = {10.1287/opre.2022.2326}
}

@article{gao2023distributionally,
  author  = {Gao, Rui and Kleywegt, Anton},
  title   = {Distributionally Robust Stochastic Optimization with {Wasserstein} Distance},
  journal = {Mathematics of Operations Research},
  volume  = {48},
  number  = {2},
  pages   = {603--655},
  year    = {2023},
  doi     = {10.1287/moor.2022.1275}
}

@inproceedings{garg2020unified,
  author    = {Garg, Saurabh and Wu, Yifan and Balakrishnan, Sivaraman and Lipton, Zachary C.},
  title     = {A Unified View of Label Shift Estimation},
  booktitle = {Advances in Neural Information Processing Systems},
  volume    = {33},
  pages     = {3290--3300},
  year      = {2020},
  url       = {https://proceedings.neurips.cc/paper/2020/hash/219e052492f4008818b8adb6366c7ed6-Abstract.html}
}

@article{goh2010distributionally,
  author  = {Goh, Joel and Sim, Melvyn},
  title   = {Distributionally Robust Optimization and Its Tractable Approximations},
  journal = {Operations Research},
  volume  = {58},
  number  = {4, Part 1},
  pages   = {902--917},
  year    = {2010},
  doi     = {10.1287/opre.1090.0795}
}

@inproceedings{gulrajani2021search,
  author    = {Gulrajani, Ishaan and Lopez-Paz, David},
  title     = {In Search of Lost Domain Generalization},
  booktitle = {International Conference on Learning Representations},
  year      = {2021},
  url       = {https://openreview.net/forum?id=lQdXeXDoWtI}
}

@inproceedings{hashimoto2018fairness,
  author    = {Hashimoto, Tatsunori and Srivastava, Megha and Namkoong, Hongseok and Liang, Percy},
  title     = {Fairness without Demographics in Repeated Loss Minimization},
  booktitle = {Proceedings of the 35th International Conference on Machine Learning},
  series    = {Proceedings of Machine Learning Research},
  volume    = {80},
  pages     = {1929--1938},
  year      = {2018},
  publisher = {PMLR},
  url       = {https://proceedings.mlr.press/v80/hashimoto18a.html}
}

@techreport{hu2013kullback,
  author      = {Hu, Zhaolin and Hong, L. Jeff},
  title       = {{Kullback--Leibler} Divergence Constrained Distributionally Robust Optimization},
  institution = {The Hong Kong University of Science and Technology},
  year        = {2013},
  note        = {Technical report; first posted on Optimization Online in November 2012},
  url         = {https://optimization-online.org/2012/11/3677/}
}

@article{huang2023distributionally,
  author  = {Huang, Hongxu and Li, Zhengmao and Gooi, Hoay Beng and Qiu, Haifeng and Zhang, Xiaotong and Lv, Chaoxian and Liang, Rui and Gong, Dunwei},
  title   = {Distributionally Robust Energy-Transportation Coordination in Coal Mine Integrated Energy Systems},
  journal = {Applied Energy},
  volume  = {333},
  pages   = {120577},
  year    = {2023},
  doi     = {10.1016/j.apenergy.2022.120577}
}

@article{kantorovich1958space,
  author  = {Kantorovich, Leonid V. and Rubinshtein, S. G.},
  title   = {On a Space of Totally Additive Functions},
  journal = {Vestnik Leningradskogo Universiteta. Seriya Matematiki, Mekhaniki i Astronomii},
  volume  = {13},
  number  = {7},
  pages   = {52--59},
  year    = {1958},
  note    = {In Russian}
}

@inproceedings{koh2021wilds,
  author    = {Koh, Pang Wei and Sagawa, Shiori and Marklund, Henrik and Xie, Sang Michael and Zhang, Marvin and Balsubramani, Akshay and Hu, Weihua and Yasunaga, Michihiro and Phillips, Richard Lanas and Gao, Irena and Lee, Tony and David, Etienne and Stavness, Ian and Guo, Wei and Earnshaw, Berton and Haque, Imran and Beery, Sara M. and Leskovec, Jure and Kundaje, Anshul and Pierson, Emma and Levine, Sergey and Finn, Chelsea and Liang, Percy},
  title     = {{WILDS}: A Benchmark of In-the-Wild Distribution Shifts},
  booktitle = {Proceedings of the 38th International Conference on Machine Learning},
  series    = {Proceedings of Machine Learning Research},
  volume    = {139},
  pages     = {5637--5664},
  year      = {2021},
  publisher = {PMLR},
  url       = {https://proceedings.mlr.press/v139/koh21a.html}
}

@article{lam2019recovering,
  author  = {Lam, Henry},
  title   = {Recovering Best Statistical Guarantees via the Empirical Divergence-Based Distributionally Robust Optimization},
  journal = {Operations Research},
  volume  = {67},
  number  = {4},
  pages   = {1090--1105},
  year    = {2019},
  doi     = {10.1287/opre.2018.1786}
}

@inproceedings{lee2018minimax,
  author    = {Lee, Jaeho and Raginsky, Maxim},
  title     = {Minimax Statistical Learning with {Wasserstein} Distances},
  booktitle = {Advances in Neural Information Processing Systems},
  volume    = {31},
  pages     = {2687--2696},
  year      = {2018},
  publisher = {Curran Associates, Inc.},
  url       = {https://proceedings.neurips.cc/paper_files/paper/2018/hash/ea8fcd92d59581717e06eb187f10666d-Abstract.html}
}

@article{li2023data,
  author  = {Li, Yang and Han, Meng and Shahidehpour, Mohammad and Li, Jiazheng and Long, Chao},
  title   = {Data-Driven Distributionally Robust Scheduling of Community Integrated Energy Systems with Uncertain Renewable Generations Considering Integrated Demand Response},
  journal = {Applied Energy},
  volume  = {335},
  pages   = {120749},
  year    = {2023},
  doi     = {10.1016/j.apenergy.2023.120749}
}

@inproceedings{li2024statistical,
  author    = {Li, Zhiyi and Xu, Yunbei and Zhan, Ruohan},
  title     = {Statistical Properties of Robust Satisficing},
  booktitle = {Proceedings of the 41st International Conference on Machine Learning},
  series    = {Proceedings of Machine Learning Research},
  volume    = {235},
  pages     = {29112--29127},
  year      = {2024},
  publisher = {PMLR},
  url       = {https://proceedings.mlr.press/v235/li24cc.html}
}

@article{long2023robust,
  author  = {Long, Daniel Zhuoyu and Sim, Melvyn and Zhou, Minglong},
  title   = {Robust Satisficing},
  journal = {Operations Research},
  volume  = {71},
  number  = {1},
  pages   = {61--82},
  year    = {2023},
  doi     = {10.1287/opre.2021.2238}
}

@book{quinonero2022dataset,
  editor    = {Qui{\~n}onero-Candela, Joaquin and Sugiyama, Masashi and Schwaighofer, Anton and Lawrence, Neil D.},
  title     = {Dataset Shift in Machine Learning},
  publisher = {MIT Press},
  address   = {Cambridge, MA},
  year      = {2022},
  note      = {Paperback edition},
  url       = {https://mitpress.mit.edu/9780262545877/dataset-shift-in-machine-learning/}
}

@techreport{ramachandra2021robust,
  author      = {Ramachandra, Arjun and Rujeerapaiboon, Napat and Sim, Melvyn},
  title       = {Robust Conic Satisficing},
  institution = {Indian Institute of Management Bangalore},
  type        = {{IIMB} Working Paper},
  number      = {708},
  year        = {2025},
  url         = {https://research.iimb.ac.in/work_papers/10/}
}

@inproceedings{ruan2023robust,
  author    = {Ruan, Haolin and Zhou, Siyu and Chen, Zhi and Ho, Chin Pang},
  title     = {Robust Satisficing {MDP}s},
  booktitle = {Proceedings of the 40th International Conference on Machine Learning},
  series    = {Proceedings of Machine Learning Research},
  volume    = {202},
  pages     = {29232--29258},
  year      = {2023},
  publisher = {PMLR},
  url       = {https://proceedings.mlr.press/v202/ruan23a.html}
}

@inproceedings{saday2023robust,
  author    = {Saday, Artun and Y{\i}ld{\i}r{\i}m, Ya{\c{s}}ar Cahit and Tekin, Cem},
  title     = {Robust {Bayesian} Satisficing},
  booktitle = {Advances in Neural Information Processing Systems},
  volume    = {36},
  pages     = {69253--69269},
  year      = {2023},
  doi       = {10.52202/075280-3032},
  url       = {https://proceedings.neurips.cc/paper_files/paper/2023/hash/daa098aa8e1fc718943ff1ab7b5b30c9-Abstract-Conference.html}
}

@inproceedings{sakaridis2021acdc,
  author    = {Sakaridis, Christos and Dai, Dengxin and Van Gool, Luc},
  title     = {{ACDC}: The Adverse Conditions Dataset with Correspondences for Semantic Driving Scene Understanding},
  booktitle = {2021 {IEEE/CVF} International Conference on Computer Vision ({ICCV})},
  pages     = {10745--10755},
  year      = {2021},
  publisher = {IEEE},
  doi       = {10.1109/ICCV48922.2021.01059}
}

@inproceedings{sagawa2020distributionally,
  author    = {Sagawa, Shiori and Koh, Pang Wei and Hashimoto, Tatsunori B. and Liang, Percy},
  title     = {Distributionally Robust Neural Networks for Group Shifts: On the Importance of Regularization for Worst-Case Generalization},
  booktitle = {International Conference on Learning Representations},
  year      = {2020},
  url       = {https://openreview.net/forum?id=ryxGuJrFvS}
}

@article{shafieezadeh2019regularization,
  author  = {Shafieezadeh-Abadeh, Soroosh and Kuhn, Daniel and Mohajerin Esfahani, Peyman},
  title   = {Regularization via Mass Transportation},
  journal = {Journal of Machine Learning Research},
  volume  = {20},
  number  = {103},
  pages   = {1--68},
  year    = {2019},
  url     = {https://jmlr.org/papers/v20/17-633.html}
}

@book{sugiyama2012machine,
  author    = {Sugiyama, Masashi and Kawanabe, Motoaki},
  title     = {Machine Learning in Non-Stationary Environments: Introduction to Covariate Shift Adaptation},
  publisher = {MIT Press},
  address   = {Cambridge, MA},
  year      = {2012},
  doi       = {10.7551/mitpress/9780262017091.001.0001}
}

@inproceedings{sutter2021robust,
  author    = {Sutter, Tobias and Krause, Andreas and Kuhn, Daniel},
  title     = {Robust Generalization Despite Distribution Shift via Minimum Discriminating Information},
  booktitle = {Advances in Neural Information Processing Systems},
  volume    = {34},
  pages     = {29754--29767},
  year      = {2021},
  url       = {https://proceedings.neurips.cc/paper/2021/hash/f86890095c957e9b949d11d15f0d0cd5-Abstract.html}
}

@article{wang2023equivalence,
  author  = {Wang, Zhiyuan and Ran, Lun and Zhou, Minglong and He, Long},
  title   = {On the Equivalence and Performance of Distributionally Robust Optimization and Robust Satisficing Models},
  journal = {Manufacturing \& Service Operations Management},
  volume  = {27},
  number  = {4},
  pages   = {1295--1312},
  year    = {2025},
  doi     = {10.1287/msom.2023.0531}
}

@article{wang2023risk,
  author  = {Wang, Duo and Yang, Kai and Yang, Lixing},
  title   = {Risk-Averse Two-Stage Distributionally Robust Optimisation for Logistics Planning in Disaster Relief Management},
  journal = {International Journal of Production Research},
  volume  = {61},
  number  = {2},
  pages   = {668--691},
  year    = {2023},
  doi     = {10.1080/00207543.2021.2013559}
}

@article{liu2024rethinking,
  author  = {Wang, Tianyu and Liu, Jiashuo and Cui, Peng and Namkoong, Hongseok},
  title   = {Rethinking Distribution Shifts: Empirical Analysis and Modeling for Tabular Data},
  journal = {Management Science},
  year    = {2026},
  url     = {https://arxiv.org/abs/2307.05284}
}

@article{yu2022external,
  author  = {Yu, Alice C. and Mohajer, Bahram and Eng, John},
  title   = {External Validation of Deep Learning Algorithms for Radiologic Diagnosis: A Systematic Review},
  journal = {Radiology: Artificial Intelligence},
  volume  = {4},
  number  = {3},
  pages   = {e210064},
  year    = {2022},
  doi     = {10.1148/ryai.210064}
}

@article{zech2018variable,
  author  = {Zech, John R. and Badgeley, Marcus A. and Liu, Manway and Costa, Anthony B. and Titano, Joseph J. and Oermann, Eric Karl},
  title   = {Variable Generalization Performance of a Deep Learning Model to Detect Pneumonia in Chest Radiographs: A Cross-Sectional Study},
  journal = {PLOS Medicine},
  volume  = {15},
  number  = {11},
  pages   = {e1002683},
  year    = {2018},
  doi     = {10.1371/journal.pmed.1002683}
}

@book{zhang2023mathematical,
  author    = {Zhang, Tong},
  title     = {Mathematical Analysis of Machine Learning Algorithms},
  publisher = {Cambridge University Press},
  address   = {Cambridge, UK},
  year      = {2023},
  doi       = {10.1017/9781009093057}
}

\newpage
\begin{APPENDICES}

    \section{Supplementary Theoretical Analyses under an  Adversarial Setting}\label{sec:adversarial}

    We now isolate the case in which the known shift direction coincides with the worst-case direction
    of the DRO ambiguity set. For each radius $r\ge 0$, define
    \[
        \mathcal A(r)
        \;:=\;\arg\max_{P \in \mathcal{B}(P_S, r)} \,\E_{P}\big[f(x_S, z)\big],
    \]
    the set of distributions in the Wasserstein ball that maximize the loss of the source-population
    optimizer $x_S$. We call a radius-indexed family $\{P_r^{\mathrm{adv}}\}_{r\ge 0}$ an adversarial
    direction if
    \[
        P_r^{\mathrm{adv}}
        \in
        \arg\max_{P\in \mathcal A(r)} d_W(P,P_S),
    \]
    that is, among the maximizers, we choose the distribution farthest from the source distribution $P_S$ in the $1$-Wasserstein metric $d_W(\cdot,\cdot)$ (using a fixed tie-breaking rule whenever the maximizer is non-unique). 
    We say that the known direction is adversarial over the range of magnitudes
    under consideration if the anticipated distributions satisfy $ P_t = P^{\mathrm{adv}}_{r_t}$, $t\ge 0$
    for a nondecreasing map $t\mapsto r_t$. The true target distribution is then
    $P_T=P_{t_T}$ for some unknown $t_T$.
    
    In this adversarial scenario, the specific anticipated
    target $P_t$ is also a worst-case distribution in the ball $\mathcal{B}(P_S,r_t)$, so the regularization gap $\sup_{P\in \mathcal{B}(P_S,r_t)}\mathbb{E}_P[f(x_S,z)]-\mathbb{E}_{P_t}\!\big[f(x_S,z)\big]$ in
    Proposition~\ref{prop:reg_comparison} vanishes.
     Thus, the RS regularization penalty is asymptotically
    comparable to the DRO regularization penalty. The remaining difference lies in the sensitivity-to-shift components. As the radius \(r_t\) increases along the
    adversarial path, the Wasserstein ball \(\mathcal B(P_S,r_t)\) becomes closer to covering the true target distribution \(P_T=P_{t_T}\), and fully contains it
    when \(t\geq t_T\). Hence, the DRO sensitivity term becomes small or even
    vanishes, while the RS sensitivity term \(k_{\tau_t}d_W(P_S,P_T)\) remains
    non-negligible. This relationship is formalized in the following proposition.
    
    \begin{proposition}\label{prop:well_overspefified_case}
    In the adversarial scenario, under Assumptions~\ref{assump:regularity},
    \ref{assump:strong convex and lipschitz}, and~\ref{monotonicity}, suppose either:
    (i) \(t<t_T\) and
    \(\inf_{0\leq s\leq t} d_W(P_T,P_s)
    \leq \frac{k_{\tau_t}}{L}d_W(P_S,P_T)\), or
    (ii) \(t\geq t_T\). Then, with probability at least \(1-\delta\), 
    \begin{align*}
        \mathrm{Reg}_{\mathrm{DRO}}(r_t)
        \leq
        \mathrm{Reg}_{\mathrm{RS}}(\tau_t)
        +
        \bar{\rho}_n(\delta),  ~~ 
        \mathrm{Sen}_{\mathrm{DRO}}(r_t)
        \leq
        \mathrm{Sen}_{\mathrm{RS}}(\tau_t).
    \end{align*}
    where \(\bar{\rho}_n(\delta)\) is the statistical error defined in
    Proposition~\ref{prop:reg_comparison}.
    \end{proposition}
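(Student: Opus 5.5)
The proposition has two parts: a regularization comparison and a sensitivity comparison. I plan to handle them separately.

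\emph{Regularization.} I will derive this directly from Proposition~\ref{prop:reg_comparison}. That result gives, with probability at least \(1-\delta\),
\[
\mathrm{Reg}_{\mathrm{RS}}(\tau_t)+\Big(\sup_{P\in\mathcal B(P_S,r_t)}\mathbb E_P[f(x_S,z)]-\mathbb E_{P_t}[f(x_S,z)]\Big)\le \mathrm{Reg}_{\mathrm{DRO}}(r_t)+\bar\rho_n(\delta).
\]
In the adversarial scenario \(P_t=P^{\mathrm{adv}}_{r_t}\in\mathcal A(r_t)\), so \(P_t\) attains the supremum and the bracketed gap is zero. This yields the comparison between \(\mathrm{Reg}_{\mathrm{RS}}\) and \(\mathrm{Reg}_{\mathrm{DRO}}\) up to \(\bar\rho_n(\delta)\).

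The stated direction \(\mathrm{Reg}_{\mathrm{DRO}}\le \mathrm{Reg}_{\mathrm{RS}}+\bar\rho_n\) is the reverse inequality, so I must re-run the argument behind Proposition~\ref{prop:reg_comparison} in the other direction. Write \(\mathrm{Reg}_{\mathrm{DRO}}(r_t)=\mathbb E_{P_t}[f(x_S,z)]-\mathbb E_{P_S}[f(x_S,z)]\), which holds because \(P_t\) is a maximizer. Also \(\mathrm{Reg}_{\mathrm{RS}}(\tau_t)=\mathbb E_{P_t}[f(\hat x_{\mathrm{ERM}},z)]-\mathbb E_{\hat P_n}[f(\hat x_{\mathrm{ERM}},z)]\). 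Their difference splits into two pieces:
\[
\mathbb E_{P_t}[f(x_S,z)-f(\hat x_{\mathrm{ERM}},z)]\quad\text{and}\quad \mathbb E_{\hat P_n}[f(\hat x_{\mathrm{ERM}},z)]-\mathbb E_{P_S}[f(x_S,z)].
\]
By decision-Lipschitzness (Assumption~\ref{assump:strong convex and lipschitz}(a)), the first piece is at most \(L'\|\hat x_{\mathrm{ERM}}-x_S\|_{\mathcal X}\). Quadratic growth together with the uniform deviation bound \(\bar{\mathfrak G}_n(\delta)\) for ERM gives \(\|\hat x_{\mathrm{ERM}}-x_S\|\le 2\sqrt{\bar{\mathfrak G}_n/\alpha}\). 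For the second piece, \(\mathbb E_{\hat P_n}[f(\hat x_{\mathrm{ERM}})]\le \mathbb E_{\hat P_n}[f(x_S)]\) by optimality of ERM, and this is at most \(\mathbb E_{P_S}[f(x_S)]+\bar{\mathfrak G}_n\) by uniform concentration. Summing reproduces exactly \(\bar\rho_n(\delta)\), on the same event used in Proposition~\ref{prop:reg_comparison}.

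\emph{Sensitivity.} This part is deterministic.

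In case (ii), \(t\ge t_T\), Assumption~\ref{monotonicity} gives \(d_W(P_S,P_T)\le d_W(P_S,P_t)=r_t\). Then \(P_T\in\mathcal B(P_S,r_t)\), so \(\mathrm{Sen}_{\mathrm{DRO}}(r_t)=0\le \mathrm{Sen}_{\mathrm{RS}}(\tau_t)\).

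In case (i), for each \(0\le s\le t\), monotonicity gives \(d_W(P_S,P_s)\le d_W(P_S,P_t)=r_t\), so \(P_s\in\mathcal B(P_S,r_t)\). Hence
\[
\inf_{P\in\mathcal B(P_S,r_t)}d_W(P_T,P)\le \inf_{0\le s\le t}d_W(P_T,P_s).
\]
Multiplying by \(L\) and applying the hypothesis gives \(\mathrm{Sen}_{\mathrm{DRO}}(r_t)\le k_{\tau_t}d_W(P_S,P_T)=\mathrm{Sen}_{\mathrm{RS}}(\tau_t)\).

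The main obstacle is the regularization direction. I must confirm that the reversed inequality needs only the two-sided ERM closeness already used for Proposition~\ref{prop:reg_comparison}, and that the constants match \(\bar\rho_n\) rather than a doubled version. In particular, the factor \(2\) in \(2L'\sqrt{\bar{\mathfrak G}_n/\alpha}\) should come from \(\tfrac{\alpha}{2}\|\hat x-x_S\|^2\le 2\bar{\mathfrak G}_n\).
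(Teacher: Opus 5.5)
Your proposal is correct and follows the paper's proof essentially step for step. The paper likewise writes $\mathrm{Reg}_{\mathrm{DRO}}(r_t)=\mathbb E_{P_t}[f(x_S,z)]-\mathbb E_{P_S}[f(x_S,z)]$ because $P_t$ is a maximizer, splits the difference into the decision-Lipschitz piece and the ERM-optimality piece bounded by $U_n$, and gets exactly $\bar\rho_n(\delta)$ on the single uniform-deviation event; it handles the sensitivity part with the same deterministic monotonicity argument. As you noticed, Proposition~\ref{prop:reg_comparison} by itself only gives the reverse inequality, so the argument has to be re-run rather than cited, and that is what the paper does.
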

    
    Combining the two component-wise inequalities, Proposition~\ref{prop:well_overspefified_case}
    implies the aggregate comparison
    \(\mathrm{TO}_{\mathrm{DRO}}(r_t)\leq \mathrm{TO}_{\mathrm{RS}}(\tau_t)+\bar{\rho}_n(\delta)\).
    Thus, in the adversarial scenario, DRO yields a tighter trade-off bound in
    mildly under-specified, well-specified, and over-specified regimes.
    Similarly, the condition
    \(\inf_{0\leq s\leq t} d_W(P_T,P_s)
    \leq \frac{k_{\tau_t}}{L}d_W(P_S,P_T)\)
    for \(t<t_T\) involves \(t\) on both sides and can be tricky to verify.
    Still, if \(\mathbb E_{P_t}[f(\hat x_{\mathrm{ERM}},z)]\) increases
    monotonically in \(t\), then a simple sufficient condition is
    \(\inf_{0\leq s\leq t} d_W(P_T,P_s)
    \leq \frac{k_{\tau_{t_T}}}{L}d_W(P_S,P_T)\).

    Nevertheless, this result does not imply that one should always enlarge the DRO radius to intentionally make the well-specified or over-specified cases occur. Although increasing the ambiguity radius may improve DRO's generalization guarantee relative to RS in adversarial settings, real distributional shifts are often not aligned with the worst-case direction. In  non-adversarial scenarios, the gap $\sup_{P\in \mathcal{B}(P_s,r_t)}\mathbb{E}_P[f(x_S,z)]-\mathbb{E}_{P_t}\!\big[f(x_S,z)\big]$ in Proposition~\ref{prop:reg_comparison} may be non-negligible, meaning that DRO pays for conservative worst-case distributions. This is consistent with prior studies showing that, as the ambiguity set grows, DRO's focus on worst-case distributions can lead to performance deterioration and diminishing regularization benefits~\citep{esfahani2015data,long2023robust}. In such cases, RS may avoid excessive conservatism and achieve a better balance between robustness and performance than DRO, as we shall shortly show in simulations.

    \section{Additional Examples Satisfying the Monotonicity Assumption \ref{monotonicity}}\label{sec:monotonicity_assump}

\begin{example}[Interpolation between two distributions]
Consider the convex combination of two distributions $P_0$ and $P_1$: 
\[
P_t = (1 - t)P_0 + tP_1 \quad \text{for } t \in [0, 1].
\]  
It follows that
$d_{W}(P_t, P_0) = t \cdot d_{W}(P_0, P_1)$,
i.e., the type-I Wasserstein distance from $P_t$ to $P_0$ grows linearly in $t$ with slope equal to the distance between $P_0$ and $P_1$. This example corresponds to the two-domain special case of one empirical benchmark for distribution shifts in  \cite{koh2021wilds}. In this sense, our result fills a small theoretical gap by providing an analysis of this two-support setting through the Wasserstein lens, while also helping bridge the ML and OR literatures. 
\end{example}

\begin{example}[Stochastic Differential Equations]
We can also consider certain stochastic differential equations (SDEs) as generating monotone distributional shifts. This example is related to \citet{dai2025assured}, who use flow-based generative models to characterize distributional shifts through ordinary differential equations (without a Wiener process). Consider the SDE
\begin{equation*}
    dX_t = g(t, X_t)dt + \sigma(t, X_t)dW_t,
\end{equation*} 
where $g(t, X_t)$ is the drift term, $\sigma(t, X_t)$ the diffusion term, and $W_t$ a Wiener process.  
An initial distribution $X_0 \sim P_0$ evolves into $X_t \sim P_t$ according to the dynamics of this SDE.  
In such cases, Assumption~\ref{monotonicity} reduces to a condition imposed on the drift term $g(t, X_t)$ and diffusion term $\sigma(t,X_t)$, which is satisfied for many common stochastic processes. 
As an illustration, suppose $X_0 \sim \mathcal{N}(1, \sigma^2)$ and $X_t$ follows the Ornstein–Uhlenbeck-type SDE
\begin{equation*}
    dX_t = (2 + t - X_t)dt + \sqrt{2} \sigma  dW_t.
\end{equation*} 
It can be shown that $X_t \sim \mathcal{N}(1+t, \sigma^2)$,  
which describes a distributional shift where the variance remains constant while the mean grows linearly with time $t$.

\end{example}
    \section{Supplementary Experimental Results}\label{app: experiment}
    
    \subsection{Dual Formulation of Robust Methods}\label{appd:network lotsizing reformulation}
    We present the tractable approximations used to determine the initial ordering decision under the two robust frameworks.
    The original adaptive formulations involve recourse decisions that depend on the realized demand and are generally intractable.
    Following \citet{long2023robust} and \citet{wang2023equivalence}, we adopt the scenario-wise lifted affine recourse adaptation, under which the recourse policies are affine functions of the lifted uncertainty vector $(z,u)$.
    The resulting robust counterparts can be solved efficiently by off-the-shelf optimization solvers.
    In our experiments, we use \texttt{Gurobi}.
    
    \paragraph{Distributionally robust optimization.}
    For a Wasserstein radius $r$, the DRO-LDR approximation is given by
    \begin{align*}
    \min \quad 
    & c^\top x + kr + \frac{1}{S}\sum_{s\in[S]} v_s \\
    \text{s.t.} \quad 
    & \sum_{i\in[N]}
    \left( d_i^\top y_i^{(s)}(z,u) + l^\top w^{(s)}(z,u) \right)
    - ku - v_s \leq 0,
    && \forall (z,u)\in \bar{\mathcal Z}_s,\ \forall s\in[S],\\
    & x_i + w_i^{(s)}(z,u)
    + \sum_{j\in[N]} y_{ji}^{(s)}(z,u)
    - \sum_{j\in[N]} y_{ij}^{(s)}(z,u)
    - z_i \geq 0,
    && \forall (z,u)\in \bar{\mathcal Z}_s,\ \forall s\in[S],\ \forall i\in[N],\\
    & y^{(s)}(z,u)\geq 0,\quad w^{(s)}(z,u)\geq 0,
    && \forall (z,u)\in \bar{\mathcal Z}_s,\ \forall s\in[S],\\
    & 0\leq x_i\leq \delta_i,
    && \forall i\in[N],\\
    & y^{(s)}\in \mathcal L^{N+1,N\times N},\quad
    w^{(s)}\in \mathcal L^{N+1,N},
    && \forall s\in[S].
    \end{align*}
    
    Here
    \[
    \bar{\mathcal Z}_s :=
    \left\{
    (z,u)\in \mathcal Z\times\mathbb R
    \ \middle|\ 
    \|z-\hat z_s\|_1\leq u
    \right\}.
    \]
    
    \paragraph{Robust satisficing.}
    For a target value $\tau$, the RS-LDR approximation is given by
    \begin{align*}
    \min \quad 
    & k \\
    \text{s.t.} \quad
    & c^\top x + \frac{1}{S}\sum_{s\in[S]} v_s \leq \tau,\\
    & \sum_{i\in[N]}
    \left( d_i^\top y_i^{(s)}(z,u) + l^\top w^{(s)}(z,u) \right)
    - ku - v_s \leq 0,
    && \forall (z,u)\in \bar{\mathcal Z}_s,\ \forall s\in[S],\\
    & x_i + w_i^{(s)}(z,u)
    + \sum_{j\in[N]} y_{ji}^{(s)}(z,u)
    - \sum_{j\in[N]} y_{ij}^{(s)}(z,u)
    - z_i \geq 0,
    && \forall (z,u)\in \bar{\mathcal Z}_s,\ \forall s\in[S],\ \forall i\in[N],\\
    & y^{(s)}(z,u)\geq 0,\quad w^{(s)}(z,u)\geq 0,
    && \forall (z,u)\in \bar{\mathcal Z}_s,\ \forall s\in[S],\\
    & 0\leq x_i\leq \delta_i,
    && \forall i\in[N],\\
    & y^{(s)}\in \mathcal L^{N+1,N\times N},\quad
    w^{(s)}\in \mathcal L^{N+1,N},
    && \forall s\in[S].
    \end{align*}
    
    \subsection{95\% Quantiles of Total Costs}

    Figure~\ref{fig:total-q95} reports the $95\%$ quantile of total costs under the same experimental setting as Figure~\ref{fig:total-mean}. This tail-risk comparison is not directly covered by the expected-risk generalization bounds developed in Sections~\ref{sec: theoretical bounds} and~\ref{sec: partial}, which do not provide explicit tail bounds. Nevertheless, it provides a useful robustness check. In contrast to the mean-cost comparison, where the relative performance of DRO and RS changes with the specified shift magnitude, the tail-risk comparison is more favorable to RS. Across the three values of $c_i$, RS is generally below DRO over most shift-magnitude specifications. This suggests that RS provides more stable upper-tail performance in this lot-sizing experiment, even in regimes where DRO may be competitive in terms of mean cost.

    \begin{figure}[t]
      \centering
      \includegraphics[width=\linewidth]{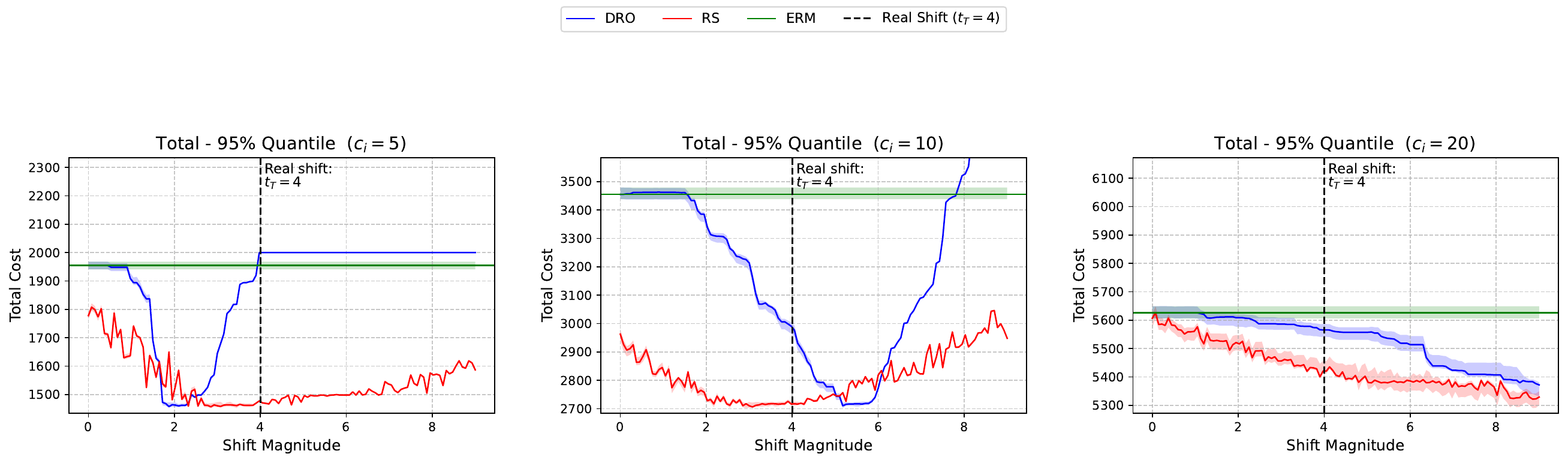}
      \caption{$95\%$ quantile of total costs of different methods for the network lot-sizing problem across different shift magnitude specifications ($t_T=4$ corresponds to the truth). The shift direction is known and used to calibrate DRO and RS hyperparameters, while the shift magnitude is unknown and specified along the horizontal axis. We consider three different levels of per-unit initial-ordering cost $c_i\in\{5,10,20\}$.}
      \label{fig:total-q95}
    \end{figure}

    \subsection{95\% Quantiles of Initial Ordering and Operational Costs}\label{appendix:95 Quantiles of Initial Ordering and Operational Costs}

In Section~\ref{sec:initial vs operational}, we decompose the mean total cost into initial-ordering and operational costs. Figure~\ref{fig:initial and operational panels (95 quantile)} reports the corresponding 95\% quantile decomposition and shows a similar high-level pattern: robust methods tend to incur higher initial-ordering costs but lower operational costs, with RS less sensitive to changes in $c_i$.
    
\begin{figure}[htbp]
    \centering
    \includegraphics[width=0.9\linewidth]{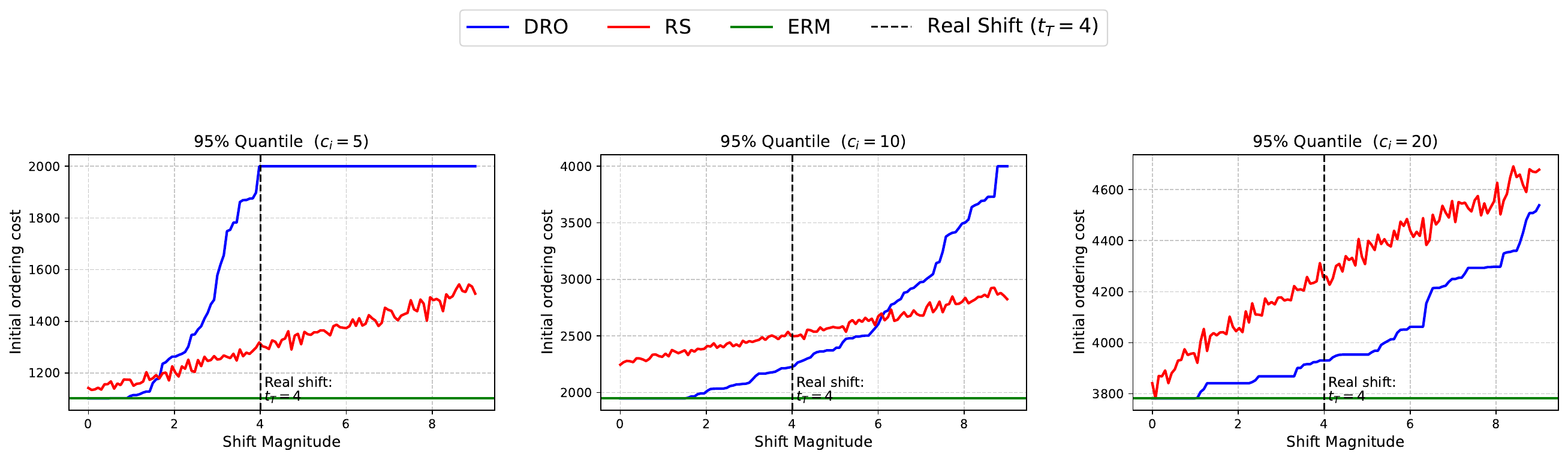}\\[0.5em]
    \includegraphics[width=0.9\linewidth]{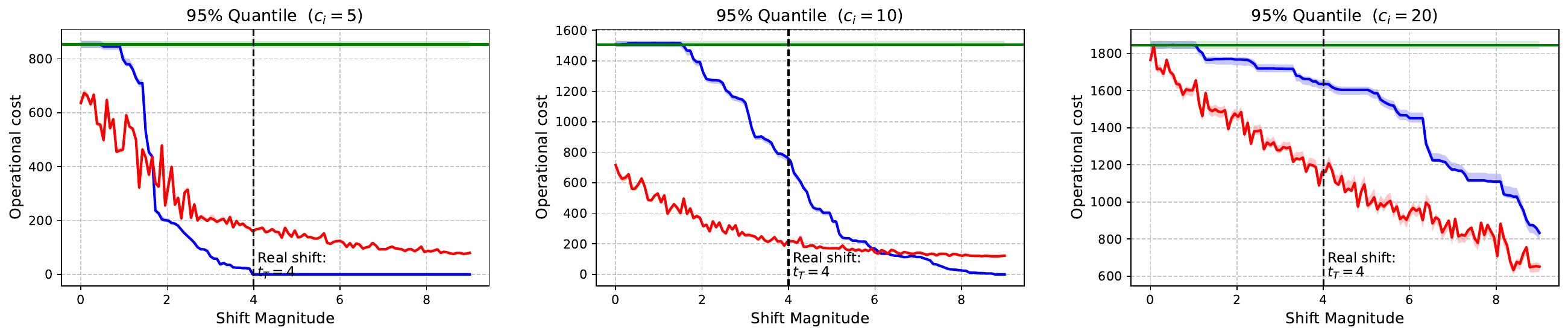}
    \caption{95\% quantile cost decomposition for the network lot-sizing application. The top panel reports initial-ordering costs, and the bottom panel reports operational costs (transshipment costs plus emergency-order costs).}
    \label{fig:initial and operational panels (95 quantile)}
\end{figure}
    \subsection{Dominance of Emergency Order Costs over Transportation Costs}
    \label{appendix:operation_cost_decomposition}
    \begin{figure}[htbp]
      \centering
      \includegraphics[width=0.8\linewidth]{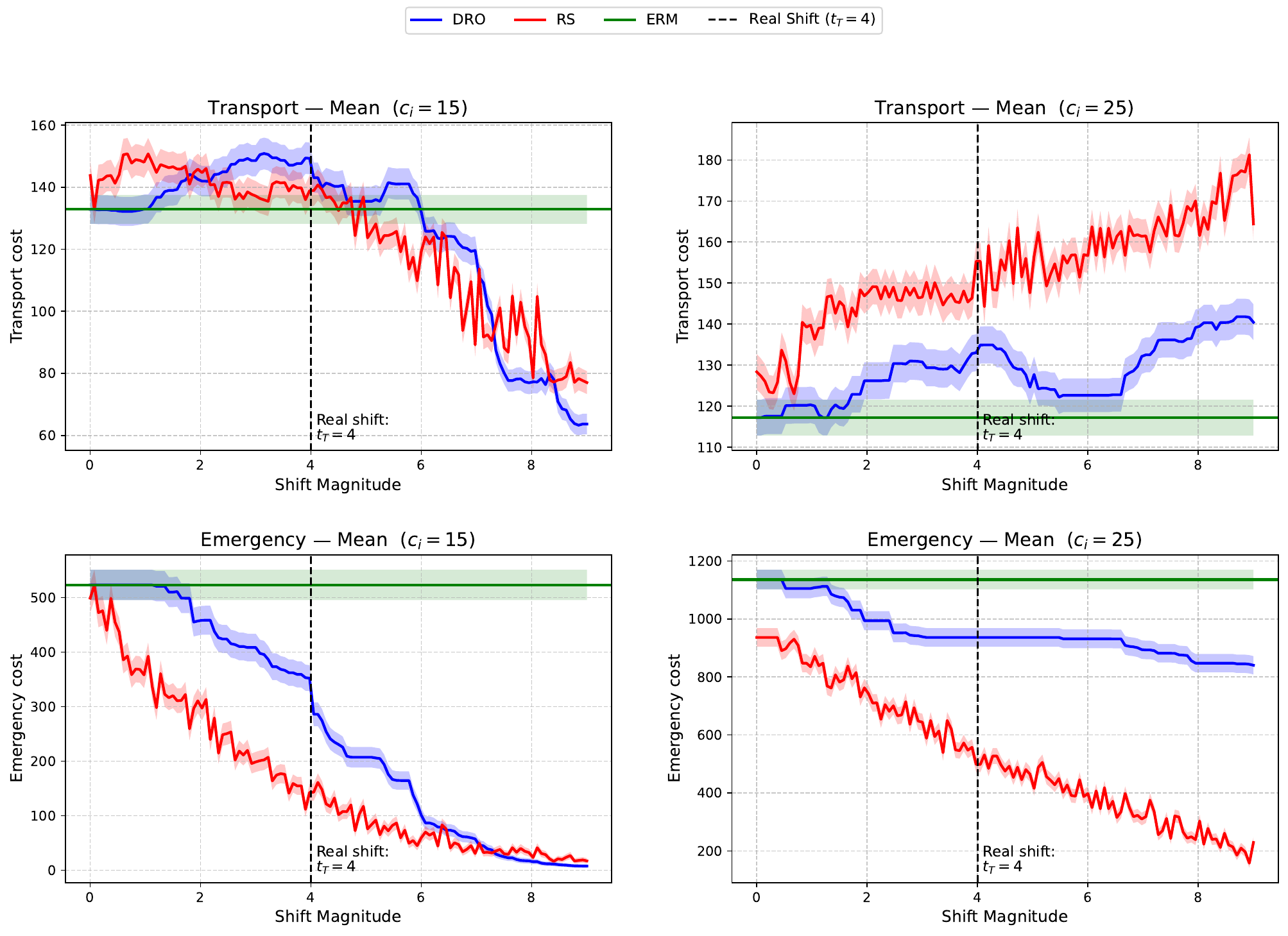}
      \caption{Mean operational-cost decomposition for the network lot-sizing application. The top panel reports transportation costs, and the bottom panel reports emergency-order costs, under two larger levels of per-unit initial-ordering cost \(c_i\in\{15,25\}\). Together, they further decompose the operational costs.}
      \label{fig:c10c20-transport-emergency}
    \end{figure}

    In Section \ref{sec:lot_sizing_setup},  for a given  initial allocation $\hat{x}$ and a realized random demand $\bar{z}$, 
    we decompose the \emph{total cost}  $f(\hat{x},\bar{z})$  
    into  the \emph{initial-ordering cost} $c^T\hat{x}$ and the \emph{operational cost}, 
    which corresponds to the optimal value of the following second-stage problem:
    \[
    \min_{(y,w)\in \mathcal{Y}_{(\hat{x},\bar{z})}}\Big\{\sum_{i\in [N]} d_i^Ty_i + l^T w\Big\}.
    \]
    Now let $(\bar{y},\bar{w})$ denote the optimal solution to this problem. 
    We can further decompose the operational cost into two parts: the \emph{transportation cost}  $\sum_{i\in [N]} d_i^T \bar{y}_i$, 
    and the \emph{emergency-order cost}  $l^T\bar{w}$. 
    Hence, the total cost can be expressed as
    \[
    f(\hat{x},\bar{z}) = c^T\hat{x} + \sum_{i\in [N]} d_i^T \bar{y}_i + l^T\bar{w}.
    \]
    
    Using the cases 
    $c_i=15$ and $c_i=25$ as illustrative examples, we observe the following patterns.
    First, in these supplementary lot-sizing instances, RS achieves a stronger reduction in emergency-order costs than DRO, indicating better control of this high-cost corrective component.
    Second, an interesting observation arises when comparing the scales of the two cost components: transportation costs are substantially smaller and thus play a relatively minor role. In fact, variations in emergency-order costs almost entirely account for the overall changes in operational costs, especially when $c$ is large.
    This phenomenon can be explained by the differing scaling behaviors of the two components. The transportation cost reflects local redistribution and scales with spatial variability, whereas the emergency-order cost scales with the aggregate demand surplus relative to the planned supply. Consequently, under the mean-increasing shift scenario, the emergency-order cost begins dominant and shows significant changes, while the transportation cost remains comparatively small.
    
    \subsection{Replicated Experiments}

We further conduct a replicated experiment under the same network lot-sizing setup, but with the true shift magnitude changed to \(t_T=3\). The results reported below show that the main qualitative observations remain broadly consistent.

    \begin{figure}[htbp]
      \centering
      \begin{subfigure}{\linewidth}
        \centering
        \includegraphics[width=0.9\linewidth]{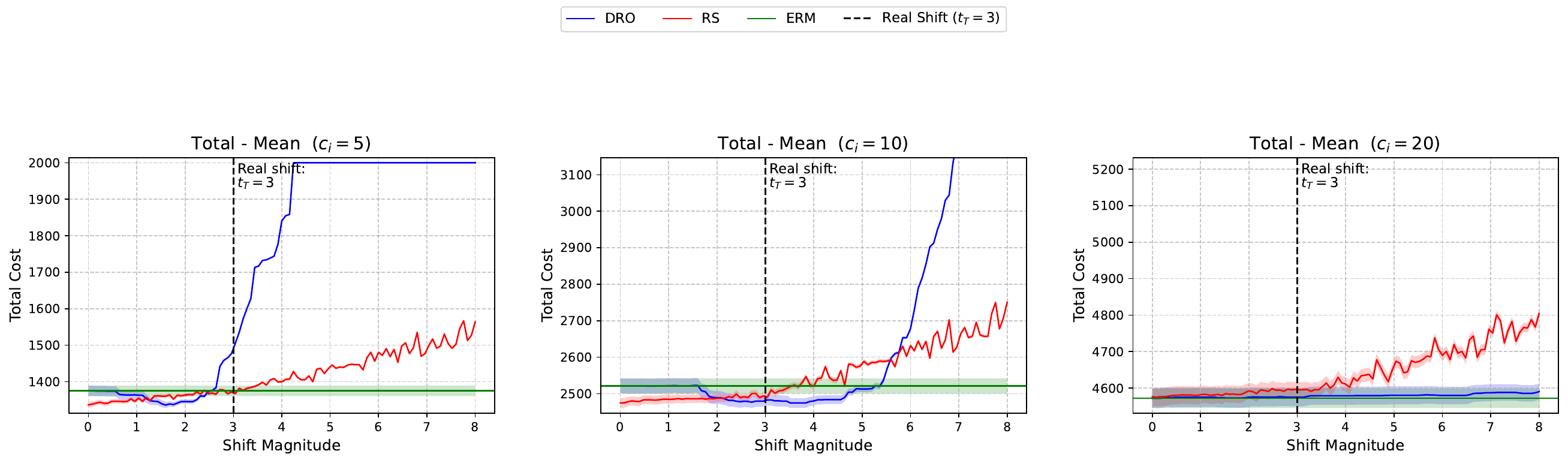}
        \captionsetup{font=footnotesize}
        \caption{Mean of total costs across shift magnitude $t$}
        \label{fig:(additional)total-mean}
      \end{subfigure}
    
      \vspace{0.8em}
    
      \begin{subfigure}{\linewidth}
        \centering
        \includegraphics[width=0.9\linewidth]{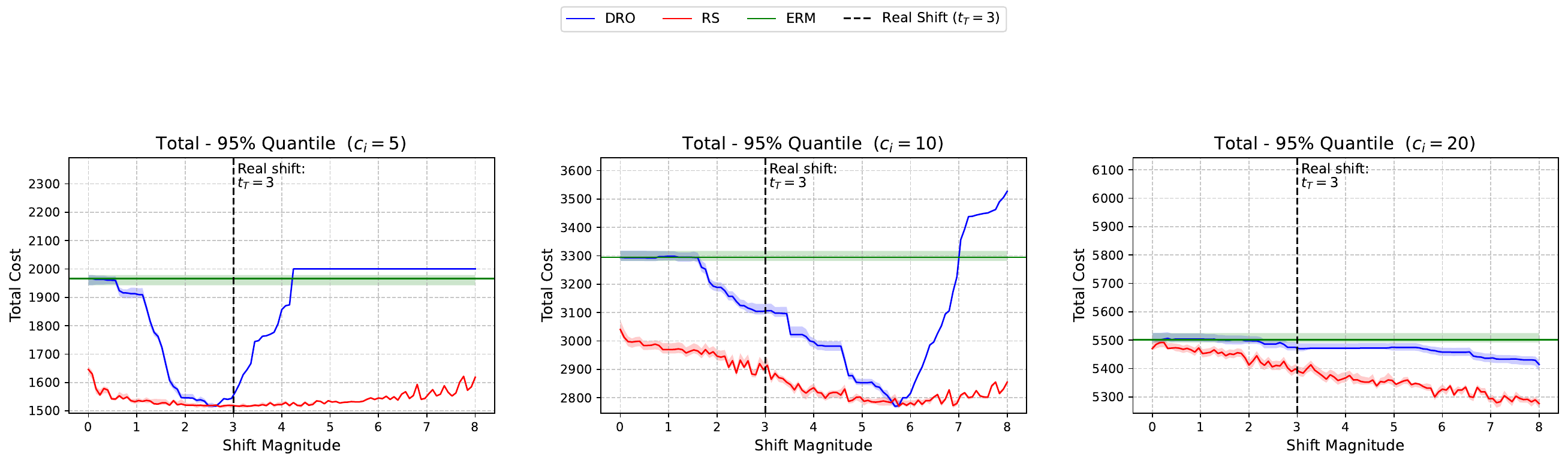}
        \captionsetup{font=footnotesize}
        \caption{95\% quantile of total costs across shift magnitude $t$}
        \label{fig:(additional)total-q95}
      \end{subfigure}
    
     \caption{Mean and 95\% quantile of total costs for the replicated network lot-sizing experiment with true shift magnitude \(t_T=3\). The top panel reports mean total costs, and the bottom panel reports the 95\% quantile of total costs, across different shift magnitude specifications. The shift direction is known and used to calibrate DRO and RS hyperparameters, while the shift magnitude is unknown and specified along the horizontal axis. We consider three different levels of per-unit initial-ordering cost \(c_i\in\{5,10,20\}\).}
      \label{fig:(additional)total-panels}
    \end{figure}

    \begin{figure}[htbp]
    \centering
    \includegraphics[width=0.9\linewidth]{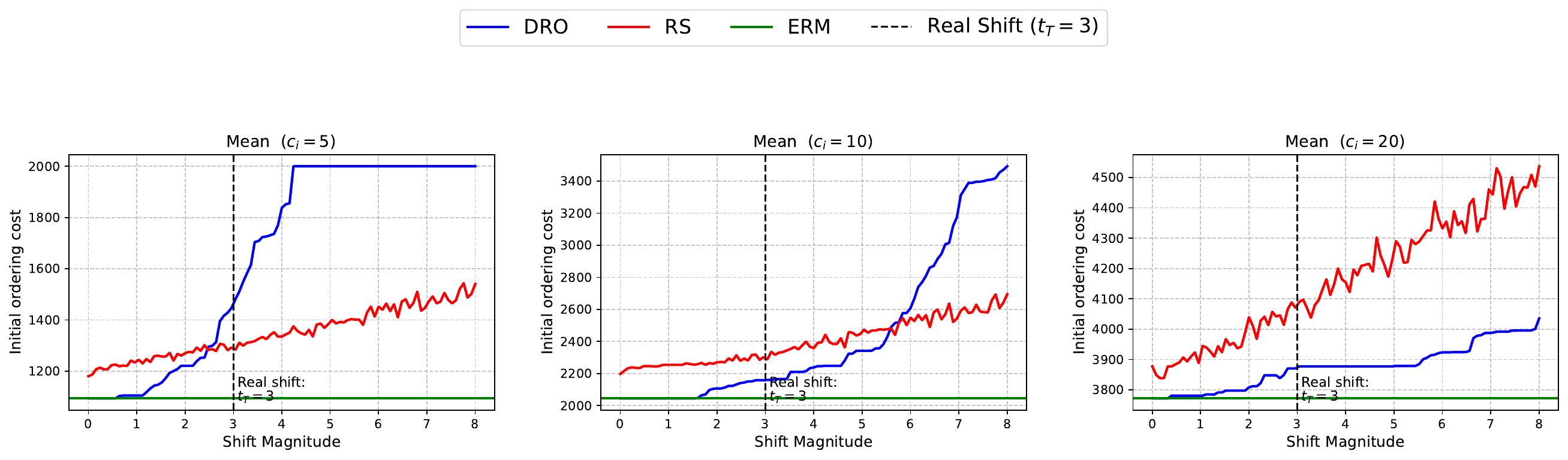}\\[0.5em]
    \includegraphics[width=0.9\linewidth]{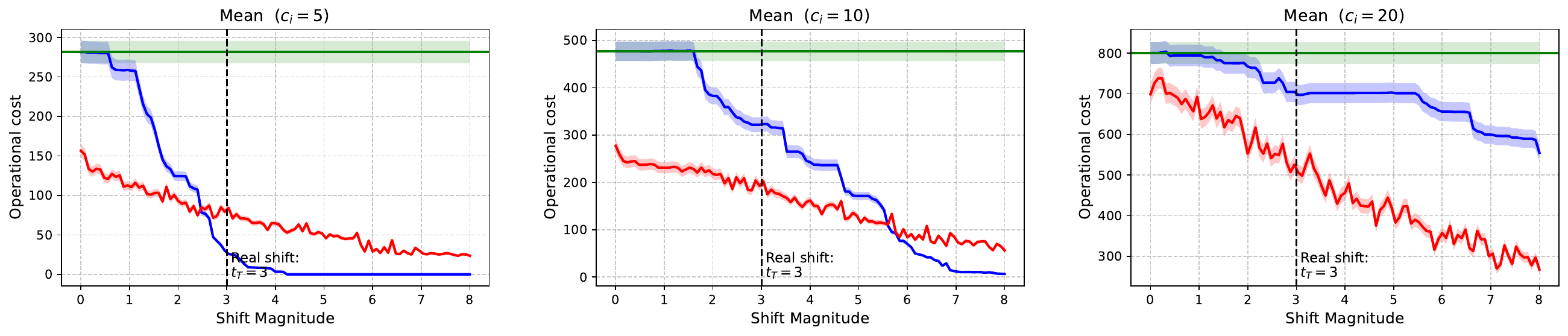}
    \caption{Mean cost decomposition for the network lot-sizing application. The top panel reports initial-ordering costs, and the bottom panel reports operational recourse costs (transshipment costs plus emergency-order costs). }
\end{figure}

    \begin{figure}[htbp]
    \centering
    \includegraphics[width=0.9\linewidth]{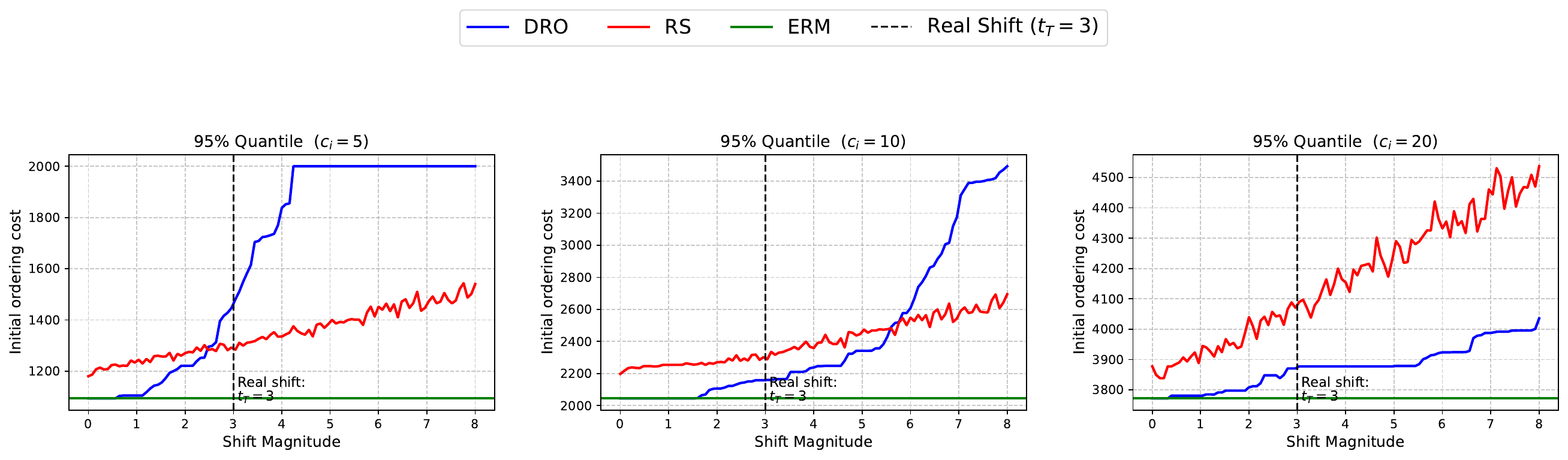}\\[0.5em]
    \includegraphics[width=0.9\linewidth]{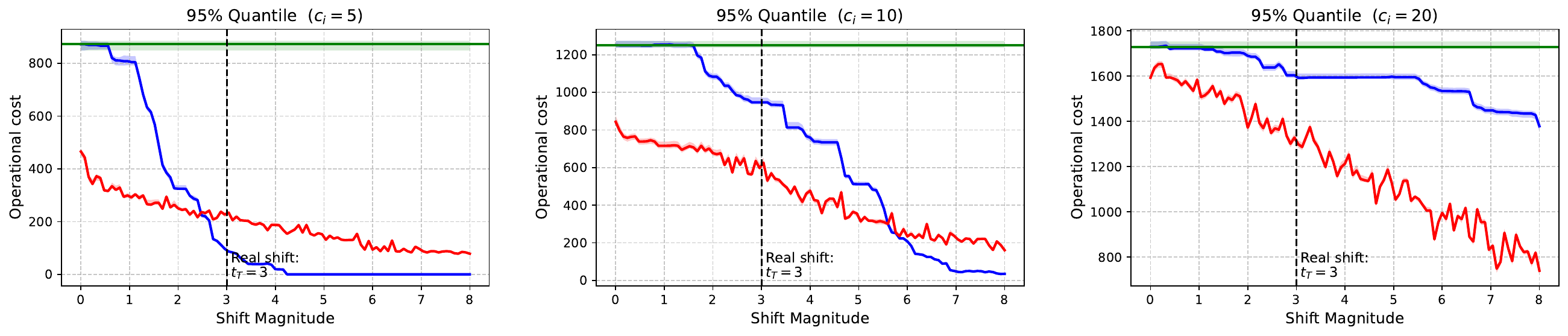}
    \caption{95\% quantile cost decomposition for the network lot-sizing application. The top panel reports initial-ordering costs, and the bottom panel reports operational costs (transshipment costs plus emergency-order costs).}
\end{figure}
    
    \begin{figure}[htbp]
      \centering
      \includegraphics[width=\linewidth]{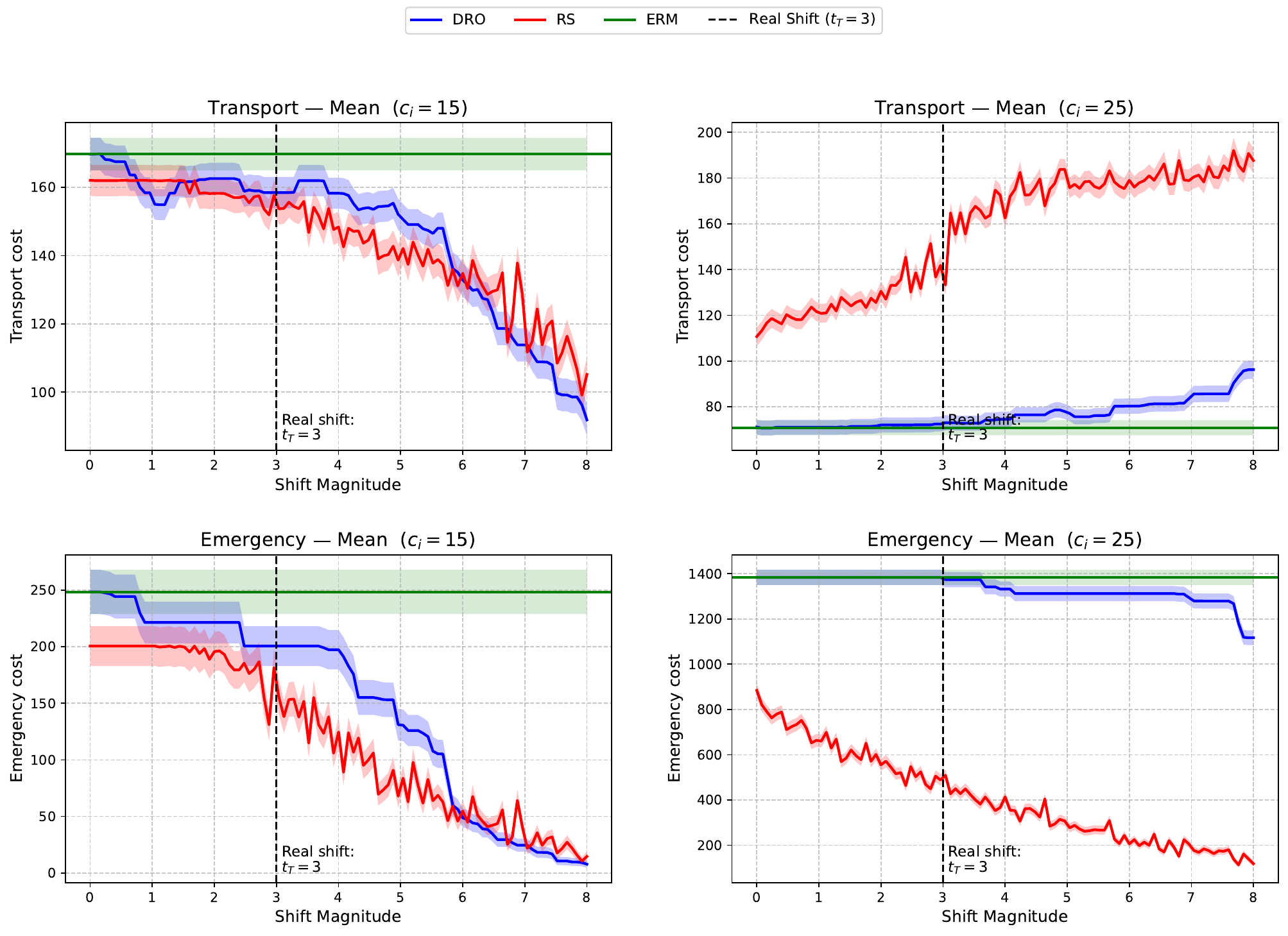}
      \caption{Mean operational-cost decomposition for the network lot-sizing application. The top panel reports transportation costs, and the bottom panel reports emergency-order costs, under two larger levels of per-unit initial-ordering cost \(c_i\in\{15,25\}\). Together, they further decompose the operational costs.}
      \label{fig:(additional)c15c25-transport-emergency}
    \end{figure}

\section{Verification of Assumptions for Common Losses}
\label{app:examples_assumptions}

This section gives simple sufficient conditions under which Assumptions~\ref{assump:regularity} and~\ref{assump:strong convex and lipschitz} hold for several common losses. The goal is not to provide the weakest possible conditions, but rather to show that the assumptions are standard and can be verified for some common loss functions. Throughout this section, we assume that the feasible set \(\mathcal X\) is compact. When Assumption~\ref{assump:regularity} requires a bounded instance space, we also assume that the instance space \(\mathcal Z\) is compact, or equivalently that the distribution is supported on a compact subset of the original sample space. For unbounded parametric families such as Gaussian demands, this can be interpreted as working with a truncated Gaussian approximation on a sufficiently large compact set. This convention guarantees boundedness of continuous losses; hence the main points to verify are the Lipschitz conditions and the second-order growth condition.

\paragraph{Newsvendor loss.}
Consider the \(d\)-product newsvendor loss
\[
f(x,z)=\sum_{j=1}^d h_j(x_j-z_j)_+ + b_j(z_j-x_j)_+,
\]
where \(h_j,b_j>0\). This loss is continuous in \((x,z)\). If \(\mathcal X\) and \(\mathcal Z\) are compact, then \(f\) is uniformly bounded, so Assumption~\ref{assump:regularity}(b) holds. Moreover, for any fixed \(x\),
\[
|f(x,z)-f(x,z')|
\le
\sum_{j=1}^d \max\{h_j,b_j\}|z_j-z'_j|
\le
L_z\|z-z'\|_2,
\]
where \(L_z=(\sum_{j=1}^d \max\{h_j,b_j\}^2)^{1/2}\). Hence Assumption~\ref{assump:regularity}(c) holds. Similarly, for any fixed \(z\),
\[
|f(x,z)-f(x',z)|
\le
\sum_{j=1}^d \max\{h_j,b_j\}|x_j-x'_j|
\le
L_x\|x-x'\|_2,
\]
with the same choice \(L_x=(\sum_{j=1}^d \max\{h_j,b_j\}^2)^{1/2}\). Thus the parameter-Lipschitz condition in Assumption~\ref{assump:strong convex and lipschitz}(a) holds.

For the second-order growth condition. For
\(
\mathcal E(x)=\mathbb E_{P_S}[f(x,z)].
\)
Suppose the marginal distribution of \(Z_j\) under \(P_S\) has density \(p_j\), and there exists \(m_j>0\) such that
\[
p_j(t)\ge m_j
\]
for all \(t\) in the projection of \(\mathcal X\) onto coordinate \(j\). Then, wherever the derivative exists,
\[
\frac{\partial^2 \mathcal E(x)}{\partial x_j^2}
=
(h_j+b_j)p_j(x_j).
\]
Hence
\[
\nabla^2 \mathcal E(x)
\succeq
\alpha I,
\qquad
\alpha:=\min_{1\le j\le d}(h_j+b_j)m_j>0.
\]
Therefore \(\mathcal E\) is strongly convex on \(\mathcal X\), and its restriction to any convex feasible set, such as the capacity set
\[
\mathcal X=\{x\in\mathbb R_+^d:\mathbf 1^\top x\le C\},
\]
satisfies
\[
\mathcal E(x)\ge \mathcal E(x_S)+\frac{\alpha}{2}\|x-x_S\|_2^2,
\qquad \forall x\in\mathcal X.
\]
This verifies Assumption~\ref{assump:strong convex and lipschitz}(b). In particular, for the Gaussian mean-shift newsvendor model used in the numerical study, the Gaussian density is strictly positive on every compact interval, so the above density lower bound holds on the bounded feasible region. 

\paragraph{Logistic loss.}
Consider binary logistic loss
\[
f(x;(a,y))=\log(1+\exp(-y a^\top x)),
\qquad y\in\{-1,1\}.
\]
Assume \(\mathcal{X}\) is compact and the covariates are bounded, \(\|a\|_2\le B\). Then \(f\) is continuous and uniformly bounded on the compact set \(\mathcal{X}\times\mathcal Z\). Moreover,
\[
\|\nabla_x f(x;(a,y))\|_2\le \|a\|_2\le B,
\]
so the parameter-Lipschitz condition holds with \(L'=B\). The loss is also Lipschitz in the instance variable under any metric that combines the Euclidean distance between covariates and the discrete distance between labels; boundedness of \(\mathcal{X}\) gives a uniform Lipschitz constant.

For the second-order growth condition, suppose the source design is nondegenerate:
\[
\mathbb E_{P_S}[aa^\top]\succeq \lambda I
\]
for some \(\lambda>0\). Since \(\mathcal{X}\) and the covariates are bounded, there exists \(c>0\) such that
\[
\sigma(a^\top x)(1-\sigma(a^\top x))\ge c,
\qquad \forall x\in\mathcal{X},
\]
where \(\sigma(t)=(1+\exp(-t))^{-1}\). Therefore
\[
\nabla^2 \mathcal E(x)
=
\mathbb E_{P_S}\!\left[
\sigma(a^\top x)(1-\sigma(a^\top x))aa^\top
\right]
\succeq
c\lambda I.
\]
Thus the source risk satisfies the second-order growth condition on \(\mathcal{X}\). The condition can fail without such nondegeneracy, for example if the covariates are rank deficient.

\paragraph{Huber loss.}
Consider the Huber regression loss
\[
f(x;(a,y))=\rho_\delta(y-a^\top x),
\]
where \(\rho_\delta\) is the Huber loss with threshold \(\delta>0\). Assume \(\mathcal{X}\) is compact and the instance space is bounded, e.g., \(\|a\|_2\le B\) and \(|y|\le Y\). Then \(f\) is continuous and uniformly bounded. Since \(|\rho_\delta'(r)|\le \delta\), we have
\[
\|\nabla_x f(x;(a,y))\|_2
=
|\rho_\delta'(y-a^\top x)|\|a\|_2
\le
\delta B,
\]
so the parameter-Lipschitz condition holds. The same boundedness also implies Lipschitz continuity in the instance variable \((a,y)\), uniformly over \(x\in\mathcal{X}\).

For the second-order growth condition, it is sufficient to assume that the source distribution has enough mass in the quadratic region of the Huber loss and that the corresponding weighted design is nondegenerate. For example, suppose there exists \(\kappa>0\) such that
\[
\mathbb E_{P_S}\!\left[
\mathbf 1\{|Y-a^\top x|\le \delta\}aa^\top
\right]
\succeq
\kappa I,
\qquad \forall x\in \mathcal{X}.
\]
Then the source risk has curvature at least \(\kappa\) on \(\mathcal{X}\), and hence satisfies
\[
\mathcal E(x)\ge
\mathcal E(x_S)
+\frac{\kappa}{2}\|x-x_S\|_2^2,
\qquad \forall x\in\mathcal{X}.
\]
Thus Assumption~\ref{assump:strong convex and lipschitz}(b) holds. Intuitively, this condition rules out flat directions and the degenerate case where almost all residuals lie in the linear tail of the Huber loss.

\section{Proofs of Main Results}

\subsection{Proof of Proposition \ref{f-space, ERM bound}}

\textit{Proof.}
Let
\[
\mathcal A:=\{z\mapsto f(x,z): x\in\mathcal X\}.
\]
Recall that
\[
J_S:=\inf_{x\in\mathcal X}E_{P_S}[f(x,z)],
\qquad
J_T:=\inf_{x\in\mathcal X}E_{P_T}[f(x,z)].
\]
Let
\[
x_S\in\arg\min_{x\in\mathcal X}E_{P_S}[f(x,z)].
\]

By the Lipschitz continuity of $z\mapsto f(x,z)$ uniformly over
$x\in\mathcal X$, together with the Kantorovich--Rubinstein duality \ref{dual w-dis}, for any
$x\in\mathcal X$ we have
\[
\left|
E_{P_T}[f(x,z)]-E_{P_S}[f(x,z)]
\right|
\le
L\,d_W(P_S,P_T).
\]
Applying this inequality to $x=\hat x_{\mathrm{ERM}}$ gives
\begin{equation}
\label{eq:erm-shift-control}
E_{P_T}[f(\hat x_{\mathrm{ERM}},z)]
\le
E_{P_S}[f(\hat x_{\mathrm{ERM}},z)]
+
L\,d_W(P_S,P_T).
\end{equation}

We now control the source risk of $\hat x_{\mathrm{ERM}}$. Adding and
subtracting empirical risks gives
\begin{align*}
E_{P_S}[f(\hat x_{\mathrm{ERM}},z)]
&=
\left[
E_{P_S}f(\hat x_{\mathrm{ERM}},z)
-
E_{\hat P_n}f(\hat x_{\mathrm{ERM}},z)
\right] \\
&\quad+
\left[
E_{\hat P_n}f(\hat x_{\mathrm{ERM}},z)
-
E_{\hat P_n}f(x_S,z)
\right] \\
&\quad+
\left[
E_{\hat P_n}f(x_S,z)
-
E_{P_S}f(x_S,z)
\right]
+
E_{P_S}f(x_S,z).
\end{align*}
The middle term is nonpositive by the optimality of $\hat x_{\mathrm{ERM}}$.
Therefore,
\begin{equation}
\label{eq:erm-source-risk-decomp}
E_{P_S}[f(\hat x_{\mathrm{ERM}},z)]
\le
J_S
+
G_n
+
\left[
E_{\hat P_n}f(x_S,z)
-
E_{P_S}f(x_S,z)
\right],
\end{equation}
where
\[
G_n
:=
\sup_{x\in\mathcal X}
\left\{
E_{P_S}f(x,z)-E_{\hat P_n}f(x,z)
\right\}.
\]

By symmetrization and bounded-difference concentration, with probability at
least $1-\delta/2$,
\begin{equation}
\label{eq:erm-Gn-bound}
G_n
\le
2\mathfrak R_n(\mathcal A)
+
M\sqrt{\frac{\log(2/\delta)}{2n}}.
\end{equation}
For the fixed function $x_S$, Hoeffding's inequality gives, with probability at
least $1-\delta/2$,
\begin{equation}
\label{eq:erm-fixed-fS}
E_{\hat P_n}f(x_S,z)
-
E_{P_S}f(x_S,z)
\le
M\sqrt{\frac{\log(2/\delta)}{2n}}.
\end{equation}
Combining \eqref{eq:erm-source-risk-decomp}--\eqref{eq:erm-fixed-fS} by a union
bound, with probability at least $1-\delta$,
\[
E_{P_S}[f(\hat x_{\mathrm{ERM}},z)]
\le
J_S
+
2\mathfrak R_n(\mathcal A)
+
2M\sqrt{\frac{\log(2/\delta)}{2n}}.
\]
By Dudley's entropy integral bound,
\[
\mathfrak R_n(\mathcal A)
\le
\frac{12}{\sqrt n}\mathcal C(\mathcal A),
\]
where
\[
\mathcal C(\mathcal A)
:=
\int_0^\infty
\sqrt{
\log\mathcal N(\mathcal A,\|\cdot\|_\infty,u)
}\,du.
\]
Thus, with probability at least $1-\delta$,
\[
E_{P_S}[f(\hat x_{\mathrm{ERM}},z)]
\le
J_S
+
\frac{24}{\sqrt n}\mathcal C(\mathcal A)
+
2M\sqrt{\frac{\log(2/\delta)}{2n}}.
\]
Substituting this inequality into \eqref{eq:erm-shift-control} and subtracting
$J_T$ from both sides yields
\[
\mathcal{R}_{P_T}(\hat x_{\mathrm{ERM}})
\le
J_S-J_T
+
L\,d_W(P_S,P_T)
+
\frac{24}{\sqrt n}\mathcal C(\mathcal A)
+
2M\sqrt{\frac{\log(2/\delta)}{2n}}.
\]
This completes the proof.

\begin{proposition}[\cite{kantorovich1958space}]\label{dual w-dis}
For any distributions $\mathbb{Q}_1$, $\mathbb{Q}_2\in\mathcal{M}(\Xi)$, we have
\begin{align}\label{repre thm}  d_\mathrm{W}\big(\mathbb{Q}_1,\mathbb{Q}_2\big)=\sup_{f\in\mathcal{L}}\Big\{\:\int_{\Xi}f(\xi)\:\mathbb{Q}_1(\mathrm{d}\xi)-\int_{\Xi}f(\xi)\:\mathbb{Q}_2(\mathrm{d}\xi)\Big\}, 
\end{align}
where $\mathcal{L}$ denotes the space of all Lipschitz functions with 
$|f(\xi)-f(\xi')|\le\|\xi-\xi'\|\quad \text{for all } \xi,\xi'\in\Xi.$

\end{proposition}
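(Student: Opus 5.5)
The plan is to prove the two inequalities separately: the easy direction ($\ge$) by direct integration against a coupling, and the hard direction ($\le$) first for finitely supported measures via linear-programming duality, then for general measures by a discretization limit. Throughout, $\Xi$ is bounded, in line with Assumption~\ref{bounded z}, so every measure has finite first moment.

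\emph{Weak duality.} Fix any $f\in\mathcal L$ and any coupling $\pi\in\Pi(\mathbb Q_1,\mathbb Q_2)$. Since the marginals of $\pi$ are $\mathbb Q_1$ and $\mathbb Q_2$,
\[
\int f\,d\mathbb Q_1-\int f\,d\mathbb Q_2=\int \big(f(\xi)-f(\xi')\big)\,\pi(d\xi,d\xi')\le \int\|\xi-\xi'\|\,\pi(d\xi,d\xi').
\]
Taking the infimum over $\pi$ and then the supremum over $f$ gives $\sup_{f\in\mathcal L}\{\cdots\}\le d_W(\mathbb Q_1,\mathbb Q_2)$.

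\emph{Strong duality for discrete measures.} Let $\mathbb Q_1=\sum_i a_i\delta_{\xi_i}$ and $\mathbb Q_2=\sum_j b_j\delta_{\xi'_j}$, and write $c_{ij}=\|\xi_i-\xi'_j\|$. Then $d_W$ is the optimal value of the transportation LP
\[
\min_{\pi\ge0}\ \sum_{i,j}c_{ij}\pi_{ij}\quad\text{s.t.}\quad \sum_j\pi_{ij}=a_i,\ \ \sum_i\pi_{ij}=b_j .
\]
This LP is feasible (take the product coupling) and bounded below by $0$. Its dual is
\[
\max\ \sum_i a_iu_i-\sum_j b_jv_j\quad\text{s.t.}\quad u_i-v_j\le c_{ij}\ \ \forall i,j,
\]
and LP strong duality gives dual optimal $(u,v)$ with equal value. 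Define the $c$-transform
\[
f(\xi):=\min_j\big(v_j+\|\xi-\xi'_j\|\big).
\]
As a minimum of $1$-Lipschitz functions, $f$ is $1$-Lipschitz, so $f\in\mathcal L$. Taking the $j$-th term at $\xi=\xi'_j$ gives $f(\xi'_j)\le v_j$. Dual feasibility gives $f(\xi_i)\ge u_i$. Hence
\[
\int f\,d\mathbb Q_1-\int f\,d\mathbb Q_2\ \ge\ \sum_i a_iu_i-\sum_j b_jv_j=d_W(\mathbb Q_1,\mathbb Q_2),
\]
which together with weak duality proves equality in the discrete case.

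\emph{Passage to general measures.} Partition the bounded set $\Xi$ into finitely many Borel cells of diameter at most $\eta$, pick one representative point per cell, and let $\mathbb Q_\ell^\eta$ be the image of $\mathbb Q_\ell$ under the map sending each cell to its representative. The coupling induced by this map shows $d_W(\mathbb Q_\ell,\mathbb Q_\ell^\eta)\le\eta$ for $\ell=1,2$. Both sides of the duality formula move by at most $2\eta$ when $\mathbb Q_\ell$ is replaced by $\mathbb Q_\ell^\eta$:
\begin{itemize}
\item For the supremum side, each $f\in\mathcal L$ satisfies $|\int f\,d\mathbb Q_\ell-\int f\,d\mathbb Q_\ell^\eta|\le\eta$ by weak duality applied to the pair $(\mathbb Q_\ell,\mathbb Q_\ell^\eta)$. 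So the supremum changes by at most $2\eta$.
\item For $d_W$, use the triangle inequality.
\end{itemize}
Applying the discrete result and letting $\eta\to0$ then yields the identity for general $\mathbb Q_1,\mathbb Q_2$.

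The main obstacle I expect is exactly this last step: the triangle inequality for $d_W$. It requires the gluing lemma, which composes the couplings $\mathbb Q_1\to\mathbb Q_1^\eta\to\mathbb Q_2^\eta\to\mathbb Q_2$ via disintegration of probability measures on a Polish space. I would invoke it as standard. Alternatively, I would glue directly here: attach to each atom of an optimal discrete coupling the conditional law of $\mathbb Q_\ell$ on the corresponding cell, which avoids general disintegration because the cells are finitely many. For unbounded $\Xi$ with finite first moments, one would add a truncation argument, but this is not needed under Assumption~\ref{bounded z}.
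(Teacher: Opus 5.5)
Your proof is correct, and it takes a genuinely different route from the paper, which gives no proof at all: the statement is imported, in the formulation of Mohajerin Esfahani--Kuhn, as the classical Kantorovich--Rubinstein theorem.

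All three of your steps go through. The $c$-transform $f(\xi)=\min_j\{v_j+\|\xi-\xi'_j\|\}$ turns an optimal dual pair of the transportation LP into a $1$-Lipschitz potential that attains the LP value. Your direct gluing is the right way to finish without disintegration. With cells $C_k$ and an optimal discrete coupling $\pi^\eta$, the measure $\sum_{k,m}\pi^\eta_{km}\,\mathbb{Q}_1(\cdot\mid C_k)\otimes\mathbb{Q}_2(\cdot\mid C_m)$, summed over pairs with $\pi^\eta_{km}>0$, has marginals $\mathbb{Q}_1,\mathbb{Q}_2$. Its cost is at most $d_W(\mathbb{Q}_1^\eta,\mathbb{Q}_2^\eta)+2\eta$, which is exactly the one direction of the triangle inequality you need.

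What the citation buys is generality. In that formulation $\mathcal{M}(\Xi)$ is the class of measures with finite first moment on a possibly unbounded $\Xi$, whereas your finite partition uses that $\Xi$ is a bounded subset of $\mathbb{R}^d$. The extension is routine with the same gluing. Let your cell-to-representative map $T$ also send all mass outside a large ball around some $\xi_0\in\Xi$ to $\xi_0$. The gluing bound then becomes $d_W(\mathbb{Q}_1,\mathbb{Q}_2)\le d_W(\mathbb{Q}_1^\eta,\mathbb{Q}_2^\eta)+\sum_{\ell=1,2}\int\|\xi-T(\xi)\|\,\mathbb{Q}_\ell(d\xi)$, and the error term vanishes as $\eta\to 0$ and the radius grows, by finite first moments.

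What your route buys is transparency, and it also shows that the paper only needs the easy half. The sole explicit invocation of this proposition is in the proof of Proposition~\ref{f-space, ERM bound}. There it is used only for $|\mathbb{E}_{P_T}[f(x,z)]-\mathbb{E}_{P_S}[f(x,z)]|\le L\,d_W(P_S,P_T)$, which is your weak-duality paragraph applied to $\pm f(x,\cdot)/L$.
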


\subsection{Proof of Proposition \ref{LB for ERM}}

\textit{Proof.}
Consider the $\ell_1$ loss
\[
f(x,z)=|1-zx|,
\qquad \mathcal X=[0,1],
\]
and the source distribution $P_S=\delta_1$. Since all source samples equal $1$,
we have $\hat P_n=\delta_1$ almost surely. The loss is Lipschitz in $z$,
uniformly over \(x\in[0,1]\), with Lipschitz constant \(L=1\).

The ERM solution is
\[
\hat x_{\mathrm{ERM}}
\in
\arg\min_{x\in[0,1]} E_{\hat P_n}|1-Zx|
=
\arg\min_{x\in[0,1]} |1-x|
=
\{1\}.
\]
Thus \(\hat x_{\mathrm{ERM}}=1\). Moreover,
\[
J_S
=
\inf_{x\in[0,1]} E_{P_S}|1-Zx|
=
\inf_{x\in[0,1]} |1-x|
=
0.
\]

For any target distribution $P_T$, the target loss of ERM is
\[
E_{P_T}\big[|1-Z\hat x_{\mathrm{ERM}}|\big]
=
E_{P_T}[|1-Z|].
\]
Since \(P_S=\delta_1\), every coupling between \(P_T\) and \(P_S\) must couple
\(Z\sim P_T\) with the constant random variable \(1\). Therefore,
\[
d_W(P_T,P_S)
=
d_W(P_T,\delta_1)
=
\int |z-1|\,dP_T(z)
=
E_{P_T}[|Z-1|].
\]
Hence
\[
E_{P_T}\big[|1-Z\hat x_{\mathrm{ERM}}|\big]
=
d_W(P_T,P_S).
\]

Finally, by definition,
\[
J_T
=
\inf_{x\in[0,1]} E_{P_T}|1-Zx|,
\]
and therefore
\[
\mathcal{R}_{P_T}(\hat x_{\mathrm{ERM}})
=
E_{P_T}\big[|1-Z\hat x_{\mathrm{ERM}}|\big]-J_T
=
d_W(P_T,P_S)-J_T.
\]
Since \(L=1\) and \(J_S=0\), this can be written as
\[
\mathcal{R}_{P_T}(\hat x_{\mathrm{ERM}})
=
J_S-J_T+L\,d_W(P_S,P_T).
\]
This proves the claim.

\subsection{Proof of Theorem \ref{DRO bound}}

\textit{Proof.}
Fix $r\ge 0$ and recall that
\[
\mathcal B(Q,r)
:=
\{P\in\mathcal P(\mathcal Z): d_W(P,Q)\le r\}.
\]
Let
\[
\hat x_{\mathrm{DRO}}
\in
\arg\min_{x\in\mathcal X}
\sup_{P\in\mathcal B(\hat P_n,r)}
E_P[f(x,z)],
\]
and let
\[
x_S\in\arg\min_{x\in\mathcal X}E_{P_S}[f(x,z)],
\qquad
x_T\in\arg\min_{x\in\mathcal X}E_{P_T}[f(x,z)].
\]
For any $Q\in\mathcal P(\mathcal Z)$, define
\[
 R_r(Q,x)
:=
\sup_{P\in\mathcal B(Q,r)}E_P[f(x,z)].
\]
Define the finite-sample discrepancy
\[
\tilde\Delta(x)
:=
\left|
 R_r(\hat P_n,x)
-
 R_r(P_S,x)
\right|.
\]

By adding and subtracting the empirical robust risk and using the optimality of
$\hat x_{\mathrm{DRO}}$, we have
\begin{align*}
&
 R_r(P_S,\hat x_{\mathrm{DRO}})
-
 R_r(P_S,x_S)
\\
&=
\left[
 R_r(P_S,\hat x_{\mathrm{DRO}})
-
 R_r(\hat P_n,\hat x_{\mathrm{DRO}})
\right]
+
\left[
 R_r(\hat P_n,\hat x_{\mathrm{DRO}})
-
\ R_r(\hat P_n,x_S)
\right]
\\
&\quad+
\left[
\ R_r(\hat P_n,x_S)
-
\ R_r(P_S,x_S)
\right] \\
&\le
\sup_{x\in\mathcal X}\tilde\Delta(x)
+
\tilde\Delta(x_S),
\end{align*}
because the middle term is nonpositive.

Next, decompose the target-environment excess loss:
\begin{align}
R_{P_T}(\hat x_{\mathrm{DRO}})
&=
E_{P_T}[f(\hat x_{\mathrm{DRO}},z)]
-
E_{P_T}[f(x_T,z)]
\nonumber\\
&=
\left[
E_{P_T}[f(\hat x_{\mathrm{DRO}},z)]
-
R_r(P_S,\hat x_{\mathrm{DRO}})
\right]
\nonumber\\
&\quad+
\left[
R_r(P_S,\hat x_{\mathrm{DRO}})
-
R_r(P_S,x_S)
\right]
\nonumber\\
&\quad+
\left[
R_r(P_S,x_S)
-
E_{P_S}[f(x_S,z)]
\right]
\nonumber\\
&\quad+
\left[
E_{P_S}[f(x_S,z)]
-
E_{P_T}[f(x_T,z)]
\right].
\label{eq:dro-target-decomposition}
\end{align}
The last term is $J_S-J_T$, and the third term is
\[
\Lambda_r(P_S,x_S)
:=
R_r(P_S,x_S)-E_{P_S}[f(x_S,z)].
\]
For the first term in \eqref{eq:dro-target-decomposition}, for any
$P\in\mathcal B(P_S,r)$,
\[
R_r(P_S,\hat x_{\mathrm{DRO}})
\ge
E_P[f(\hat x_{\mathrm{DRO}},z)].
\]
Therefore, by the Lipschitz continuity of $z\mapsto f(x,z)$ uniformly over
$x\in\mathcal X$,
\[
E_{P_T}[f(\hat x_{\mathrm{DRO}},z)]
-
R_r(P_S,\hat x_{\mathrm{DRO}})
\le
E_{P_T}[f(\hat x_{\mathrm{DRO}},z)]
-
E_P[f(\hat x_{\mathrm{DRO}},z)]
\le
L d_W(P_T,P).
\]
Taking the infimum over $P\in\mathcal B(P_S,r)$ gives
\[
E_{P_T}[f(\hat x_{\mathrm{DRO}},z)]
-
R_r(P_S,\hat x_{\mathrm{DRO}})
\le
L\inf_{P\in\mathcal B(P_S,r)}d_W(P_T,P).
\]
Combining the above inequalities yields
\begin{align}
R_{P_T}(\hat x_{\mathrm{DRO}})
&\le
J_S-J_T
+
L\inf_{P\in\mathcal B(P_S,r)}d_W(P_T,P)
+
\Lambda_r(P_S,x_S)
\nonumber\\
&\quad+
\sup_{x\in\mathcal X}\tilde\Delta(x)
+
\tilde\Delta(x_S).
\label{eq:dro-before-empirical-process}
\end{align}

It remains to control the two finite-sample terms. For $x\in\mathcal X$ and
$0\le k\le L$, define
\[
\phi_{x,k}(z)
:=
\sup_{z'\in\mathcal Z}
\left\{
f(x,z')-k\|z-z'\|
\right\},
\]
and define the envelope class
\[
\Phi
:=
\{\phi_{x,k}: x\in\mathcal X,\ 0\le k\le L\}.
\]
By the Kantorovich dual formulation of the Wasserstein robust risk,
\[
R_r(Q,f)
=
\inf_{k\ge 0}
\left\{
kr+E_Q[\phi_{x,k}(z)]
\right\}.
\]
Since $z\mapsto f(x,z)$ is $L$-Lipschitz, the infimum over $k\ge0$ can be
restricted to $0\le k\le L$. Indeed, for any $k\ge L$,
\[
f(x,z')-k\|z-z'\|
\le
f(x,z)+(L-k)\|z-z'\|
\le
f(x,z),
\]
and equality is attained at $z'=z$. Hence $\phi_{x,k}(z)=f(x,z)$ for all
$k\ge L$, so values $k>L$ cannot improve the dual objective.

Thus, for every $x\in\mathcal X$,
\[
\tilde\Delta(x)
\le
\sup_{0\le k\le L}
\left|
(E_{P_S}-E_{\hat P_n})\phi_{x,k}
\right|.
\]
Consequently,
\[
\sup_{x\in\mathcal X}\tilde\Delta(x)
\le
Z_n^{\mathrm{unif}},
\]
where
\[
Z_n^{\mathrm{unif}}
:=
\sup_{x\in\mathcal X,\ 0\le k\le L}
\left|
(E_{P_S}-E_{\hat P_n})\phi_{x,k}
\right|.
\]

Since $0\le \phi_{x,k}\le M$, symmetrization and bounded-difference concentration
give, with probability at least $1-\delta/2$,
\[
Z_n^{\mathrm{unif}}
\le
2\mathcal R_n(\Phi)
+
M\sqrt{\frac{\log(4/\delta)}{2n}}.
\]
By the envelope-class Rademacher complexity bound,
\[
\mathcal R_n(\Phi)
\le
\frac{24}{\sqrt n}\mathcal C(\mathcal A)
+
\frac{24L\,\operatorname{diam}(\mathcal Z)}{\sqrt n}.
\]
Therefore, with probability at least $1-\delta/2$,
\begin{equation}
\label{eq:dro-uniform-delta}
\sup_{x\in\mathcal X}\tilde\Delta(x)
\le
\frac{48}{\sqrt n}\mathcal C(\mathcal A)
+
\frac{48L\,\operatorname{diam}(\mathcal Z)}{\sqrt n}
+
M\sqrt{\frac{\log(4/\delta)}{2n}}.
\end{equation}

We now control the fixed term $\tilde\Delta(x_S)$. Define
\[
\Phi_{x_S}
:=
\{\phi_{x_S,k}:0\le k\le L\}.
\]
For any $k,k'\in[0,L]$ and $z\in\mathcal Z$,
\[
|\phi_{x_S,k}(z)-\phi_{x_S,k'}(z)|
\le
\operatorname{diam}(\mathcal Z)|k-k'|.
\]
Also, since $0\le f\le M$, we have $0\le \phi_{x_S,k}\le M$. Hence
$\Phi_{x_S}$ is a one-dimensional class indexed by $k\in[0,L]$. Dudley's entropy
integral gives
\[
\mathcal R_n(\Phi_{x_S})
\le
\frac{12L\,\operatorname{diam}(\mathcal Z)}{\sqrt n}.
\]
Again by symmetrization and bounded-difference concentration, with probability
at least $1-\delta/2$,
\begin{equation}
\label{eq:dro-fixed-delta}
\tilde\Delta(x_S)
\le
\frac{24L\,\operatorname{diam}(\mathcal Z)}{\sqrt n}
+
M\sqrt{\frac{\log(4/\delta)}{2n}}.
\end{equation}

Combining \eqref{eq:dro-uniform-delta} and \eqref{eq:dro-fixed-delta} by a
union bound, with probability at least $1-\delta$,
\[
\sup_{x\in\mathcal X}\tilde\Delta(x)+\tilde\Delta(x_S)
\le
\frac{48}{\sqrt n}\mathcal C(\mathcal A)
+
\frac{72L\,\operatorname{diam}(\mathcal Z)}{\sqrt n}
+
2M\sqrt{\frac{\log(4/\delta)}{2n}}.
\]
Substituting this bound into \eqref{eq:dro-before-empirical-process} gives
\[
\mathcal R_{P_T}(\hat x_{\mathrm{DRO}})
\le
J_S-J_T
+
L\inf_{P\in\mathcal B(P_S,r)}d_W(P_T,P)
+
\Lambda_r(P_S,x_S)
+
\frac{48}{\sqrt n}\mathcal C(\mathcal A)
+
\frac{72L\,\operatorname{diam}(\mathcal Z)}{\sqrt n}
+
2M\sqrt{\frac{\log(4/\delta)}{2n}}.
\]
This completes the proof.

\subsection{Proof of Theorem~\ref{RS bound}}

\textit{Proof.}
Recall that
\[
\tau_\epsilon
:=
\inf_{x\in\mathcal X}E_{\hat P_n}[f(x,z)]+\epsilon.
\]
Let $(\hat x_{\mathrm{RS}},k_{\tau_\epsilon})$ be an optimal solution of the RS
problem with reference value $\tau_\epsilon$. By feasibility of the RS solution,
for all $P\in\mathcal P(\mathcal Z)$,
\begin{equation}
\label{eq:rs-feasibility}
E_P[f(\hat x_{\mathrm{RS}},z)]
-
\tau_\epsilon
\le
k_{\tau_\epsilon}d_W(P,\hat P_n).
\end{equation}
Moreover, by Lemma~\ref{k leq L}, we have
\[
0\le k_{\tau_\epsilon}\le L.
\]

We first rewrite the feasibility condition in a dual form. Equation
\eqref{eq:rs-feasibility} implies
\[
\sup_{P\in\mathcal P(\mathcal Z)}
\left\{
E_P[f(\hat x_{\mathrm{RS}},z)]
-
k_{\tau_\epsilon}d_W(P,\hat P_n)
\right\}
\le
\tau_\epsilon.
\]
By the penalized Kantorovich dual representation, this is equivalent to
\begin{equation}
\label{eq:rs-dual-empirical}
E_{\hat P_n}
\left[
\sup_{y\in\mathcal Z}
\left\{
f(\hat x_{\mathrm{RS}},y)
-
k_{\tau_\epsilon}\|z-y\|
\right\}
\right]
\le
\tau_\epsilon.
\end{equation}

For $x\in\mathcal X$ and $0\le k\le L$, define
\[
\phi_{x,k}(z)
:=
\sup_{y\in\mathcal Z}
\left\{
f(x,y)-k\|z-y\|
\right\},
\]
and define the one-sided uniform deviation
\[
Z_n^{\mathrm{RS}}
:=
\sup_{x\in\mathcal X,\ 0\le k\le L}
\left\{
E_{P_S}[\phi_{x,k}(z)]
-
E_{\hat P_n}[\phi_{x,k}(z)]
\right\}.
\]
Since $(\hat x_{\mathrm{RS}},k_{\tau_\epsilon})$ belongs to this index set,
\eqref{eq:rs-dual-empirical} gives
\begin{equation}
\label{eq:rs-dual-source}
E_{P_S}[\phi_{\hat x_{\mathrm{RS}},k_{\tau_\epsilon}}(z)]
\le
\tau_\epsilon
+
Z_n^{\mathrm{RS}}.
\end{equation}

Applying the same penalized dual representation with nominal distribution
$P_S$, for any $P\in\mathcal P(\mathcal Z)$,
\[
E_P[f(\hat x_{\mathrm{RS}},z)]
-
k_{\tau_\epsilon}d_W(P,P_S)
\le
E_{P_S}[\phi_{\hat x_{\mathrm{RS}},k_{\tau_\epsilon}}(z)].
\]
Combining this inequality with \eqref{eq:rs-dual-source}, we obtain, for all
$P\in\mathcal P(\mathcal Z)$,
\begin{equation}
\label{eq:rs-general-P}
E_P[f(\hat x_{\mathrm{RS}},z)]
\le
\tau_\epsilon
+
k_{\tau_\epsilon}d_W(P,P_S)
+
Z_n^{\mathrm{RS}}.
\end{equation}
Taking $P=P_T$ gives
\begin{equation}
\label{eq:rs-target-loss}
E_{P_T}[f(\hat x_{\mathrm{RS}},z)]
\le
\tau_\epsilon
+
k_{\tau_\epsilon}d_W(P_T,P_S)
+
Z_n^{\mathrm{RS}}.
\end{equation}

Next, we use the definition of $\tau_\epsilon$. Let
\[
x_S\in\arg\min_{x\in\mathcal X}E_{P_S}[f(x,z)].
\]
Since
\[
\tau_\epsilon
=
\inf_{x\in\mathcal X}E_{\hat P_n}[f(x,z)]+\epsilon,
\]
we have
\begin{equation}
\label{eq:tau-upper}
\tau_\epsilon
\le
E_{\hat P_n}[f(x_S,z)]+\epsilon
=
J_S
+
\left[
E_{\hat P_n}f(x_S,z)
-
E_{P_S}f(x_S,z)
\right]
+
\epsilon.
\end{equation}
Substituting \eqref{eq:tau-upper} into \eqref{eq:rs-target-loss} and subtracting
\[
J_T:=\inf_{x\in\mathcal X}E_{P_T}[f(x,z)]
\]
from both sides yields
\begin{align}
R_{P_T}(\hat x_{\mathrm{RS}})
&=
E_{P_T}[f(\hat x_{\mathrm{RS}},z)]-J_T
\nonumber\\
&\le
J_S-J_T
+
k_{\tau_\epsilon}d_W(P_S,P_T)
+
\epsilon
\nonumber\\
&\quad+
\left[
E_{\hat P_n}f(x_S,z)
-
E_{P_S}f(x_S,z)
\right]
+
Z_n^{\mathrm{RS}}.
\label{eq:rs-before-stat}
\end{align}

It remains to control the two finite-sample terms in
\eqref{eq:rs-before-stat}. Since $0\le f\le M$, we have
$0\le \phi_{x,k}\le M$ for all $x\in\mathcal X$ and $0\le k\le L$.
By symmetrization and bounded-difference concentration, with probability at
least $1-\delta/2$,
\begin{equation}
\label{eq:rs-uniform-dev}
Z_n^{\mathrm{RS}}
\le
2\mathcal R_n(\Phi)
+
M\sqrt{\frac{\log(2/\delta)}{2n}},
\end{equation}
where
\[
\Phi
:=
\{\phi_{x,k}: x\in\mathcal X,\ 0\le k\le L\}.
\]
By the envelope-class Rademacher complexity bound,
\[
\mathcal R_n(\Phi)
\le
\frac{24}{\sqrt n}\mathcal C(\mathcal A)
+
\frac{24L\,\operatorname{diam}(\mathcal Z)}{\sqrt n}.
\]
Therefore, with probability at least $1-\delta/2$,
\begin{equation}
\label{eq:rs-uniform-final}
Z_n^{\mathrm{RS}}
\le
\frac{48}{\sqrt n}\mathcal C(\mathcal A)
+
\frac{48L\,\operatorname{diam}(\mathcal Z)}{\sqrt n}
+
M\sqrt{\frac{\log(2/\delta)}{2n}}.
\end{equation}

For the fixed function $x_S$, Hoeffding's inequality gives, with probability at
least $1-\delta/2$,
\begin{equation}
\label{eq:rs-fixed-fS}
E_{\hat P_n}f(x_S,z)
-
E_{P_S}f(x_S,z)
\le
M\sqrt{\frac{\log(2/\delta)}{2n}}.
\end{equation}
Combining \eqref{eq:rs-uniform-final} and \eqref{eq:rs-fixed-fS} by a union
bound, and substituting them into \eqref{eq:rs-before-stat}, we obtain, with
probability at least $1-\delta$,
\[
\mathcal R_{P_T}(\hat x_{\mathrm{RS}})
\le
J_S-J_T
+
k_{\tau_\epsilon}d_W(P_S,P_T)
+
\epsilon
+
\frac{48}{\sqrt n}\mathcal C(\mathcal A)
+
\frac{48L\,\operatorname{diam}(\mathcal Z)}{\sqrt n}
+
2M\sqrt{\frac{\log(2/\delta)}{2n}}.
\]
This completes the proof.

\subsection{Proof of Proposition \ref{prop:known_shift}}

\textit{Proof.}
Recall that in Scenario I the shift magnitude is known, so the DRO radius is
chosen as
\[
r=d_W(P_S,P_T).
\]
Therefore \(P_T\in\mathcal B(P_S,r)\), and hence
\[
\mathrm{Sen}_{\mathrm{DRO}}(r)
=
L\inf_{P\in\mathcal B(P_S,r)}d_W(P_T,P)
=
0.
\]
On the other hand,
\[
\mathrm{Sen}_{\mathrm{RS}}(\tau_r)
=
k_{\tau_r}d_W(P_S,P_T)
=
k_{\tau_r}r.
\]
This proves the sensitivity comparison.

We now compare the regularization terms. For any
\(Q\in\mathcal P(\mathcal Z)\), \(x\in\mathcal X\), and \(r\ge0\), define
\[
\Delta_r(Q,f)
:=
\sup_{P\in\mathcal B(Q,r)}E_P[f(x,z)]
-
E_Q[f(x,z)].
\]
Then
\[
\mathrm{Reg}_{\mathrm{DRO}}(r)
=
\Delta_r(P_S,x_S).
\]
Moreover, under the calibration
\[
\tau_r
:=
\sup_{P\in\mathcal B(\hat P_n,r)}
E_P[f(\hat x_{\mathrm{ERM}},z)],
\]
and since \(\hat x_{\mathrm{ERM}}\) minimizes the empirical risk,
\[
\mathrm{Reg}_{\mathrm{RS}}(\tau_r)
=
\tau_r-E_{\hat P_n}[f(\hat x_{\mathrm{ERM}},z)]
=
\Delta_r(\hat P_n,\hat x_{\mathrm{ERM}}).
\]
Therefore it suffices to control
\[
\left|
\Delta_r(P_S,x_S)
-
\Delta_r(\hat P_n,\hat x_{\mathrm{ERM}})
\right|.
\]

By the triangle inequality,
\begin{align}
&
\left|
\Delta_r(P_S,x_S)
-
\Delta_r(\hat P_n,\hat x_{\mathrm{ERM}})
\right|
\nonumber\\
&\le
\left|
\Delta_r(P_S,x_S)-\Delta_r(\hat P_n,x_S)
\right|
+
\left|
\Delta_r(\hat P_n,x_S)
-
\Delta_r(\hat P_n,\hat x_{\mathrm{ERM}})
\right|.
\label{eq:prop3-triangle}
\end{align}

We first control the first term in \eqref{eq:prop3-triangle}. Write
\[
R_r(Q,x)
:=
\sup_{P\in\mathcal B(Q,r)}E_P[f(x,z)].
\]
Then
\[
\Delta_r(Q,x)=R_r(Q,x)-E_Q[f(x,z)].
\]
Thus
\begin{align}
&
\left|
\Delta_r(P_S,x_S)-\Delta_r(\hat P_n,x_S)
\right|
\nonumber\\
&\le
\left|
R_r(P_S,x_S)-R_r(\hat P_n,x_S)
\right|
+
\left|
(E_{P_S}-E_{\hat P_n})f(x_S,z)
\right|.
\label{eq:prop3-first-term}
\end{align}
For the robust-risk discrepancy, define
\[
\phi_{x_S,k}(z)
:=
\sup_{y\in\mathcal Z}
\left\{
f(x_S,y)-k\|z-y\|
\right\},
\qquad 0\le k\le L.
\]
By the Kantorovich dual formulation of the Wasserstein robust risk,
\[
\left|
R_r(P_S,x_S)-R_r(\hat P_n,x_S)
\right|
\le
\sup_{0\le k\le L}
\left|
(E_{P_S}-E_{\hat P_n})\phi_{x_S,k}
\right|.
\]
Since \(0\le \phi_{x_S,k}\le M\) and
\[
|\phi_{x_S,k}(z)-\phi_{x_S,k'}(z)|
\le
\operatorname{diam}(\mathcal Z)|k-k'|,
\]
the same one-dimensional entropy argument as in the proof of
Theorem~\ref{DRO bound} gives, with probability at least \(1-\delta/4\),
\[
\left|
R_r(P_S,x_S)-R_r(\hat P_n,x_S)
\right|
\le
\frac{24L\,\operatorname{diam}(\mathcal Z)}{\sqrt n}
+
M\sqrt{\frac{\log(8/\delta)}{2n}}.
\]
Also, by Hoeffding's inequality, with probability at least \(1-\delta/4\),
\[
\left|
(E_{P_S}-E_{\hat P_n})f(x_S,z)
\right|
\le
M\sqrt{\frac{\log(8/\delta)}{2n}}.
\]
Combining the last two displays, with probability at least \(1-\delta/2\),
\begin{equation}
\label{eq:prop3-fixed-Delta}
\left|
\Delta_r(P_S,x_S)-\Delta_r(\hat P_n,x_S)
\right|
\le
\frac{24L\,\operatorname{diam}(\mathcal Z)}{\sqrt n}
+
2M\sqrt{\frac{\log(8/\delta)}{2n}}.
\end{equation}

We next control the second term in \eqref{eq:prop3-triangle}. By the
parameter-Lipschitz condition,
\[
\left|
R_r(\hat P_n,x_S)
-
R_r(\hat P_n,\hat x_{\mathrm{ERM}})
\right|
\le
L'\|x_S-\hat x_{\mathrm{ERM}}\|_{\mathcal X},
\]
and
\[
\left|
E_{\hat P_n}f(x_S,z)
-
E_{\hat P_n}f(\hat x_{\mathrm{ERM}},z)
\right|
\le
L'\|x_S-\hat x_{\mathrm{ERM}}\|_{\mathcal X}.
\]
Therefore,
\begin{equation}
\label{eq:prop3-param}
\left|
\Delta_r(\hat P_n,x_S)
-
\Delta_r(\hat P_n,\hat x_{\mathrm{ERM}})
\right|
\le
2L'\|\hat x_{\mathrm{ERM}}-x_S\|_{\mathcal X}.
\end{equation}

It remains to control \(\|\hat x_{\mathrm{ERM}}-x_S\|_{\mathcal X}\). Define
\[
\mathfrak G_n(\delta)
:=
\frac{24}{\sqrt n}\mathcal C(\mathcal A)
+
M\sqrt{\frac{\log(8/\delta)}{2n}}.
\]
By uniform convergence, with probability at least \(1-\delta/4\),
\[
\sup_{x\in\mathcal X}
\left|
E_{P_S}f(x,z)-E_{\hat P_n}f(x,z)
\right|
\le
\mathfrak G_n(\delta).
\]
On this event, the optimality of \(\hat x_{\mathrm{ERM}}\) gives
\[
E_{P_S}f(\hat x_{\mathrm{ERM}},z)
-
E_{P_S}f(x_S,z)
\le
2\mathfrak G_n(\delta).
\]
By the strong convexity condition in Assumption~\ref{assump:strong convex and lipschitz},
\[
\frac{\alpha}{2}
\|\hat x_{\mathrm{ERM}}-x_S\|_{\mathcal X}^2
\le
E_{P_S}f(\hat x_{\mathrm{ERM}},z)
-
E_{P_S}f(x_S,z).
\]
Hence
\begin{equation}
\label{eq:prop3-erm-stability}
\|\hat x_{\mathrm{ERM}}-x_S\|_{\mathcal X}
\le
2\sqrt{\frac{\mathfrak G_n(\delta)}{\alpha}}.
\end{equation}
Combining \eqref{eq:prop3-param} and \eqref{eq:prop3-erm-stability}, we get
\[
\left|
\Delta_r(\hat P_n,x_S)
-
\Delta_r(\hat P_n,\hat x_{\mathrm{ERM}})
\right|
\le
4L'\sqrt{\frac{\mathfrak G_n(\delta)}{\alpha}}.
\]

Finally, combining this inequality with \eqref{eq:prop3-fixed-Delta} in
\eqref{eq:prop3-triangle}, and applying a union bound over the events above,
we obtain, with probability at least \(1-\delta\),
\[
\left|
\mathrm{Reg}_{\mathrm{DRO}}(r)
-
\mathrm{Reg}_{\mathrm{RS}}(\tau_r)
\right|
\le
\rho_n(\delta),
\]
where
\[
\rho_n(\delta)
:=
4L'\sqrt{\frac{\mathfrak G_n(\delta)}{\alpha}}
+
\frac{24L\,\operatorname{diam}(\mathcal Z)}{\sqrt n}
+
2M\sqrt{\frac{\log(8/\delta)}{2n}}.
\]
Equivalently,
\[
\mathrm{Reg}_{\mathrm{RS}}(\tau_r)-\rho_n(\delta)
\le
\mathrm{Reg}_{\mathrm{DRO}}(r)
\le
\mathrm{Reg}_{\mathrm{RS}}(\tau_r)+\rho_n(\delta).
\]
This proves the regularization comparison and completes the proof.

\subsection{Proof of Proposition~\ref{prop:reg_comparison}}

\textit{Proof.}
Recall that in Scenario II the calibrated DRO radius and RS threshold are
\[
r_t=d_W(P_S,P_t),
\qquad
\tau_t=E_{P_t}[f(\hat x_{\mathrm{ERM}},z)].
\]
Define
\[
\Gamma_t
:=
\sup_{P\in\mathcal B(P_S,r_t)}
E_P[f(x_S,z)]
-
E_{P_t}[f(x_S,z)].
\]
Since \(P_t\in\mathcal B(P_S,r_t)\), we have \(\Gamma_t\ge 0\).

By definition,
\[
\mathrm{Reg}_{\mathrm{DRO}}(r_t)
=
\sup_{P\in\mathcal B(P_S,r_t)}
E_P[f(x_S,z)]
-
E_{P_S}[f(x_S,z)].
\]
Therefore,
\begin{equation}
\label{eq:prop4-dro-minus-gap}
\mathrm{Reg}_{\mathrm{DRO}}(r_t)-\Gamma_t
=
E_{P_t}[f(x_S,z)]
-
E_{P_S}[f(x_S,z)].
\end{equation}
On the other hand, under the calibration
\(\tau_t=E_{P_t}[f(\hat x_{\mathrm{ERM}},z)]\), the RS regularization term is
\begin{equation}
\label{eq:prop4-rs-reg}
\mathrm{Reg}_{\mathrm{RS}}(\tau_t)
=
\tau_t-E_{\hat P_n}[f(\hat x_{\mathrm{ERM}},z)]
=
E_{P_t}[f(\hat x_{\mathrm{ERM}},z)]
-
E_{\hat P_n}[f(\hat x_{\mathrm{ERM}},z)].
\end{equation}

Combining \eqref{eq:prop4-dro-minus-gap} and \eqref{eq:prop4-rs-reg}, we have
\begin{align}
&
\mathrm{Reg}_{\mathrm{RS}}(\tau_t)
-
\left(
\mathrm{Reg}_{\mathrm{DRO}}(r_t)-\Gamma_t
\right)
\nonumber\\
&=
E_{P_t}[f(\hat x_{\mathrm{ERM}},z)-f(x_S,z)]
+
E_{P_S}[f(x_S,z)]
-
E_{\hat P_n}[f(\hat x_{\mathrm{ERM}},z)].
\label{eq:prop4-main-diff}
\end{align}

We now upper bound the two terms on the right-hand side. By the
parameter-Lipschitz condition in Assumption~\ref{assump:strong convex and lipschitz},
\begin{equation}
\label{eq:prop4-pt-term}
E_{P_t}[f(\hat x_{\mathrm{ERM}},z)-f(x_S,z)]
\le
L'\|\hat x_{\mathrm{ERM}}-x_S\|_{\mathcal X}.
\end{equation}
Next define
\[
U_n
:=
\sup_{x\in\mathcal X}
\left|
E_{P_S}[f(x,z)]-E_{\hat P_n}[f(x,z)]
\right|.
\]
Since \(x_S\) minimizes the population risk,
\[
E_{P_S}[f(x_S,z)]
-
E_{P_S}[f(\hat x_{\mathrm{ERM}},z)]
\le 0.
\]
Thus
\begin{equation}
\label{eq:prop4-emp-term}
E_{P_S}[f(x_S,z)]
-
E_{\hat P_n}[f(\hat x_{\mathrm{ERM}},z)]
\le
E_{P_S}[f(\hat x_{\mathrm{ERM}},z)]
-
E_{\hat P_n}[f(\hat x_{\mathrm{ERM}},z)]
\le
U_n.
\end{equation}

By symmetrization, bounded-difference concentration, and Dudley's entropy
integral bound, with probability at least \(1-\delta\),
\begin{equation}
\label{eq:prop4-uniform-conv}
U_n
\le
\bar{\mathfrak G}_n(\delta)
:=
\frac{24}{\sqrt n}\mathcal C(\mathcal A)
+
M\sqrt{\frac{\log(2/\delta)}{2n}}.
\end{equation}
On the same event, the empirical optimality of \(\hat x_{\mathrm{ERM}}\) implies
\[
E_{P_S}[f(\hat x_{\mathrm{ERM}},z)]
-
E_{P_S}[f(x_S,z)]
\le
2\bar{\mathfrak G}_n(\delta).
\]
By the strong convexity condition in Assumption~\ref{assump:strong convex and lipschitz},
\[
\frac{\alpha}{2}
\|\hat x_{\mathrm{ERM}}-x_S\|_{\mathcal X}^2
\le
E_{P_S}[f(\hat x_{\mathrm{ERM}},z)]
-
E_{P_S}[f(x_S,z)].
\]
Therefore,
\begin{equation}
\label{eq:prop4-stability}
\|\hat x_{\mathrm{ERM}}-x_S\|_{\mathcal X}
\le
2\sqrt{\frac{\bar{\mathfrak G}_n(\delta)}{\alpha}}.
\end{equation}

Combining \eqref{eq:prop4-main-diff}--\eqref{eq:prop4-stability}, with
probability at least \(1-\delta\),
\[
\mathrm{Reg}_{\mathrm{RS}}(\tau_t)
-
\left(
\mathrm{Reg}_{\mathrm{DRO}}(r_t)-\Gamma_t
\right)
\le
2L'\sqrt{\frac{\bar{\mathfrak G}_n(\delta)}{\alpha}}
+
\bar{\mathfrak G}_n(\delta).
\]
Define
\[
\bar\rho_n(\delta)
:=
2L'\sqrt{\frac{\bar{\mathfrak G}_n(\delta)}{\alpha}}
+
\bar{\mathfrak G}_n(\delta).
\]
Then
\[
\mathrm{Reg}_{\mathrm{RS}}(\tau_t)+\Gamma_t
\le
\mathrm{Reg}_{\mathrm{DRO}}(r_t)+\bar\rho_n(\delta),
\]
which proves the claim.

\subsection{Proof of Proposition~\ref{prop:underspecified}}

\textit{Proof.}
Recall that
\[
r_t=d_W(P_S,P_t),
\]
and
\[
\mathrm{Sen}_{\mathrm{DRO}}(r_t)
=
L\inf_{P\in\mathcal B(P_S,r_t)}d_W(P_T,P),
\qquad
\mathrm{Sen}_{\mathrm{RS}}(\tau_t)
=
k_{\tau_t}d_W(P_S,P_T).
\]

We first prove the under-specified case. For any
\(P\in\mathcal B(P_S,r_t)\), the triangle inequality gives
\[
d_W(P_T,P)
\ge
d_W(P_T,P_S)-d_W(P,P_S)
\ge
d_W(P_T,P_S)-r_t.
\]
Therefore,
\[
\mathrm{Sen}_{\mathrm{DRO}}(r_t)
\ge
L\left(d_W(P_T,P_S)-r_t\right).
\]
Since \(r_t=d_W(P_S,P_t)\), the condition
\[
\frac{d_W(P_t,P_S)}{d_W(P_T,P_S)}
\le
1-\frac{k_{\tau_t}}{L}
\]
is equivalent to
\[
L\left(d_W(P_T,P_S)-r_t\right)
\ge
k_{\tau_t}d_W(P_T,P_S).
\]
Hence
\[
\mathrm{Sen}_{\mathrm{DRO}}(r_t)
\ge
k_{\tau_t}d_W(P_T,P_S)
=
\mathrm{Sen}_{\mathrm{RS}}(\tau_t),
\]
which proves
\[
\mathrm{Sen}_{\mathrm{RS}}(\tau_t)
\le
\mathrm{Sen}_{\mathrm{DRO}}(r_t).
\]

We next prove the well-specified or over-specified case. Suppose \(t\ge t_T\).
Since \(P_T=P_{t_T}\), Assumption~\ref{monotonicity} gives
\[
d_W(P_S,P_T)
=
d_W(P_S,P_{t_T})
\le
d_W(P_S,P_t)
=
r_t.
\]
Therefore \(P_T\in\mathcal B(P_S,r_t)\). Hence
\[
\mathrm{Sen}_{\mathrm{DRO}}(r_t)
=
L\inf_{P\in\mathcal B(P_S,r_t)}d_W(P_T,P)
=
0.
\]
Since \(\mathrm{Sen}_{\mathrm{RS}}(\tau_t)\ge0\), we have
\[
\mathrm{Sen}_{\mathrm{DRO}}(r_t)
\le
\mathrm{Sen}_{\mathrm{RS}}(\tau_t).
\]
This completes the proof.

\subsection{Proof of Proposition~\ref{prop:well_overspefified_case}}

\textit{Proof.}
We first compare the regularization terms. Recall that
\[
r_t=d_W(P_S,P_t),
\qquad
\tau_t=E_{P_t}[f(\hat x_{\mathrm{ERM}},z)].
\]
In the adversarial scenario, \(P_t\) is a worst-case distribution in
\(\mathcal B(P_S,r_t)\) for the source optimizer \(x_S\). Hence
\[
\sup_{P\in\mathcal B(P_S,r_t)}E_P[f(x_S,z)]
=
E_{P_t}[f(x_S,z)].
\]
Therefore,
\[
\mathrm{Reg}_{\mathrm{DRO}}(r_t)
=
E_{P_t}[f(x_S,z)]
-
E_{P_S}[f(x_S,z)].
\]
On the other hand,
\[
\mathrm{Reg}_{\mathrm{RS}}(\tau_t)
=
\tau_t-E_{\hat P_n}[f(\hat x_{\mathrm{ERM}},z)]
=
E_{P_t}[f(\hat x_{\mathrm{ERM}},z)]
-
E_{\hat P_n}[f(\hat x_{\mathrm{ERM}},z)].
\]
Thus,
\begin{align}
&
\mathrm{Reg}_{\mathrm{DRO}}(r_t)
-
\mathrm{Reg}_{\mathrm{RS}}(\tau_t)
\nonumber\\
&=
E_{P_t}[f(x_S,z)-f(\hat x_{\mathrm{ERM}},z)]
+
E_{\hat P_n}[f(\hat x_{\mathrm{ERM}},z)]
-
E_{P_S}[f(x_S,z)].
\label{eq:prop6-reg-diff}
\end{align}
By the parameter-Lipschitz condition,
\[
E_{P_t}[f(x_S,z)-f(\hat x_{\mathrm{ERM}},z)]
\le
L'\|\hat x_{\mathrm{ERM}}-x_S\|_{\mathcal X}.
\]
Moreover, by the empirical optimality of \(\hat x_{\mathrm{ERM}}\),
\[
E_{\hat P_n}[f(\hat x_{\mathrm{ERM}},z)]
\le
E_{\hat P_n}[f(x_S,z)].
\]
Hence, with
\[
U_n
:=
\sup_{x\in\mathcal X}
\left|
E_{P_S}[f(x,z)]-E_{\hat P_n}[f(x,z)]
\right|,
\]
we have
\[
E_{\hat P_n}[f(\hat x_{\mathrm{ERM}},z)]
-
E_{P_S}[f(x_S,z)]
\le
E_{\hat P_n}[f(x_S,z)]
-
E_{P_S}[f(x_S,z)]
\le
U_n.
\]
Using the uniform convergence and stability bounds from
\eqref{eq:prop4-uniform-conv} and \eqref{eq:prop4-stability}, with probability
at least \(1-\delta\),
\[
U_n\le \bar{\mathfrak G}_n(\delta),
\qquad
\|\hat x_{\mathrm{ERM}}-x_S\|_{\mathcal X}
\le
2\sqrt{\frac{\bar{\mathfrak G}_n(\delta)}{\alpha}}.
\]
Substituting these two bounds into \eqref{eq:prop6-reg-diff}, we obtain
\[
\mathrm{Reg}_{\mathrm{DRO}}(r_t)
-
\mathrm{Reg}_{\mathrm{RS}}(\tau_t)
\le
2L'\sqrt{\frac{\bar{\mathfrak G}_n(\delta)}{\alpha}}
+
\bar{\mathfrak G}_n(\delta)
=
\bar\rho_n(\delta).
\]
Therefore,
\[
\mathrm{Reg}_{\mathrm{DRO}}(r_t)
\le
\mathrm{Reg}_{\mathrm{RS}}(\tau_t)+\bar\rho_n(\delta).
\]

It remains to compare the sensitivity terms. Recall that
\[
\mathrm{Sen}_{\mathrm{DRO}}(r_t)
=
L\inf_{P\in\mathcal B(P_S,r_t)}d_W(P_T,P),
\qquad
\mathrm{Sen}_{\mathrm{RS}}(\tau_t)
=
k_{\tau_t}d_W(P_S,P_T).
\]

For case \emph{(i)}, suppose \(t<t_T\) and
\[
\inf_{0\le s\le t}d_W(P_T,P_s)
\le
\frac{k_{\tau_t}}{L}d_W(P_S,P_T).
\]
For any \(0\le s\le t\), Assumption~\ref{monotonicity} gives
\[
d_W(P_S,P_s)\le d_W(P_S,P_t)=r_t,
\]
so \(P_s\in\mathcal B(P_S,r_t)\). Therefore,
\[
\inf_{P\in\mathcal B(P_S,r_t)}d_W(P_T,P)
\le
\inf_{0\le s\le t}d_W(P_T,P_s).
\]
It follows that
\[
\mathrm{Sen}_{\mathrm{DRO}}(r_t)
\le
L\inf_{0\le s\le t}d_W(P_T,P_s)
\le
k_{\tau_t}d_W(P_S,P_T)
=
\mathrm{Sen}_{\mathrm{RS}}(\tau_t).
\]

For case \emph{(ii)}, suppose \(t\ge t_T\). Since \(P_T=P_{t_T}\), Assumption~\ref{monotonicity}
implies
\[
d_W(P_S,P_T)=d_W(P_S,P_{t_T})
\le
d_W(P_S,P_t)
=
r_t.
\]
Thus \(P_T\in\mathcal B(P_S,r_t)\), and hence
\[
\mathrm{Sen}_{\mathrm{DRO}}(r_t)
=
L\inf_{P\in\mathcal B(P_S,r_t)}d_W(P_T,P)
=
0
\le
\mathrm{Sen}_{\mathrm{RS}}(\tau_t).
\]
This proves the sensitivity comparison in both cases.

Combining the regularization and sensitivity comparisons completes the proof.

\end{APPENDICES}

\end{document}